\definecolor{my_purple}{HTML}{9903F0}
\definecolor{my_green}{HTML}{156C09}
\definecolor{my_orange}{HTML}{FF3300}
\definecolor{light-gray}{gray}{0.8}
\definecolor{light-green}{RGB}{175, 227, 183}
\definecolor{light-yellow}{RGB}{227, 226, 175}
\definecolor{light-blue}{RGB}{175, 227, 221}
\DeclareRobustCommand{\hlgrey}[1]{{\sethlcolor{light-gray}\hl{#1}}}
\DeclareRobustCommand{\hlgreen}[1]{{\sethlcolor{light-green}\hl{#1}}}
\DeclareRobustCommand{\hlyellow}[1]{{\sethlcolor{light-yellow}\hl{#1}}}
\DeclareRobustCommand{\hlblue}[1]{{\sethlcolor{light-blue}\hl{#1}}}
\DeclareRobustCommand{\onedot}{%
  \futurelet\@let@token\@onedot
}
\def\@onedot{%
  \ifx\@let@token.%
  \else.%
  \null%
  \fi%
  \xspace%
}
\newcommand{\eg}{\emph{e.g}\onedot}   
\newcommand{\ie}{\emph{i.e}\onedot}
\newcommand{\etal}{\emph{et al}\onedot}
\definecolor{sotaColor}{RGB}{153, 0, 102} 
\newcommand{\sota}[1]{\textcolor{sotaColor}{#1}}
\newcommand{\bluecol}[1]{\textcolor{blue}{#1}}
\newcommand{\greencol}[1]{\textcolor{my_green}{#1}}
\newcommand{\Gr}[2]{\mathrm{Gr}(#1, #2)}
\def\1{\bm{1}}
\def\va{{\bm{a}}}
\def\vb{{\bm{b}}}
\def\vh{{\bm{h}}}
\def\vo{{\bm{o}}}
\def\vx{{\bm{x}}}
\def\mA{{\mathbf{A}}}
\def\mB{{\mathbf{B}}}
\def\mC{{\mathbf{C}}}
\def\mG{{\mathbf{G}}}
\def\mI{{\mathbf{I}}}
\def\mO{{\mathbf{O}}}
\def\mP{{\mathbf{P}}}
\def\mQ{{\mathbf{Q}}}
\def\mU{{\mathbf{U}}}
\def\mV{{\mathbf{V}}}
\def\mW{{\mathbf{W}}}
\def\mX{{\mathbf{X}}}
\def\mZ{{\mathbf{Z}}}
\DeclareMathAlphabet{\mathsfit}{\encodingdefault}{\sfdefault}{m}{sl}
\SetMathAlphabet{\mathsfit}{bold}{\encodingdefault}{\sfdefault}{bx}{n}
\newcommand{\E}{\mathbb{E}}
\newcommand{\R}{\mathbb{R}}
\def\vomega{{\bm{\omega}}}
\DeclareMathOperator{\Tr}{Tr}
\newcommand{\Loss}[1]{\text{L}_{\texttt{#1}}}
\newcommand{\GL}[1]{\mathrm{GL}(#1)}
\newcommand\defeq{\mathrel{\stackrel{\makebox[0pt]{\mbox{\normalfont\tiny def}}}{=}}}
\newcommand{\std}[1]{\textsubscript{\textcolor{gray}{\tiny$\pm$#1}}}
\title{Exemplar-Free Continual Learning for State Space Models}
\author{
\makebox[\textwidth][c]{%
\parbox{0.9\textwidth}{\centering
{\bfseries
Isaac Ning Lee$^{1,2}$ \quad
Leila Mahmoodi$^{1}$ \quad
Trung Le$^{1}$ \quad
Mehrtash Harandi$^{1}$}
\vspace{0.4em}
{\itshape
$^{1}$Monash University \qquad $^{2}$National University of Singapore}
}}
}
\begin{document}
\maketitle

\begin{abstract}
State-Space Models (SSMs) excel at capturing long-range dependencies with structured recurrence, making them well-suited for sequence modeling. However, their evolving internal states pose unique challenges in Continual Learning (CL). Without access to the full distribution of previous tasks, updates to the state-space dynamics become unconstrained, leading to catastrophic forgetting. To address this, we propose \textbf{Inf-SSM}, a geometry-aware regularization framework for CL in SSMs. It constrains state evolution via the infinite-dimensional Grassmannian of SSM observability subspaces, without requiring any exemplars from past tasks. Unlike classical CL methods that restrict weight updates, Inf-SSM directly regularizes the infinite-horizon state evolution encoded by the extended observability subspace of the SSM. We show that enforcing this regularization requires solving a matrix equation known as the Sylvester equation, which typically incurs $\mathcal{O}(n^3)$ complexity. Thus, we develop an $\mathcal{O}(n^2)$ solution by exploiting the structure and properties of SSMs. This leads to an efficient regularization mechanism that can be seamlessly integrated into existing CL methods. Comprehensive experiments on challenging benchmarks of ImageNet-R, CIFAR-100, and Caltech-256 demonstrate a significant reduction in forgetting while improving accuracy across sequential tasks.
\end{abstract}   
\newpage
\section{Introduction}
\label{sec:intro}
In this paper, we propose \textbf{Inf-SSM}, a novel regularization method to equip State-Space Models (SSMs)~\cite{gu2022efficientlyS4} with Continual Learning (CL) capabilities, enabling them to integrate new information while preserving previously learned knowledge without any past exemplars.

State-space models (SSMs)~\cite{gu2022efficientlyS4} have emerged as structured sequence models that offer a scalable and efficient alternative to transformers. Recent architectures such as S4, S6, and Mamba-2~\cite{gu2022efficientlyS4, gu2023mamba, dao2024transformers} effectively capture long-range dependencies with linear computational complexity, achieving strong performance across NLP, Vision, Generative Models, Acoustics, and Robotics~\cite{gu2023mamba,zhu2024VIM,phung2024dimsum, zhang2024motion,liang2024self,liu2024robomamba}.
In NLP, Mamba~\cite{gu2023mamba} has positioned SSMs as competitive alternatives to transformers. In vision, \underline{\textbf{Vi}}sion \underline{\textbf{M}}amba (Vim)~\cite{zhu2024VIM} and its variants~\cite{liu2025vmamba, hatamizadeh2024mambavision} have demonstrated strong performance in spatial-temporal modeling while offering faster inference than attention-based architectures.

Beyond their performance, SSMs provide intrinsic advantages such as structured recurrence and linear memory complexity, making them well-suited for sequence modeling. However, \textbf{a major gap remains}: adapting SSMs to continual learning while mitigating Catastrophic Forgetting (CF)~\cite{mccloskey1989catastrophic, mcclelland1995there} remains largely \textbf{unexplored}~\cite{2024WangSurvey}. 
CF occurs when a model loses previously learned knowledge as it is trained on new tasks. Despite their promising capabilities, the structured recurrence of SSMs makes continual and lifelong adaptation non-trivial, demanding specialized techniques to avoid CF~\cite{gu2023mamba, zhu2024VIM}.

While well-established CL algorithms designed for MLPs, CNNs, and Transformers can be directly adapted to SSMs and yield competitive baselines, such adaptations typically treat SSM parameters as generic weights and overlook the underlying state-space geometry and temporal dynamics. As a result, they fail to fully exploit the structural advantages of SSMs in continual learning. This motivates the development of CL methods that are derived from the true state-space representation and regularize the model in a way that is faithful to its geometry.


\paragraph{Our Contribution.}
A key property of SSMs, as we will show shortly, is their invariance in system representations~\cite{Afsari2013LDSAlign, ravichandran2012categorizing}. This allows us to describe an SSM model using its extended observability subspace~\cite{de2002subspace, ravichandran2012categorizing}. The extended observability subspace provides a structured representation of an SSM, capturing its complete behavior through an infinite unrolling of the system’s response.

To work with extended observability subspaces, we adopt the geometry of the Grassmann manifold~\cite{huang2016sparse}, which is the natural choice for analyzing subspaces. Such modeling captures the system representations' invariance by identifying all orthonormal bases spanning the same subspace as a single point, and endows the space with a smooth manifold structure~\cite{de2002subspace, turaga2011statistical, Afsari2013LDSAlign}. However, since the extended observability subspace belongs to an infinite-dimensional Grassmannian, direct distance computation is far from trivial. We address this challenge by demonstrating that, for SSMs, computing distances on the infinite Grassmannian is tractable, significantly reducing the computational complexity of our algorithm. Solving the computation challenges allows us to leverage the extended observability subspace to retain the model's past knowledge effectively \textbf{without the need for storing past examples.} This enables Inf-SSM to be complementary to existing CL techniques, regardless of the availability of past task data or stored feature embeddings.

As a preview, Inf-SSM substantially boosts the performance of existing CL methods when integrated with them and consistently outperforms memory-less baselines in both accuracy and efficiency, as illustrated in \cref{fig:teaser}.
%

\noindent In summary:
\begin{itemize}
    \item We proposed Inf-SSM, a simple state regularization loss for CL in SSMs based on the extended observability subspace. Inf-SSM is an exemplar-free method and does not rely on stored information from prior tasks to function.
    \item We demonstrate that Inf-SSM is computationally efficient by exploiting structural properties in SSMs' states to reduce the computational complexity 
    from $\boldsymbol{\mathcal{O}(n^3)}$ to $\boldsymbol{\mathcal{O}(n^2)}$ and reduce the FLOPS count up to $\boldsymbol{100\times}$. The simple and exemplar-free design of Inf-SSM makes it plug-and-play to improve existing CL algorithms, regardless of whether they are replay or replay-free methods.
    \item We empirically validate Inf-SSM across diverse CL paradigms, where it reduces $\mathrm{FM}$ by \textbf{9.36\%} and increases $\mathrm{AA}$ by \textbf{8.31\%} on average across the 5 and 10 task settings on three datasets when integrated with existing CL baselines.
\end{itemize}

\section{Preliminary}
\label{sec:preliminary}
In this section, we will first introduce the notation used throughout the paper. Then, we will formalize the Class-incremental Learning (CIL) and Exemplar-free Class-incremental Learning (EFCIL)
problem in \textsection\ref{sec:problem_statement}. Finally, we will review the theory of Grassmannian and SSM in \textsection\ref{sec:grassmannian} and \textsection\ref{sec:ssm}, which underpin our proposed method.

\paragraph{Notations.}
In this paper, matrices are denoted as bold capital letters $\mX$ and column vectors as bold lower-case letters $\vx$. The $n \times n$ identity matrix is shown by $\mI_n$. The diagonal elements of $\mX \in \R^{n \times n}$ is shown by $\mX_{diag}$. 
The \emph{Frobenius norm} of \(\mathbf{A} \in \R^{m \times n}\) is given by $\| \mA \|_{\text{F}} = \sqrt{\sum_{i=1}^m\sum_{j=1}^n\mA[i,j]^2}$ and the \emph{Hadamard product} of two matrices $A,B\in\mathbb{R}^{m\times n}$ is denoted $A\odot B$ and defined entrywise by $(A\odot B)_{ij}=A_{ij}B_{ij}$. 
We use $\vx[\cdot]$ to denote discrete-time vectors and $\vx(\cdot)$ for continuous-time vectors. We write $\mathrm{SN}(x) = 2/(1+\exp{(-x)}) - 1$ as the Soft-normalization function~\cite{huang2017efficient}. 
The general linear group $\GL{n}$ is the set of all $n \times n$ invertible matrices and is formally defined as: $\GL{n} = \{ \mP \in \R^{n \times n} \mid \det(\mP) \neq 0 \}$. See \textsection\ref{app:notation} for additional notation. 

\subsection{Problem Statement}
\label{sec:problem_statement}

Let us define a non-stationary task sequence $\{\mathcal{T}_T\}_{t=1}^{N}$, where $T$ denotes the task identifier, and $N$ is the total number of tasks. The data distribution of task~$T$ is denoted by $\mathcal{D}_{T}=\{\mathcal{X}_{T}, \mathcal{Y}_{T}\}$; where $\mathcal{X}_{T}$ and $\mathcal{Y}_{T}$ are input space and class labels, specific to task~$T$. The number of classes in task~$T$ is shown by $\mathcal{C}_T$, and the label sets of every two tasks are disjoint, meaning $\mathcal{Y}_i \cap \mathcal{Y}_j = \emptyset$ for $i\neq j$.

In the CIL protocol, the task identifier $T$ is not available at inference time. The EFCIL protocol further tightens this setting: during training on task $T$, the model only has access to the current data distribution $\mathcal{D}_T$, and no samples or feature embeddings from previous distributions $\{\mathcal{D}_i\}_{i=1}^{T-1}$ are allowed. This exemplar-free constraint makes EFCIL more challenging but also more flexible, as it removes the need to store data from past tasks.


\subsection{Grassmann Manifold}
\label{sec:grassmannian}

The \emph{Grassmann manifold} $\Gr{n}{d}$ is the set of all $n$-dimensional linear subspaces of $\R^d$. Formally, it is defined as
\begin{align}
    \Gr{n}{d} = \bigl\{\mX \in \mathbb{R}^{d \times n} \mid \mX^\top\mX = \mI_n \bigr\}/O(n)\;,  
    \label{eqn:grassmann_definition}
\end{align}
where $O(n)$ denotes the orthogonal group of $n \times n$ matrices, accounting for the fact that different orthonormal bases can represent the same subspace. The \textbf{infinite Grassmannian} is defined as
\begin{align}
    \Gr{n}{\infty} = \lim_{d \to \infty} \Gr{n}{d}, 
\end{align}
which encapsulates all possible $n$-dimensional linear subspaces in an infinite-dimensional Hilbert space~\cite{ye2016schubert}.



\subsection{State-Space Models}
\label{sec:ssm}



Recent structured SSM architectures~\cite{gu2022efficientlyS4,gu2022parameterizationS4D,gu2023mamba,zhu2024VIM,dao2024transformers} are built upon the discretized state-space model\footnote{Details on discretization are provided in \textsection\ref{app:discrete_SSM}.}:
\begin{equation}
\begin{aligned}
    \vh[t] &= \mA \vh[t-1] + \mB x[t], \\
    y[t] &= \mC \vh[t],
\end{aligned}
\label{Eq:SSM_disc}
\end{equation}
where $\vh[t] \in \R^n$ is the hidden state, and $x[t], y[t] \in \R$ denote the input and output, respectively. The model is parameterized by the state transition matrix $\mA \in \R^{n \times n}$, input matrix $\mB \in \R^{n \times 1}$, and output matrix $\mC \in \R^{1 \times n}$. Classical formulations assume $(\mA,\mB,\mC)$ to be linear time-invariant (LTI), while more recent architectures such as Mamba~\cite{gu2023mamba}, Mamba-2~\cite{dao2024transformers}, and Vim~\cite{zhu2024VIM} introduce input- or position-dependent dynamics, effectively yielding time-varying state transitions. Such a dynamical nature allows a more adaptive model, but at the same time, poses a significant problem in regularizing the model's state.

Moreover, earlier SSMs, such as S4~\cite{gu2022efficientlyS4} use a full $\mA \in \R^{n \times n}$, whereas many recent models~\cite{gupta2022diagonal,gu2022parameterizationS4D,gu2023mamba,zhu2024VIM,liu2025vmamba} exploit diagonal parameterizations of $\mA$ to improve efficiency while preserving expressivity. Further details on SSMs, Mamba, Mamba-2, and Vim are provided in \textsection\ref{app:inf-ssm}.

\begin{figure}[t]
    \centering
    \begin{minipage}[t]{0.49\linewidth}
        \centering
        \includegraphics[width=\linewidth]{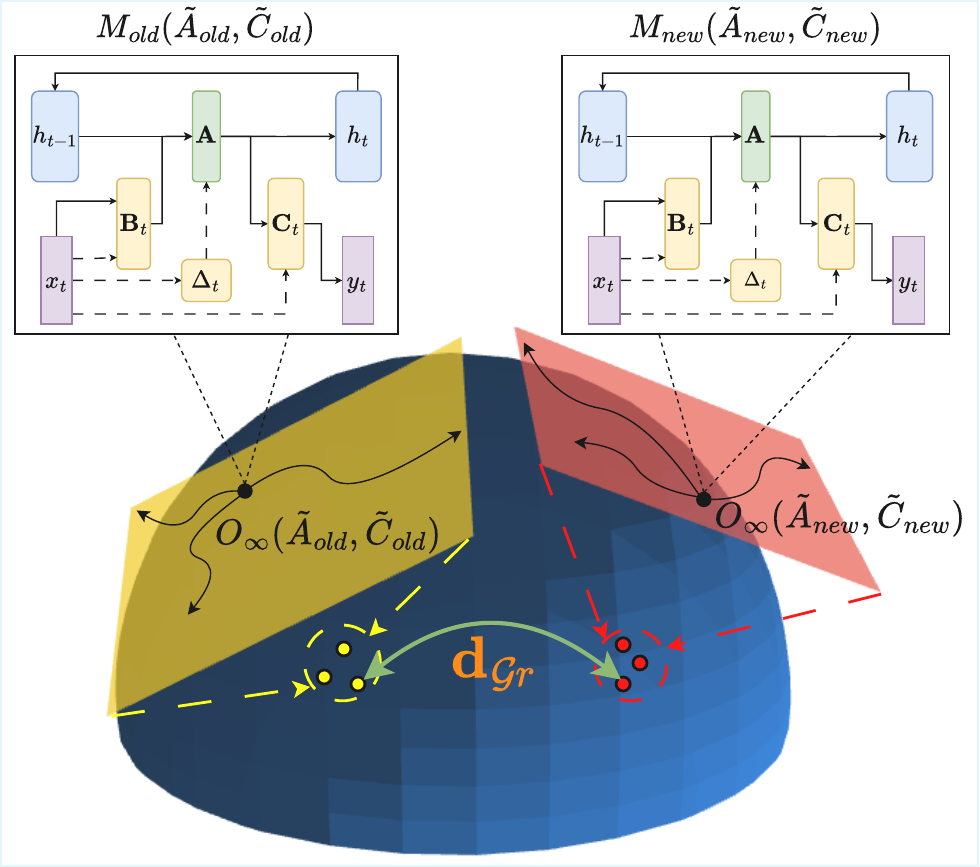}
        \caption{SSM at each sequence position $\tau$ is characterized by the infinite-horizon observability subspace $O_{\infty}$ defined by the tuple $(\tilde{\mA}, \tilde{\mC})$, and visualized as a trajectory to an infinite horizon. The colored plane represents the complete set of $O_{\infty}$. Each trajectory is mapped to a point on the Grassmannian, and the pairwise distance $d_{\mathcal{G}r}$ is illustrated as the geodesic on the sphere representing the Grassmann manifold.}
        \label{fig:inf-grass}
    \end{minipage}
    \hfill
    \begin{minipage}[t]{0.49\linewidth}
        \centering
        \includegraphics[width=\linewidth]{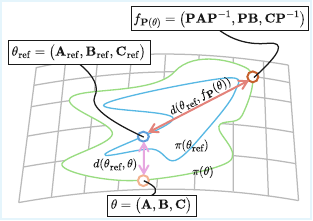}
        \caption{\greencol{Green}: Orbit $\pi(\theta)$ of the target SSM $\theta=(\mA,\mB,\mC)$; \bluecol{Blue}: Orbit $\pi(\theta_{\mathrm{ref}})$ of the reference. A state reparameterization $x\mapsto \mP x$ moves $\theta$ to $f_{\mP}(\theta)=(\mP\mA\mP^{-1}, \mP\mB, \mC\mP^{-1})$, preserving SSM's behavior but changing the Frobenius distance to the reference. Thus, the Frobenius norm is not invariant to P-equivalence.}
        \label{fig:p-equiv}
    \end{minipage}
\end{figure}

\newpage

\section{Proposed Method}
\label{sec:proposed_method}

Our aim is to endow SSMs and their variants with the ability to preserve their prior knowledge while gaining new. Consider a network trained on tasks $1{:}T{-}1$ with parameters $\vomega_{T-1}$. Continual learning on the incoming dataset \(\mathcal{D}_T\) typically follows two strategies: \textbf{(1)} Regularize the parameters for the incoming new data $\mathcal{D}_T$, making sure that the model does not significantly diverge from its initial parameters $\vomega_{T-1}$; \textbf{(2)} Regularize the output behavior of the model to ensure the output space of the new model encapsulates the previous tasks' knowledge. 

Translating such an idea to an SSM implies regularizing $\mA, \mB$ and $\mC$ or directly constraining the output $y_t$ of the model. We discuss both possibilities in more detail below.




\paragraph{Regularizing the parameters.} 
One can regularize the parameters of the SSM, \ie $(\mA,\mB,\mC)$, leading to 
\begin{align}
    \text{L}\big( \mA_T, \mB_T,\mC_T \big) &= \big\| \mA_T - \mA_{T-1} \big\|_\text{F}^2\ 
    + \big\| \mB_T - \mB_{T-1} \big\|_\text{F}^2\ \notag \notag \\&+  \big\| \mC_T - \mC_{T-1} \big\|_\text{F}^2\;.
    \label{eqn:reg_ABC}
\end{align}

The problem here is far more significant. The normal regularization of parameters, while implicitly addressing the finite horizon problem, is oblivious to the geometry of SSMs and totally ignores the P-equivalence. 

\begin{definition}[P-equivalence]
Let an SSM be represented by the tuple of system parameters $(\mA, \mB, \mC)$. For an invertible matrix $\mP \in \GL{n}$, two SSMs with system parameters $(\mA, \mB, \mC) $ and $(\mP \mA \mP^{-1}, \mP \mB, \mC \mP^{-1})$ are said to be \textbf{P-equivalent} since they represent the same system behavior. That is, the transformation: 
$(\mA' = \mP \mA \mP^{-1}, \mB' = \mP \mB, \mC' = \mC \mP^{-1})$ preserves the input-output mapping (\ie, the two systems are indistinguishable in terms of their external behavior).  
\end{definition}

This has a significant implication as one can move along the orbit defined by the P-equivalence without affecting the system behavior. In other words, there are infinitely many parameter realizations for an SSM that lead to exactly the same sample path $y[t]$. As such and to regularize an SSM, the loss needs to be invariant to $(\mA, \mB, \mC) \to (\mP\mA\mP^{-1}, \mP\mB, \mC\mP^{-1})$ for any $\mP \in \GL{n}$, as illustrated in \cref{fig:p-equiv}. We will show that our algorithm leads to a loss that is faithful to P-equivalence.

\paragraph{Regularizing the output.} 
Alternatively, one can also regularize the output behavior of the model. This principle is the basis of the replay-based methods, which maintain a memory $\mathcal{M}$ containing representative samples from past experience. The objective typically enforces consistency between the current model and its previous version over these stored samples:
\begin{align}
    \text{L}\big( \mA_T, \mB_T, \mC_T \big) 
    = \underset{\vx \sim \mathcal{M}}{\mathbb{E}} \sum_{t=1}^{\tau} \big\| y_T[t] - y_{T-1}[t] \big\|^2 \;,
    \label{eqn:finite_horizon_issue}
\end{align}
with $\tau$ denoting the time horizon. Ideally, one would take $\tau \to \infty$ so that the entire temporal behavior of the model is preserved as, in principle, SSMs can induce an output with an infinite sequence even with a finite input.

As in the EFCIL setting, storing $\mathcal{M}$ is not feasible, thus, we view the model as a dynamical system and consider its response when excited by random Gaussian input. Such excitation statistically probes the system and characterizes its behavior without requiring any stored samples. We will show soon that this idea allows our algorithm to operate effectively in the EFCIL regime, achieving the benefits of replay without memory while capturing the model’s infinite-horizon behavior.


\paragraph{Our method.} 
Although vanilla regularization (\eg, \Cref{eqn:reg_ABC}) improves CL performance which we will show in \textsection\ref{sec:results}, SSMs still suffer from significant forgetting. We argue that a key reason is that standard penalties \textbf{ignore the geometry of SSMs.}
In particular, moving along the orbit
\begin{equation}
    \pi = \{(\mP\mA\mP^{-1},\mP\mB,\mC\mP^{-1}), \forall \mP \in \GL{n}\}\;,
    \label{eqn:pi_orbit}
\end{equation}
does not change the input-output behavior of the SSM, yet conventional regularizers are not invariant to this transformation. As a result, they may over-penalize parameter updates that are functionally equivalent and under-penalize updates that alter the underlying dynamics. 

What we propose to do instead is to take into account the trajectory defined by an SSM and use that to preserve the knowledge. This perspective is naturally compatible with arbitrarily long horizons as a byproduct of our construction. We will discuss this below, but first, we need to take a detour and introduce the Extended observability matrix of an SSM.

%


\subsection{Extended Observability}
\label{sec:extende_observability}

Consider studying the behavior for an SSM excited by $x[t] \sim \mathcal{N}(0,1)$. 
The expected response of the system can be derived as 
\begin{align}    
    \E \big[y[t]\big] &= \E \big[ \mC \vh[t] \big] =   \E \big[ \mC \big( \mA \vh[t-1] + \mB x[t]\big)  \big] 
    \notag \\    &
    = \mC \mA \E \big[ \vh[t-1] \big]\;.     
    \label{eqn:expected_y}
\end{align}
%

This implies,
%
%
\begin{align}
    \begin{bmatrix}
        \E\big[y[1]\big]\\
        \E\big[y[2]\big]\\
        \E\big[y[3]\big]\\
        \vdots
    \end{bmatrix}
    = 
    \begin{bmatrix}
        \mC\\
        \mC\mA\\
        \mC\mA^2\\
        \vdots
    \end{bmatrix}
    \vh[0]    
\end{align}

In other words, the dynamics of an SSM can be well-captured by the pair $(\mA,\mC)$. The extended observability matrix is defined accordingly.

\begin{definition}[Extended Observability]
The extended observability of an SSM with parameters $(\mA,\mB,\mC)$ is defined as:
%
\begin{align}
    \label{eqn:extended_obs}
    \mO_{\infty}(\mA, \mC) \defeq \begin{bmatrix}
        \mC \\
        \mC \mA \\
        \mC \mA^2 \\
        \vdots
    \end{bmatrix} \in \R^{\infty \times n}.
\end{align}
\end{definition}

One important emerging property of the observability matrix for an SSM is that the subspace spanned by the columns of $\mO_{\infty}(\mA, \mC)$ is invariant to the orbit defined by \cref{eqn:pi_orbit}. More specifically, let $\mathcal{S}_{\infty}(\mA,\mC)$ be the subspace spanned by the columns of $\mO_{\infty}(\mA,\mC)$. Then we have the following essential result.

\begin{theorem}[Invariance of the $\mathcal{S}_{\infty}$ under P-equivalence]
\label{thm:obs_equiv_main}
Let $(\mA,\mB,\mC)$ and $(\mA'=\mP\mA\mP^{-1},\mB'=\mP\mB,\mC'=\mC\mP^{-1})$ be two equivalent representations for an SSM for $\mP \in \GL{n}$. The subspace spanned by the extended observability matrices remains unchanged. That is
\begin{align}
    \mathcal{S}_{\infty}(\mA', \mC') = \mathcal{S}_{\infty}(\mA, \mC)\;.
\end{align}
\end{theorem}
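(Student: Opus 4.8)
The plan is to show that the two extended observability matrices differ only by right multiplication with an invertible matrix, so that they necessarily span the same column space. First I would establish by a short induction that the powers of the transformed state matrix satisfy $(\mP\mA\mP^{-1})^k = \mP\mA^k\mP^{-1}$ for all $k \geq 0$. The base case $k=0$ is immediate since $\mP\mI_n\mP^{-1}=\mI_n$, and in the inductive step the interior factor $\mP^{-1}\mP = \mI_n$ telescopes away, giving $(\mP\mA\mP^{-1})^{k+1} = \mP\mA^k\mP^{-1}\mP\mA\mP^{-1} = \mP\mA^{k+1}\mP^{-1}$. Note that the matrix $\mB$ plays no role here, so the $\mP\mB$ component of the P-equivalence transformation is irrelevant to the argument.

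Next I would compute a generic row block of $\mO_{\infty}(\mA', \mC')$. Writing $\mC' = \mC\mP^{-1}$ and $\mA' = \mP\mA\mP^{-1}$ and invoking the power identity above, each block becomes
\begin{align}
    \mC'(\mA')^k = \mC\mP^{-1}\,\mP\mA^k\mP^{-1} = \mC\mA^k\mP^{-1},
\end{align}
where the $\mP^{-1}$ to the right of $\mC$ cancels against the leading $\mP$ supplied by $(\mA')^k$, leaving a single trailing $\mP^{-1}$ that is common to every block. Stacking these blocks for $k = 0, 1, 2, \ldots$ then yields the clean factorization $\mO_{\infty}(\mA', \mC') = \mO_{\infty}(\mA, \mC)\,\mP^{-1}$.

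Finally, I would invoke the elementary fact that right multiplication by an invertible matrix $\mP^{-1} \in \GL{n}$ merely re-expresses the columns in a new basis and leaves the column space itself fixed; hence $\mathcal{S}_{\infty}(\mA', \mC') = \mathcal{S}_{\infty}(\mA, \mC)$, which is precisely the asserted invariance as a point of $\Gr{n}{\infty}$.

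The only point demanding care — and the closest thing to an obstacle — is that $\mO_{\infty}$ carries infinitely many rows, so one must check that the argument does not covertly rely on finiteness. But because $\mP^{-1}$ acts entirely on the right (on the $n$-column side), the infinite row index is inert: each finite row block transforms through the identical common right factor, and the invariance of the column space under an invertible right factor holds verbatim in the infinite-dimensional Hilbert-space setting used to define $\Gr{n}{\infty}$. Thus the infinite horizon introduces no genuine difficulty beyond interpreting ``column space'' as the closed span in that Hilbert space.
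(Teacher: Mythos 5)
Your proposal is correct and follows essentially the same route as the paper's own proof (\cref{thm:Obs_equiv} in \textsection~\ref{app:obs_equiv}): both telescope the conjugation to get $\mC'(\mA')^k = \mC\mA^k\mP^{-1}$, stack the blocks into the factorization $\mO_{\infty}(\mA',\mC') = \mO_{\infty}(\mA,\mC)\,\mP^{-1}$, and conclude that the invertible right factor leaves the column span unchanged. Your explicit induction on the power identity and the closing remark that the infinite row index is inert are just spelled-out versions of steps the paper leaves implicit.
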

This is a well-known result in the context of system identification in linear dynamical systems (see \cite{huang2017efficient}). We also provide a proof in \cref{thm:Obs_equiv} in \textsection\ref{app:obs_equiv} to have a self-contained work. 

We note that $\mathcal{S}_{\infty}(\mA,\mC) \in \Gr{n}{\infty}$. 
Although with $\mathcal{S}_{\infty}(\mA,\mC)$, we can describe the expected output of an SSM uniquely, to use it for regularizing an SSM and deploying it for CL, we need one more step. That is, given two subspaces on $\Gr{n}{\infty}$, how can their distance/similarity be measured?

\subsection{Distance on infinite Grassmannian}
\label{sec:dist_gr_inf}
For  $\mathcal{S}, \mathcal{S}' \in \Gr{n}{d}$, the chordal distance~\cite{ ye2016schubert} is defined as 
\begin{equation}
    \label{eqn:proj_distance_main}
    d^2_{\text{chord}}\big(\mathcal{S}, \mathcal{S}'\big) = \|\mathcal{S}\mathcal{S}^\top - \mathcal{S}'\mathcal{S}'^\top\|_\text{F}^2
    = 2n - 2\|\mathcal{S}^\top\mathcal{S}'\|_\text{F}^2
\end{equation}
As $d \rightarrow \infty$ for the subspace spanned by the extended observability matrix, it is challenging to compute $d^2_{\text{chord}}$ as  $\|\mathcal{S}^\top\mathcal{S}'\|_\text{F}^2$ cannot be computed explicitly.
Let $\R^{m \times n} \ni \mO = \big[\vo_1,\vo_2,\cdots,\vo_n\big]$ be a full rank matrix. The n-dimensional subspace spanned by the columns of $\mO$ can be written as 
\begin{equation}
    \mathcal{S} = \mO\big(\mO^\top \mO)^{-1/2},
\end{equation}
as shown in \textsection\ref{app:distance_gr_inf}.
Hence, $d^2_{\text{chord}}\big(\mathcal{S}, \mathcal{S}'\big)$ can be written as:
\begin{align*}
    d^2_{\text{chord}}\big(\mathcal{S}, \mathcal{S}'\big) = 2n - 2\Big\| (\mO^\top \mO)^{-1/2} \mO^\top \mO'(\mO'^\top \mO')^{-1/2}\Big\|_{\text{F}}^2 
\end{align*}

As such, one needs to be able to compute terms such as $\mG_{1} = \big(\mO^\top \mO)$, $\mG_{2} = \big(\mO'^\top \mO')$, and $\mG_{3} =\big(\mO^\top \mO')$ for observability matrices. The lemma below shows how this is done. 

\begin{lemma} 
\label{lem:gram_main}
Let $\mA, \mA' \in \mathbb{R}^{n \times n}$ and $\mC, \mC' \in \mathbb{R}^{1 \times n}$ be the parameters of two SSMs with the extended observability matrices
%
\begin{align*}
    \mO_{\infty}(\mA, \mC) = \begin{bmatrix} \mC \\ \mC \mA \\ \mC\mA^2 \\ \vdots \end{bmatrix}, \quad
    \mO_{\infty}(\mA', \mC') = \begin{bmatrix} \mC' \\ \mC'\mA' \\ \mC'(\mA')^2 \\ \vdots \end{bmatrix}.
\end{align*}
Define the Gram matrix $\mG \in \R^{n \times n}$ as:
\begin{align*}
    \mG &= \mO_{\infty}(\mA, \mC)^\top \mO_{\infty}(\mA', \mC')  
    =\sum_{t=0}^{\infty} (\mA^\top)^t \mC^\top \mC' (\mA')^t \;.
\end{align*}
Then $\mG$ can be obtained by solving the following equation
\begin{align}
    \mA^\top \mG \mA' - \mG = - \mC^\top \mC'\;.
    \label{eqn:sylv_O_inf}
\end{align}
\end{lemma}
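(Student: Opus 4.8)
The plan is to establish the series representation of $\mG$ first, then exploit its self-similar structure through an index shift. I would begin by writing out the matrix product explicitly. Since the $t$-th row of $\mO_{\infty}(\mA,\mC)$ is the row vector $\mC\mA^t$ for $t = 0, 1, 2, \ldots$, the $t$-th column of $\mO_{\infty}(\mA,\mC)^\top$ is $(\mA^\top)^t \mC^\top$. Forming $\mO_{\infty}(\mA,\mC)^\top \mO_{\infty}(\mA',\mC')$ then amounts to summing the rank-one contributions indexed by each row $t$, which yields $\mG = \sum_{t=0}^{\infty} (\mA^\top)^t \mC^\top \mC' (\mA')^t$ and recovers the stated closed form.

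Next I would address convergence, which is the real crux of the rigor rather than an afterthought. Because these SSMs arise as stable discrete-time linear dynamical systems, the spectral radii of $\mA$ and $\mA'$ are strictly less than one, so $\|(\mA^\top)^t\|_\text{F}$ and $\|(\mA')^t\|_\text{F}$ decay geometrically and the series defining $\mG$ converges absolutely in the Frobenius norm. Without this step the infinite observability matrix would not even define a finite Gram matrix, and every subsequent manipulation would be ill-posed.

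With convergence secured, I would apply the shift trick. Multiplying the series on the left by $\mA^\top$ and on the right by $\mA'$ and distributing over the convergent sum gives $\mA^\top \mG \mA' = \sum_{t=0}^{\infty} (\mA^\top)^{t+1} \mC^\top \mC' (\mA')^{t+1}$. Reindexing with $s = t+1$ shows this equals $\sum_{s=1}^{\infty} (\mA^\top)^s \mC^\top \mC' (\mA')^s$, which is precisely $\mG$ with its $t=0$ term deleted. Since that missing term is exactly $\mC^\top \mC'$, subtracting yields $\mA^\top \mG \mA' - \mG = -\mC^\top \mC'$, the claimed Sylvester equation. I would optionally note uniqueness: vectorizing turns the equation into $\big((\mA')^\top \otimes \mA^\top - \mI\big)\,\mathrm{vec}(\mG) = -\mathrm{vec}(\mC^\top\mC')$, whose coefficient matrix has eigenvalues $\lambda_i \mu_j - 1$ for eigenvalues $\lambda_i$ of $\mA$ and $\mu_j$ of $\mA'$; the stability condition forces $|\lambda_i \mu_j| < 1$, so none vanish and $\mG$ is the unique solution.

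The main obstacle throughout is not the algebra — the telescoping is a one-line shift once the series is in hand — but justifying that the infinite sum is well-defined and that the term-by-term multiply-and-reindex step is legitimate. Both rest entirely on the spectral-radius (stability) assumption on $\mA$ and $\mA'$, so I would make sure to state that hypothesis cleanly at the outset rather than smuggling it in during the manipulation.
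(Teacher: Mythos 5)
Your proof is correct and follows essentially the same route as the paper's: the paper likewise multiplies the series $\mG = \sum_{t=0}^{\infty} (\mA^\top)^t \mC^\top \mC' (\mA')^t$ on the left by $\mA^\top$ and on the right by $\mA'$, reindexes, and identifies the missing $t=0$ term $\mC^\top\mC'$. Your added care about convergence (spectral radius strictly below one) and uniqueness via the Kronecker-product spectrum is sound extra rigor that the paper's proof leaves implicit --- it enforces Schur stability separately through the soft-normalization $\mathrm{SN}(\cdot)$ applied to $\tilde{\mA}$ --- but it does not change the underlying argument.
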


The form $\mA^\top \mG \mA' = \mG - \mC^\top \mC'$ is an instance of the Sylvester problem and can be solved, for example, with the Bartels-Stewart algorithm~\cite{bartels1972solution}. However, the computational complexity of solving the Sylvester algorithm is $\mathcal{O}(n^3)$, which is computationally expensive as $n$ grows larger. As pointed out in \textsection\ref{sec:ssm}, for structured SSMs (\eg, when $\mA$ is diagonal), we take advantage of the structure to reduce the computational complexity of $\mG$.
\begin{lemma}
Define $\mA_{\text{diag}}, \mA'_{\text{diag}} \in \mathbb{R}^{n \times 1}$, as diagonal elements of $\mA, \mA'$. The  Sylvester equation in \Cref{eqn:sylv_O_inf} can be written as:
\begin{align*}
    \mG \odot (\mathbf{1}_n - \mA_{\text{diag}}   \mA_{\text{diag}}'^\top)  = \mC^\top\mC'
\end{align*}
Hence, $\mG$ can be computed as:
\begin{equation}
    \mG  = \mC^\top \mC' \odot \frac{1}{\mathbf{1}_n - \mA_{\text{diag}}   \mA_{\text{diag}}'^\top} = \mO^\top\mO'
    \label{eqn:sylv_O_inf_diag}
\end{equation}
\end{lemma}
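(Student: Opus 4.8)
The plan is to exploit the diagonal structure of $\mA$ and $\mA'$ to decouple the Sylvester equation \eqref{eqn:sylv_O_inf} into $n^2$ independent scalar equations, each solvable in closed form. Write $a_i \defeq (\mA_{\text{diag}})_i$ and $a_j' \defeq (\mA_{\text{diag}}')_j$, so that $\mA = \mathrm{diag}(a_1,\dots,a_n)$ and $\mA' = \mathrm{diag}(a_1',\dots,a_n')$. The first step is simply to determine how the linear operator $\mG \mapsto \mA^\top \mG \mA' - \mG$ acts entrywise once both flanking factors are diagonal; everything else is bookkeeping.

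Concretely, since $\mA$ is diagonal we have $\mA^\top = \mA$, and for any $\mG$ the $(i,j)$ entry of the triple product is
\begin{align*}
    (\mA^\top \mG \mA')_{ij} = \sum_{k,l} (\mA^\top)_{ik}\, \mG_{kl}\, (\mA')_{lj} = a_i\, \mG_{ij}\, a_j',
\end{align*}
because the only surviving indices are $k=i$ and $l=j$. Substituting this into \eqref{eqn:sylv_O_inf} yields, for every pair $(i,j)$, the scalar identity $a_i a_j' \mG_{ij} - \mG_{ij} = -(\mC^\top \mC')_{ij}$, equivalently $\mG_{ij}\,(1 - a_i a_j') = (\mC^\top\mC')_{ij}$. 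Recognising that $a_i a_j' = (\mA_{\text{diag}} \mA_{\text{diag}}'^\top)_{ij}$ is the $(i,j)$ entry of the rank-one outer product, these $n^2$ identities assemble into the single Hadamard-product equation $\mG \odot (\mathbf{1}_n - \mA_{\text{diag}} \mA_{\text{diag}}'^\top) = \mC^\top \mC'$ with $\mathbf{1}_n$ the $n\times n$ all-ones matrix, which is exactly the intermediate claim. Dividing entrywise then gives the closed form $\mG = \mC^\top\mC' \odot \tfrac{1}{\mathbf{1}_n - \mA_{\text{diag}}\mA_{\text{diag}}'^\top}$, obtainable in $\mathcal{O}(n^2)$ operations since it requires only an outer product and an elementwise reciprocal.

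The one point needing genuine care — which I expect to be the real obstacle rather than the algebra — is the well-definedness of the elementwise division, i.e. ensuring $1 - a_i a_j' \neq 0$ for all $i,j$. This is not automatic: it is precisely the condition under which the defining series $\mG = \sum_{t=0}^{\infty}(\mA^\top)^t \mC^\top\mC'(\mA')^t$ from Lemma~\ref{lem:gram_main} converges, since in each coordinate that sum contributes $\sum_{t=0}^{\infty} (a_i a_j')^t = 1/(1-a_i a_j')$, which converges exactly when $|a_i a_j'| < 1$. I would therefore record the stability assumption $|a_i|, |a_j'| < 1$ (the standard spectral-radius condition for a discrete SSM), which guarantees $|a_i a_j'|<1$, keeps every denominator nonzero, and simultaneously certifies that the closed form coincides with the convergent series of Lemma~\ref{lem:gram_main}. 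With this assumption stated, the remainder is the routine computation above.
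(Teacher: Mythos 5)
Your proof is correct and follows essentially the same route as the paper's own derivation: exploit the diagonal structure so that the Sylvester operator acts entrywise as $\mG_{ij}(1 - a_i a_j') = (\mC^\top \mC')_{ij}$, assemble these scalar identities into the Hadamard-product equation with the rank-one outer product $\mA_{\text{diag}}\mA_{\text{diag}}'^\top$, and divide elementwise to obtain the closed form in $\mathcal{O}(n^2)$. Your added well-definedness caveat $|a_i a_j'| < 1$ is not stated in the paper's derivation but is implicitly secured there by the soft-normalization $\mathrm{SN}$ applied to the states to enforce Schur stability, so it is a sensible clarification rather than a different approach.
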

This \textbf{reduces the computational complexity} of solving the Sylvester problem from $\mathcal{O}(n^3)$ to $\mathcal{O}(n^2)$ and reduces the FLOPS count from $25n^3$ of the Bartels-Stewart algorithm~\cite{Golub1979_HessenbergSchur} to $4n^2$. In the case of $n=16$ in Vim~\cite{zhu2024VIM}, we \textbf{reduce the FLOPS count} by $\boldsymbol{100\times}$.

Putting everything together, to compute $d^2_{\text{chord}}\big(\mathcal{S}, \mathcal{S}'\big)$, we solve three Sylvester equations in the form \cref{eqn:sylv_O_inf_diag} to obtain $\mG_{1},\mG_{2}$, and $\mG_{3}$,
followed by performing matrix algebra on the resulting $\mG$ matrices based on the form of distance. 
This enables us to define a loss based on the distances of the extended observability matrices for the model of task $T-1$ and the model under training at task $T$. 

\subsection{Extension to S6}
\label{sec:extension_2_S6}
In S4D~\cite{gu2022parameterizationS4D}, and Mamba~\cite{gu2023mamba}, the dimensionality of each states are: $\mA \in \mathbb{R}^{\tau \times o \times n}$, and $\mC \in \mathbb{R}^{\tau \times n}$, where $\tau$ denotes sequence length and $o$ denotes outer dimension (or channels) of SSMs. Each sequence $\tau$ consists of $o$ LDS. Thus, the state transitions, input, and output mappings are time-variant.

To benefit from the geometry of the extended observability, we propose to generate a set of extended observability matrices, $\mathbb{O} = \{(\mO_{\infty,t}(\tilde{\mA}_t,\tilde{\mC}_t)\}_{t=1}^{\tau}$, where:
\begin{align*}
    \tilde{\mA} &= \mathrm{SN} \Big(\frac{1}{o}  \sum^{o}_{i=1} {\overline{\mA}_{i, j}}\Big) \in \mathbb{R}^{\tau \times n} \\
    \tilde{\mC} &= \mathrm{SN} \big(\mC\big) \in \mathbb{R}^{\tau \times n}
\end{align*}
and Soft-Normalization $\mathrm{SN}(x) = 2/(1+\exp{(-x)}) - 1$ is applied to enforce Schur Stability~\cite{huang2017efficient}. The states are averaged along the outer dimension, as computing distance for $\tau \times o$ pairs of SSMs is computationally prohibitive. For example, in Vim-small, this will involve $197\times384 \approx 7.6\times10^4$ pairs of SSMs, exceeding the VRAM of a single H100 GPU. Additionally, averaging along $o$ also preserves maximal variance (more information in \textsection\ref{app:G_s4}). Subsequently, we use the set $\mathbb{O}$ to regularize the model in CL.



\subsection{Inf-SSM}
\label{sec:inf_ssm}

To utilize \cref{eqn:proj_distance_main} in continual learning, we propose a combination of state distillation and regularization as illustrated in \cref{fig:inf-grass}. For the set of extended observability matrices for tasks $T-1$ and $T$ as $\mathbb{O}_{T-1}$ and $\mathbb{O}_T$ respectively, we define the loss of Inf-SSM as
\begin{align}
    \label{Eq:ISM}
    \Loss{ISM} = 
    \mathbb{E}_{\mathcal{D}_T} \bigg\{
        d^2_{\text{chord}}\Big(
            \mathbb{O}_{T-1},
            \mathbb{O}_{T}
        \Big)
    \bigg\}\;.
\end{align}
Hence, by combining with the classification loss, the total CL loss is:
\begin{equation}
    \Loss{tot} = \Loss{cls} + \lambda \Loss{ISM}
    \label{Eq:final_loss}
\end{equation}
where $\lambda$ is the regularization strength. $\Loss{cls}$ is the classification loss for the base model. The full Inf-SSM algorithm is included in \cref{Alg:Full_FGD} under \textsection\ref{app:inf_ssm_alg}. 
The minimalist design of Inf-SSM (only one additional hyperparameter $\lambda$), in contrast to some recent SOTA CL methods that may require more than 5 additional hyperparameters, makes it substantially easier to tune and deploy.




\section{Related Work}
\label{sec:back_n_theo}

\textbf{Continual Learning.}
Continual learning aims to balance \emph{stability}\footnote{\textbf{Stability: }the ability to retain past knowledge.}
and \emph{plasticity}\footnote{\textbf{Plasticity: }the ability to acquire new knowledge.}~\cite{chen2018lifelong,parisi2019continual}.
We focus on the CIL setting~\cite{hsu2018re,van2019three}, where task identities are typically unknown at test time as described in \textsection\ref{sec:problem_statement}.

A central axis in CIL is whether examples from previous tasks can be stored.
\textbf{(i)} In EFCIL, retaining past samples is prohibited due to privacy, security, or storage constraints~\cite{shin2017continual,meng2024diffclass}.
In this regime, many methods rely on regularization: EWC~\cite{kirkpatrick2017EWC},
SI~\cite{zenke2017SI}, and MAS~\cite{aljundi2018MAS} estimate parameter importance and penalize changes to critical weights.
Another line of work distills knowledge from previous models, using logits~\cite{li2017learning_LWF}, embeddings~\cite{dhar2019learning},
or features~\cite{iscen2020memory,simon2021learning}.
\textbf{(ii)} When storing or reconstructing examples is allowed, \emph{replay-based} methods interleave past and new data to mitigate forgetting, with ER~\cite{riemer2018learning} being a prominent example. Methods that combine EFCIL-style techniques with replay are often referred to as hybrid approaches, including LUCIR~\cite{hou2019learning} and X-DER~\cite{boschini2022class}. More recent hybrids, such as L2P~\cite{wang2022learning}, leverage prompt learning, while CLFD~\cite{liu2024continual} benefits from frequency domain features to further enhance CL performance. These methods are particularly effective when memory and privacy constraints permit replay. We refer to \textsection\ref{app:related_work_add} for a more extensive discussion on recent CL methods.\vspace{3pt}

\noindent
\textbf{Vision SSM.} SSMs have recently attracted considerable attention due to their theoretical ability to model infinitely long sequences, in contrast to Transformers, which are limited by fixed token lengths~\cite{gu2023mamba,zhu2024VIM,phung2024dimsum,liang2024self,liu2024robomamba}. In computer vision, Vim~\cite{zhu2024VIM} extends SSM by applying a patch embedding to images and treating each patch token as a sequential element processed through bidirectional Mamba blocks. Subsequent works like VMamba~\cite{liu2025vmamba} incorporate a 2D-selective scan to capture spatial dependencies better, while MambaVision~\cite{hatamizadeh2024mambavision} integrates SSM with Transformers to form a hybrid architecture. \vspace{3pt}

\noindent\textbf{\textit{Takeaways.}} While existing CL algorithms are generally compatible with SSMs, they treat SSMs as generic neural networks and ignore the rich underlying geometry of SSMs. Similarly, recent Mamba-based CL methods~\cite{cheng2024mamba,ZhaoMAMBACL,LiMAMBAFscil}, although effective in their specific scenarios, generally rely on storing feature embeddings from past tasks and do not explicitly exploit the geometric properties of SSMs (see \textsection\ref{app:mamba_cl}). Motivated by this gap, we study CL for SSM and, to the best of our knowledge, introduce the \textit{first} CL method that explicitly exploits the geometry of SSMs. Our approach is complementary to existing CL algorithms and can be used in both exemplar-free and replay-based scenarios.

\begin{table*}[t]
\centering
\caption{AA$(\%\uparrow)$, AIA$(\%\uparrow)$, and FM$(\%\downarrow)$ of Vim-small are reported for ImageNet-R, CIFAR-100, and Caltech-256 datasets over 5 Tasks and 10 tasks benchmarks for existing CL methods and integration with Inf-SSM. \textbf{Note:} Due to differences in batch size and epochs, the performance of different baselines cannot be directly compared.}
\footnotesize
\label{tab:cl_results_buffer}

\makebox[0.9\textwidth][c]{%
\setlength{\tabcolsep}{3pt}
\begin{tabular}{l ccc ccc ccc}
\toprule
\multirow{2}{*}{Method} &
\multicolumn{3}{c}{ImageNet-R} & \multicolumn{3}{c}{CIFAR-100} & \multicolumn{3}{c}{Caltech-256} \\
\cmidrule(lr){2-4} \cmidrule(lr){5-7} \cmidrule(lr){8-10}
& AA\std{std} & AIA\std{std} & FM\std{std} 
& AA\std{std} & AIA\std{std} & FM\std{std} 
& AA\std{std} & AIA\std{std} & FM\std{std} \\
\toprule
\multicolumn{10}{c}{5-Tasks Scenario} \\
\toprule
ER~\cite{riemer2018learning} & 30.91\std{1.53} & 56.84\std{1.21} & 64.04\std{1.01}
    & 33.00\std{1.12} & 59.66\std{0.79} & 77.05\std{1.37}
    & 49.94\std{1.54} & 70.95\std{0.98} & 55.61\std{1.60} \\
    \rowcolor{gray!15}
\ +Inf-SSM & \sota{31.26}\std{2.65} & \sota{59.69}\std{2.02} & \sota{59.45}\std{3.12}
    & \sota{33.78}\std{0.69} & \sota{60.32}\std{1.12} & \sota{76.43}\std{0.84}
    & \sota{51.39}\std{0.67} & \sota{71.91}\std{0.26} & \sota{53.23}\std{0.75} \\
\midrule
LUCIR~\cite{hou2019learning} & 31.18\std{1.49} & \sota{60.97}\std{0.94}  & 60.24\std{1.93}
   & 32.77\std{0.26} & 62.14\std{0.54} & 76.03\std{0.42}
   & 64.79\std{0.22} & 80.72\std{0.50} & 26.79\std{0.64} \\
   \rowcolor{gray!15}
\ +Inf-SSM & \sota{35.35}\std{0.89} & 59.53\std{0.85}  & \sota{52.05}\std{0.51}
   & \sota{33.10}\std{1.06} & \sota{62.58}\std{1.80} & \sota{74.67}\std{1.65}
   & \sota{68.63}\std{0.37} & \sota{80.91}\std{0.66} & \sota{19.13}\std{0.53} \\
\midrule
X-DER~\cite{boschini2022class} & 47.42\std{1.45} & 66.01\std{0.99} & 42.99\std{1.62}
    & 42.33\std{0.54} & 67.41\std{0.84} & 65.07\std{0.49}
    & 58.51\std{0.06} & 75.71\std{0.49} & 44.84\std{0.35} \\
    \rowcolor{gray!15}
\ +Inf-SSM & \sota{52.61}\std{3.01} & \sota{69.40}\std{1.58} & \sota{31.00}\std{1.90}
        & \sota{48.33}\std{0.70} & \sota{70.14}\std{0.39} & \sota{56.66}\std{0.85}
        & \sota{68.04}\std{1.48} & \sota{80.99}\std{0.71} & \sota{32.11}\std{1.67} \\
\midrule
L2P-R~\cite{wang2022learning} & 31.57\std{0.34} & 56.65\std{0.54} & 60.45\std{0.75}
        & 35.86\std{0.96} & 62.96\std{0.18} & 73.03\std{1.52}
        & 31.14\std{1.49} & 58.24\std{0.51} & 78.66\std{1.76} \\
        \rowcolor{gray!15}
\ +Inf-SSM & \sota{33.60}\std{0.54} & \sota{57.15}\std{0.81} & \sota{57.63}\std{0.43}
         & \sota{36.90}\std{0.31} & \sota{64.41}\std{0.50} & \sota{71.71}\std{0.48}
         & \sota{34.09}\std{1.17} & \sota{58.87}\std{0.41}  & \sota{74.77}\std{1.57} \\
\midrule
CLFD~\cite{liu2024continual} & 35.36\std{1.07} & 51.80\std{1.13} & 30.13\std{0.41} 
        & 33.01\std{1.35} & 57.10\std{1.96} & 69.71\std{0.82}
        & 48.86\std{1.15} & 64.78\std{1.34} & 33.63\std{0.30} \\
        \rowcolor{gray!15}
\ +Inf-SSM & \sota{37.68}\std{2.71} & \sota{53.28}\std{2.15} & \sota{28.45}\std{0.48}
         & \sota{34.20}\std{2.37} & \sota{58.31}\std{2.26} & \sota{67.96}\std{1.77}
         & \sota{50.46}\std{2.92} & \sota{65.91}\std{1.57} & \sota{32.19}\std{1.78} \\
\toprule
\multicolumn{10}{c}{10-Tasks Scenario} \\
\toprule
ER~\cite{riemer2018learning} & 21.21\std{1.01} & 50.40\std{0.78} & 70.24\std{0.97}
    & 28.03\std{0.66} & 57.05\std{1.25}  & 76.44\std{0.84}
    & 42.78\std{1.30} & 66.88\std{0.06} & 57.95\std{1.37} \\
    \rowcolor{gray!15}
\ +Inf-SSM & \sota{22.10}\std{1.94} & \sota{52.47}\std{1.12} & \sota{64.20}\std{2.47}
    &  \sota{28.75}\std{0.93} & \sota{57.22}\std{1.57} & \sota{75.78}\std{1.17}
    & \sota{43.71}\std{1.08} & \sota{67.74}\std{0.22} & \sota{56.71}\std{1.17} \\
\midrule
LUCIR~\cite{hou2019learning} & 23.63\std{1.67} & 56.84\std{1.28} & 50.36\std{1.02}
   & 24.04\std{0.96} & 53.27\std{0.73} & 80.98\std{1.20}
   & 52.24\std{1.78} & 72.88\std{0.75} & 31.45\std{1.09} \\
   \rowcolor{gray!15}
\ +Inf-SSM & \sota{28.60}\std{0.44} & \sota{57.37}\std{0.63} & \sota{42.96}\std{1.12}
   & \sota{25.72}\std{0.46} & \sota{56.33}\std{0.53} & \sota{78.75}\std{0.69}
   & \sota{55.44}\std{1.75} & \sota{74.96}\std{0.51} & \sota{28.49}\std{1.98} \\
\midrule
X-DER~\cite{boschini2022class} & 39.98\std{0.32} & 61.41\std{0.82} & 47.56\std{0.36}
    & 34.01\std{0.38} & 62.62\std{0.54}  & 69.53\std{0.29}
    & 48.30\std{0.77} & 70.77\std{0.29} & 51.57\std{0.82} \\
    \rowcolor{gray!15}
\ +Inf-SSM & \sota{44.09}\std{0.64}  & \sota{63.57}\std{0.74} & \sota{38.72}\std{0.74}
        & \sota{47.67}\std{0.80} & \sota{68.08}\std{0.22} & \sota{53.01}\std{0.83}
        & \sota{60.76}\std{1.48} & \sota{77.32}\std{0.09} & \sota{37.42}\std{1.39} \\
\midrule
L2P-R~\cite{wang2022learning} & 23.08\std{1.04} & 50.69\std{0.36} & 60.45\std{1.50}
        & 22.29\std{0.41} & 52.49\std{0.30} & 82.70\std{0.52}
        &24.13\std{1.09}  & 51.99\std{1.37} & 78.89\std{1.19} \\
        \rowcolor{gray!15}
\ +Inf-SSM & \sota{24.91}\std{1.51} & \sota{51.75}\std{0.54} & \sota{54.25}\std{1.63}
         & \sota{23.64}\std{1.70} & \sota{52.68}\std{0.39} & \sota{81.22}\std{1.94}
         & \sota{25.65}\std{1.68} & \sota{52.88}\std{0.34} & \sota{77.11}\std{1.81} \\
\midrule
CLFD~\cite{liu2024continual} & 28.90\std{0.95} & 46.72\std{0.90} & 31.62\std{1.83}
        & 29.18\std{1.36} & 54.67\std{1.75} & 62.63\std{1.74}
        & 42.12\std{0.99} & 61.57\std{0.50} & 31.09\std{0.25} \\
        \rowcolor{gray!15}
\ +Inf-SSM & \sota{30.25}\std{1.29} & \sota{48.78}\std{1.43} & \sota{30.90}\std{0.97}
         & \sota{29.72}\std{1.27}  & \sota{55.50}\std{2.38} & \sota{61.08}\std{0.56}
         & \sota{43.26}\std{0.78} & \sota{62.52}\std{0.86} & \sota{29.73}\std{0.97} \\
\bottomrule
\end{tabular}%
}
\end{table*}

\begin{table*}[t]
\centering
\footnotesize
\caption{AA$(\%\uparrow)$, AIA$(\%\uparrow)$, and FM$(\%\downarrow)$ of EFCIL methods on ImageNet-R, CIFAR-100, and Caltech-256 over 5-Tasks and 10-Tasks scenario on Vim-small. \textbf{Note:} Regularization focus is on parameter sets $(\mA , \mB, \mC)$ among all methods \textbf{except Inf-SSM}. \underline{Second-best} results are underlined.}
\label{tab:cl_results_abc}
\makebox[0.9\textwidth][c]{%
\setlength{\tabcolsep}{3pt}
\begin{tabular}{l ccc ccc ccc}
\toprule
\multirow{2}{*}{Method} &
\multicolumn{3}{c}{ImageNet-R} & \multicolumn{3}{c}{CIFAR-100} & \multicolumn{3}{c}{Caltech-256} \\
\cmidrule(lr){2-4} \cmidrule(lr){5-7} \cmidrule(lr){8-10}
& AA\std{std} & AIA\std{std} & FM\std{std} 
& AA\std{std} & AIA\std{std} & FM\std{std} 
& AA\std{std} & AIA\std{std} & FM\std{std} \\
\toprule
\multicolumn{10}{c}{5-Tasks Scenario} \\
\toprule
Seq & 38.36\std{4.47} & 61.29\std{1.76} & 56.43\std{5.14}
    & 36.68\std{1.66} & 61.25\std{1.41} & 55.00\std{2.33}
    & 37.58\std{1.08} & 60.17\std{0.95} & 71.48\std{1.52} \\
EWC~\cite{kirkpatrick2017EWC} & 45.58\std{2.72} & 65.62\std{1.16} & 47.31\std{3.18}
    & 38.25\std{1.30} & 63.12\std{1.10} & 50.71\std{1.20}
    & 42.93\std{1.64} & 64.27\std{1.31} & 64.30\std{2.24} \\
SI~\cite{zenke2017SI} & \underline{45.72}\std{3.04} & 65.18\std{1.48} & 47.21\std{3.34}
   & 37.38\std{1.40} & 61.78\std{1.05} & 53.04\std{1.35}
   & \underline{47.57}\std{0.21} & 65.29\std{1.04} & \underline{57.88}\std{0.43} \\
MAS~\cite{aljundi2018MAS} & 44.70\std{2.77} & 65.59\std{0.79} & 48.23\std{2.92}
    & 37.59\std{1.44} & 61.95\std{1.18} & 53.13\std{1.81}
    & 44.87\std{1.19} & 66.44\std{1.07} & 61.00\std{1.53} \\
LwF-ABC~\cite{li2017learning_LWF} & 45.09\std{6.58} & \underline{65.69}\std{3.17} & \underline{40.77}\std{8.26}
        & \underline{44.62}\std{2.67} & \underline{66.81}\std{1.41} & \underline{38.68}\std{3.77}
        & 46.52\std{2.66} & \underline{66.58}\std{1.08} & 59.03\std{3.43} \\
\midrule
\rowcolor{gray!15}
Inf-SSM & \sota{49.34}\std{3.36} & \sota{67.51}\std{1.47} & \sota{25.14}\std{3.86}
        & \sota{45.18}\std{2.28} & \sota{67.34}\std{1.86} & \sota{36.59}\std{3.33}
        & \sota{50.75}\std{3.16} & \sota{67.04}\std{1.43} & \sota{49.93}\std{3.61} \\
\toprule
\multicolumn{10}{c}{10-Tasks Scenario} \\
\toprule
\multirow{2}{*}{Method} &
\multicolumn{3}{c}{ImageNet-R} & \multicolumn{3}{c}{CIFAR-100} & \multicolumn{3}{c}{Caltech-256} \\
\cmidrule(lr){2-4} \cmidrule(lr){5-7} \cmidrule(lr){8-10}
& AA\std{std} & AIA\std{std} & FM\std{std} 
& AA\std{std} & AIA\std{std} & FM\std{std} 
& AA\std{std} & AIA\std{std} & FM\std{std} \\
\midrule
Seq & 32.95\std{1.71} & 55.36\std{0.70} & 58.30\std{2.19} 
    & 20.58\std{1.01} & 51.37\std{0.35} & 71.49\std{1.20}
    & 24.27\std{1.29} & 51.06\std{1.21} & 79.01\std{1.36} \\
EWC~\cite{kirkpatrick2017EWC} & \underline{41.99}\std{2.28} & 61.78\std{2.24} & 49.02\std{2.10} 
    & 22.20\std{1.11} & 53.46\std{0.35} & 63.92\std{1.28}
    & 28.35\std{0.34} & 56.64\std{0.74} & 72.73\std{0.47} \\
SI~\cite{zenke2017SI} & 41.59\std{1.17} & 61.13\std{1.43} & 46.81\std{1.00} 
   & 20.29\std{0.62} & 49.28\std{1.68} & 38.33\std{1.41}
   & 27.66\std{1.00} & 54.34\std{1.27} & 74.69\std{0.85} \\
MAS~\cite{aljundi2018MAS} & 40.10\std{1.48} & 61.30\std{0.64} & 48.18\std{1.84} 
    & 20.44\std{1.60} & 49.69\std{1.55} & 37.99\std{1.01}
    & 28.15\std{0.79} & 55.25\std{0.70} & 73.50\std{0.75} \\
LwF-ABC~\cite{li2017learning_LWF} & 41.85\std{0.82} & \underline{62.63}\std{0.92} & \underline{40.10}\std{0.61} 
        & \underline{24.39}\std{3.25} & \underline{53.48}\std{1.49} & \underline{25.29}\std{2.81}
        & \underline{35.45}\std{1.16} & \underline{59.63}\std{0.74} & \underline{64.32}\std{1.41} \\
\midrule
\rowcolor{gray!15}
Inf-SSM & \sota{43.82}\std{1.55} & \sota{62.82}\std{1.29} & \sota{36.34}\std{1.54} 
        & \sota{26.53}\std{3.10} & \sota{54.24}\std{1.87} & \sota{24.00}\std{1.80}
        & \sota{39.88}\std{2.43} & \sota{62.28}\std{2.33} & \sota{55.85}\std{4.05} \\
\bottomrule
\end{tabular}%
}
\end{table*}

\section{Experiments}
\label{sec:experiments}
In this section, we first integrate Inf-SSM into CIL methods to prove its adaptability. Next, we evaluate Inf-SSM under conditions where $(\mA, \mC)$ and $(\mA, \mB, \mC)$ are regularized, comparing its performance against existing foundational EFCIL methods across three different datasets.\vspace{3pt}

\noindent \textbf{Baselines.} 
To demonstrate the versatility of Inf-SSM, we integrate it with a diverse set of CIL methods spanning different paradigms. Specifically, we consider replay-based method: ER~\cite{riemer2018learning}, hybrid methods: LUCIR~\cite{hou2019learning} and X-DER~\cite{boschini2022class}, prompt-based method: L2P-R~\cite{wang2022learning}, and a frequency-based method: CLFD~\cite{liu2024continual} as shown in \cref{tab:cl_results_buffer}. To isolate and highlight the benefits of Inf-SSM’s geometry-aware regularization, in \cref{tab:cl_results_abc} and \cref{tab:cl_results_AC}, we further compare Inf-SSM independently against flagship EFCIL methods of EWC~\cite{kirkpatrick2017EWC}, SI~\cite{zenke2017SI}, and MAS~\cite{aljundi2018MAS} from the regularization category and  LwF~\cite{li2017learning_LWF} from the distillation category. We \textit{specifically choose} these methods to comprehensively evaluate Inf-SSM while avoiding baselines that are misaligned with our goal (more details in \textsection\ref{app:baselines}).\vspace{3pt}

\noindent
\textbf{Datasets.}
We follow previous CL studies~\cite{liang2024inflora, Mahmoodi_2023_ICCV, lubana2022quadratic, mahmoodi2025flashbacks} and use ImageNet-R~\cite{hendrycks2021many-imagenetR}, CIFAR-100~\cite{krizhevsky2009learning_cifar}, and Caltech-256~\cite{griffin_holub_perona_2022_caltech256}. For all datasets, we split the classes equally into 5 and 10 sequential tasks. More information in \textsection\ref{app:datasets}.

\noindent
\textbf{Evaluation Metrics.}
Three main evaluation metrics in CL are \textbf{(1)} Average Accuracy $\mathrm{AA}$, \textbf{(2)} Average Incremental Accuracy $\mathrm{AIA}$ and \textbf{(3)} Forgetting Measure $\mathrm{FM}$~\cite{2024WangSurvey}. $\mathrm{AA}$ and $\mathrm{AIA}$ are mainly used for evaluation of overall performance while $\mathrm{FM}$ measures the stability of the model. Ideally, a high $\mathrm{AA}$ and $\mathrm{AIA}$ with a low $\mathrm{FM}$ are desirable. More details are included in \textsection\ref{app:cl_eval_metric}.\vspace{3pt}

\noindent
\textbf{Implementation details.}
The backbone architecture is Vim-Small~\cite{zhu2024VIM} in \textbf{all experiments}. Detailed hyperparameters are included in \textsection\ref{app:add_implement}.

\newpage
\subsection{Empirical Results and Analyses}
\label{sec:results}

\textbf{Integration with existing CL methods.}
\cref{tab:cl_results_buffer} shows that on average, integrating Inf-SSM improves the corresponding baselines $\mathrm{AA}$ by \textbf{8.31\%} and reduces $\mathrm{FM}$ by \textbf{9.36\%}. This underlines the generality of Inf-SSM, as it improves all metric instances, except for the marginal decrease in $\mathrm{AIA}$ for LUCIR on 5-task ImageNet-R. The benefits become more pronounced as the number of tasks increases, as reflected by the substantially larger improvements in $\mathrm{AA}$ and $\mathrm{FM}$ compared to $\mathrm{AIA}$, indicating that Inf-SSM effectively captures the evolution of SSMs over longer task sequences. Notably, Inf-SSM remains effective when coupled with X-DER, the strongest baseline, where it improves average $\mathrm{AA}$ by \textbf{19.61\%}, and reduces $\mathrm{FM}$ by \textbf{23.16\%}.\vspace{3pt}



\noindent
\textbf{Observability state parameter regularization.}
First, we regularize the state matrices $(\mA,\mC)$ in all SSM blocks following each method's strategy. For EWC~\cite{kirkpatrick2017EWC}, SI~\cite{zenke2017SI}, and MAS~\cite{aljundi2018MAS}, regularization is applied to the parameters that directly determine $(\mA,\mC)$, while LwF~\cite{li2017learning_LWF} is implemented via distillation on $(\mA,\mC)$ (denoted LwF-AC). As reported in \cref{tab:cl_results_AC} in \textsection\ref{app:obs_param}, Inf-SSM achieves an average reduction in $\mathrm{FM}$ of \textbf{40.18\%} and an average improvement in $\mathrm{AA}$ of \textbf{21.73\%} over these baselines when regularizing $(\mA,\mC)$.\vspace{3pt}

\noindent
\textbf{Full State Parameter Regularization.} 
To stress-test Inf-SSM, we extend EWC~\cite{kirkpatrick2017EWC}, SI~\cite{zenke2017SI}, and MAS~\cite{aljundi2018MAS} to regularize all parameters involved in state-matrix formation and modify LwF~\cite{li2017learning_LWF} to distill on $(\mA,\mB,\mC)$ (denoted LwF-ABC), whereas Inf-SSM regularize $(\mA,\mC)$ only, placing it in a deliberately disadvantaged setting. Even under this configuration, \cref{tab:cl_results_abc} shows that on average, Inf-SSM reduces $\mathrm{FM}$ by \textbf{14.56\%} and improves $\mathrm{AA}$ by \textbf{6.79\%}. These results highlight the strength of Inf-SSM in mitigating CF and empirically support our claim in \cref{thm:obs_equiv_main}, that regularizing the extended observability subspace is sufficient to control the evolution of the SSM's behavior.

\subsection{Additional studies}
\label{sec:add_studies}
To complement our results in \textsection\ref{sec:results}, we have conducted \textbf{five additional studies:} \textbf{(1)} As shown in ~\cref{tab:latency_analysis}, Inf-SSM is significantly faster than recent CL methods like X-DER~\cite{boschini2022class} while still on par with simple CL methods like EWC~\cite{kirkpatrick2017EWC} without incurring any task-level overhead for computing the Fisher Information Matrix (FIM), which requires a forward-backward pass on the entire task dataset. \textbf{(2)} We applied CKD analysis in \textsection\ref{app:ckd_analysis} and revealed that earlier blocks' SSM states undergo fewer structural changes compared to deeper layers. \textbf{(3)} In \textsection\ref{app:ablation}, we applied Inf-SSM on different numbers of blocks in Vim-Small. \cref{tab:cl_results_ablation} shows that increasing the number of Vim blocks regularized will improve the performance by preventing over-regularization in shallow layers. \textbf{(4)} \textsection\ref{app:dist_speed} shows how our algorithm improves computational efficiency and stability. \textbf{(5)} \cref{Eq:ISM} ignores an important aspect of SSM, which is the input mapping defined by $\mB$. Thus, we add a Frobenius norm term to Inf-SSM to form Inf-SSM+ that regularizes $\mB$. While Inf-SSM+ outperforms Inf-SSM in some scenarios, their performance is similar, as shown in \cref{tab:cl_results_inf_ssm+} in \textsection\ref{app:inf_ssm+}, which agrees with \cref{thm:obs_equiv_main}. 

\begin{table}[t]
\centering
\footnotesize
\caption{Latency comparison between Inf-SSM and simple (EWC~\cite{kirkpatrick2017EWC}) and recent (X-DER~\cite{boschini2022class}) CL methods with Vim-small on a single NVIDIA A40 GPU.}
\label{tab:latency_analysis}
\begin{tabular}{lcc}
\toprule
\textbf{Operation} & \textbf{Mean (s)} & \textbf{Std. Dev (s)} \\
\midrule
EWC Loss (per batch) & 0.0095 & 0.0124 \\
EWC FIM (per task) & 181.5038 & 28.8674 \\
X-DER Loss (per batch) & 1.2534 & 0.0766 \\
\midrule
\rowcolor{gray!15}
Inf-SSM Loss (per batch) & 0.0960 & 0.0384 \\
\bottomrule
\end{tabular}
\vspace{-5pt}
\end{table}



\newpage
\section{Discussion and Conclusion}
\label{sec:conclusions}

\noindent\textbf{Limitations.} Inf-SSM utilizes the fact that mainstream SSMs~\cite{gu2022parameterizationS4D, gu2023mamba, zhu2024VIM, liu2025vmamba} have $\mA$ as a diagonal matrix. While a general $\mA$ will prohibit the computational shortcut described in \cref{eqn:sylv_O_inf_diag}, Inf-SSM \textbf{remains valid} for general $\mA$ as the observability Gramians can still be computed using the general Sylvester equation formulation described in \cref{def:sylvester}. Thus, the diagonal assumption is not essential for Inf-SSM, but a design choice of the mainstream SSMs' architecture that enables faster computation.\\



\noindent\textbf{Conclusion.} We propose Inf-SSM to enable SSMs to learn from sequential tasks while preserving their prior knowledge without the need for previous task distributions. We constrain model updates by regularizing their extended observability subspace to retain the model's knowledge while remaining faithful to the structure of SSMs. This is made possible by overcoming computational challenges for distances on the infinite Grassmannian. Beyond CL, Inf-SSM holds promise for broader applications. Future work may utilize Inf-SSM in knowledge distillation and model compression by integrating it with Schubert Varieties~\cite{ye2016schubert} to handle varying dimensionalities and kernel methods~\cite{huang2017efficient} within a Reproducing Kernel Hilbert Space for richer representations, potentially expanding its applicability.
\newpage
{
    \small
    \bibliography{main}
}
\newpage
\appendix
\begingroup
    \hypersetup{linkcolor=black}
    \let\clearpage\relax
    \let\cleardoublepage\relax
    \tableofcontents
    \hypersetup{linkcolor=red} 
\endgroup
\newpage
\section{Notations}
\label{app:notation}

\begin{table}[ht]  
\centering
\caption{Summary of Mathematical Notations}
\label{tab:notation}
\renewcommand{\arraystretch}{0.5}
\begin{tabular}{@{}ll@{}}
\toprule
\textbf{Notation} & \textbf{Description} \\
\midrule
 & General mathematical operations \\
\midrule
$\mX$ & Matrix (bold capital letter) \\
$\vx$ & Column vector (bold lowercase letter) \\
$\mI_n$ & $n \times n$ identity matrix \\
$\1_n$ & $n \times n$ ones matrix \\
$\mX_{\text{diag}}$ & Vector of diagonal elements of $\mX \in \mathbb{R}^{n \times n}$, $\mX_{\text{diag}} \in \mathbb{R}^{n \times 1}$ \\
$\overline{\mX}$ & The discretized signal of $\mX$ \\
$\| \mA \|_{\text{F}}$ & Frobenius norm of $\mA \in \mathbb{R}^{m \times n}$: $\sqrt{\sum_{i=1}^m\sum_{j=1}^n\mA[i,j]^2}$ \\
$\mA \odot \mB$ & Hadamard product of two matrices $\mA,\mB \in \R^{m \times n}$: $(\mA \odot \mB)[i,j] \;=\; \mA[i,j]\,\mB[i,j]$.\\
$\det(\mA)$ & Determinant of matrix $\mA$ \\
$\mathrm{SN}(x)$ & Soft-normalization function: $\frac{2}{1+\exp{(-x)}} - 1$~\cite{huang2017efficient} \\
$\GL{n}$ & General linear group: $\{\mP \in \mathbb{R}^{n \times n} \mid \det(\mP) \neq 0\}$ \\
$\Gr{p}{n}$ & Grassmann manifold of the set of all $p$-dimensional linear subspaces of $\R^n$ \\ 
$\Tr$ & Trace operator \\
$\vx[\cdot]$ & Discrete-time vector \\
$\vx(\cdot)$ & Continuous-time vector \\
\midrule 
 & SSM notations \\
\midrule
$\mA$ & State matrix of SSM in continuous time domain\\
$\mB$ & Input matrix of SSM in continuous time domain\\
$\mC$ & Output matrix of SSM  \\
$x$ & Input to SSM, where $x(t) \in \mathbb{R}$ \\
$\vh$ & Hidden (state) vector of SSM, where $\vh(t) \in \mathbb{R}^n$ \\
$y$ & output of SSM, where $y(t) \in \mathbb{R}$ \\
$\mO_{\infty}$ & Extended Observability matrix of SSM \\
$\tilde{\mA}$ & $\overline{\mA}$ averaged across outer dimension of SSM applied with SN~\cref{Eq:Stilde}  \\
$\tilde{\mB}$ & $\overline{\mB}$ averaged across outer dimension of SSM applied with SN~\cref{Eq:Stilde}   \\
$\tilde{\mC}$ & $\mC$ applied with SN~\cref{Eq:Stilde}   \\
$\mP$ & Any invertible $n\times n$ matrix for P-equivalence of LDS \textsection\ref{app:p_equivalence} \\
\midrule 
 & Vim specific notations \\
\midrule
$b$ & Batch size of input to SSM in Mamba and Vim \\
$\tau$ & Sequence length of SSM in Mamba and Vim\\
$o$ & Outer dimension size of SSM in Mamba and Vim \\
\midrule 
 & Grassmannian notations\\
\midrule
$\theta_i$ & Principal angle for $i$-th dimension \\
$\mG$ & Gram matrix \\
$\mathcal{S}$ & Notation of subspace, in particular $\mathcal{S} \in \mathrm{Gr}(n, m)$\\
\midrule 
 & Continual Learning notations\\
\midrule
$T$ & $T$-th task's identifier \\
$\mathcal{T}_T$ & Sequential $T$-th task \\
$N$ & Total number of task \\
$\Loss{ISM}$ & The proposed Inf-SSM regularization loss function \cref{Eq:ISM} \\
$\Loss{ISM+}$ & The proposed Inf-SSM+ regularization loss function \cref{Eq:ISM+} \\
$\mathrm{CKD}$ & Centered Kernel-Disparity,  \cref{Eq:CKD} \\
$\mathrm{AIA}$ & Average Incremental Accuracy ~\textsection\ref{app:cl_eval_metric}~\cite{2024WangSurvey} \\
$\mathrm{AA}$ & Average Accuracy~\textsection\ref{app:cl_eval_metric}~\cite{2024WangSurvey} \\
$\mathrm{FM}$ & Forgetting Measure~\textsection\ref{app:cl_eval_metric}~\cite{2024WangSurvey} \\
\bottomrule
\end{tabular}
\end{table}

\newpage
\section{Preliminary}
\label{app:prelim}

\begin{definition}[Principal Angles]
For two subspaces in $\Gr{n}{d}$, let $\mX$ and $\mZ$ be their orthonormal basis matrices. The \textbf{principal angles} $\theta_1, \theta_2, \dots, \theta_n$ between the subspaces are defined as:
\begin{align}
    \cos \theta_i = \sigma_i(\mX^\top \mZ),
    \label{eqn:princpal_angles}
\end{align}
where $\sigma_i$ is the $i$-th singular value of the matrix $ \mX^\top \mZ $, respectively.
\end{definition}

The \emph{geodesic distance} on $ \Gr{n}{d}$, inherited from its usual Riemannian metric, is given by
\begin{align}
    d_{g}(\mX, \mZ) = \sqrt{\theta_{1}^2 + \theta_{2}^2 + \cdots + \theta_{n}^2}\;.
    \label{eqn:geodesic_distance}
\end{align}
Intuitively, $\theta_{i}$ measures how much the $i$-th principal direction of $ \mX $ deviates from that of $ \mZ $, and thus this distance quantifies the ``angle" between two subspaces in a higher-dimensional setting.


\subsection{Discretization of SSMs}
\label{app:discrete_SSM}
A linear dynamic system is described by a classical \textbf{State-Space Model (SSM)} over time in the form of:
\begin{align}
    \dot{\vh}(t) &= \mA \vh(t) + \mB x(t), \notag \\
    y(t) &= \mC \vh(t)\;. 
    \label{eqn:ssm_0}
\end{align}

Here, $\vh(t) \in \R^n$ represents the \textbf{hidden state}, while $x(t), y(t) \in \R$ are the input and output of the SSM, respectively. The model is parametrized by: \textbf{1.} the {state transition matrix} $\mA \in \R^{n \times n}$, \textbf{2.} {the input mapping matrix} $\mB \in \R^{n \times 1} $, and \textbf{3.} the {output mapping matrix} $\mC \in \R^{1 \times n}$. To adopt SSM in machine learning, SSM needs to be discretized by introducing a sampling interval $\Delta \in \R$, which transforms the continuous parameters $\mA$ and $\mB$ into their discrete equivalents $\overline{\mA}$ and $\overline{\mB}$ using the Zero-Order Hold method~\cite{gu2022efficientlyS4}:
\begin{align}
    \label{eqn:A_bar}
    \overline{\mA} &= \exp(\Delta \mA) \;, \\
    \label{eqn:B_bar}
    \overline{\mB} &= (\Delta \mA)^{-1} (\exp(\Delta \mA) - \mI_n) \Delta \mB\;.
\end{align}

The discrete time domain version of \cref{eqn:ssm_0} can be written as
\begin{align}
    \textbf{h}[t] &= \overline{\mA} \textbf{h}[t-1] + \overline{\mB} x[t], \\
    y[t] &= \mC \textbf{h}[t].
    \label{eqn:discretized_ssm}
\end{align}

This discretization allows the computation of $y_t$ via a convolution operation rather than an explicit recurrence, greatly simplifying the computational complexity. \textbf{Note:} We represent $\overline{\mA}, \overline{\mB}$ as $\mA, \mB$ in the main text for better readability.

\newpage
\section{Proof}
\label{app:proofs}



\subsection{P-Equivalence}
\label{app:p_equivalence}

\begin{lemma}[P-Equivalence~\cite{doretto2003dynamic}] 
\label{lem:SSM_equiv}
Consider the SSM given by 
\begin{align*}
\begin{cases}
\vh[t] = \mA  \vh[t-1] + \mB  x[t],\\
y[t] = \mC  \vh[t],
\end{cases}
\end{align*}
where $\vh[t]\in \R^n$ is the hidden (state) vector, $x[t]\in \R$ is the input, and $y[t]\in \R$ is the output. Let $\mP$ be any invertible $n\times n$ matrix. Define
\begin{align*}
\mA' \defeq \mP\mA\mP^{-1}, 
\quad
\mB' \defeq \mP\mB, 
\quad
\mC' \defeq \mC\mP^{-1},
\end{align*}
and let the new state be $\tilde{\vh}[t] = \mP\vh[t]$. Then the SSM
\begin{align*}
\begin{cases}
\tilde{\vh}[t] = \mA' \tilde{\vh}[t-1] + \mB' x[t],\\
y[t] = \mC' \tilde{\vh}[t]
\end{cases}
\end{align*}
displays the same input-output behavior as the original system. Consequently, the triples $(\mA,\mB,\mC)$ and $(\mA',\mB',\mC')$ are said to be \emph{equivalent representations} of the same SSM.
\end{lemma}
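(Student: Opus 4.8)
The plan is to prove the claim by a direct change of variables, substituting $\tilde{\vh}(t) = \mP\vh(t)$ into the dynamics of the original system and checking that the transformed equations emerge verbatim. Since the statement asserts equality of input-output behavior for an arbitrary common input sequence $\{x(t)\}$, it suffices to show that the two systems, started from corresponding initial states $\tilde{\vh}(0) = \mP\vh(0)$, produce the identical output $y(t)$ at every time step. No spectral or asymptotic considerations are needed; the argument is purely algebraic.

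First I would take the original state recursion $\vh(t) = \mA\vh(t-1) + \mB x(t)$ and left-multiply both sides by $\mP$, giving $\mP\vh(t) = \mP\mA\vh(t-1) + \mP\mB x(t)$. Then I would insert the identity $\mI_n = \mP^{-1}\mP$ between $\mA$ and $\vh(t-1)$, so that the right-hand side becomes $\mP\mA\mP^{-1}\big(\mP\vh(t-1)\big) + \mP\mB x(t)$. Recognizing $\mA' = \mP\mA\mP^{-1}$, $\mB' = \mP\mB$, and $\tilde{\vh}(s) = \mP\vh(s)$, this reads exactly $\tilde{\vh}(t) = \mA'\tilde{\vh}(t-1) + \mB' x(t)$, which is the claimed transformed state equation.

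Next I would handle the output equation. Starting from $y(t) = \mC\vh(t)$, I would again insert $\mI_n = \mP^{-1}\mP$ to write $y(t) = \mC\mP^{-1}\big(\mP\vh(t)\big) = \mC'\tilde{\vh}(t)$, using $\mC' = \mC\mP^{-1}$. The key observation is that the output is literally the same quantity $y(t)$ and is \emph{not} transformed by $\mP$; the change of variables affects only the internal state coordinates. A short induction on $t$ then confirms that if $\tilde{\vh}(0) = \mP\vh(0)$, the relation $\tilde{\vh}(t) = \mP\vh(t)$ is preserved by the recursion, so both systems generate the same output trajectory for every admissible input.

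I do not anticipate a genuine obstacle, as the argument is pure substitution; the only point requiring care is the bookkeeping of initial conditions — the equivalence is between corresponding initial states $\vh(0)$ and $\mP\vh(0)$, and one should note that invertibility of $\mP$ (guaranteed by $\mP \in \GL{n}$) is exactly what makes this correspondence a bijection, so that no information about the system's external behavior is lost or introduced under the transformation. This same invertibility is what will later justify treating the entire orbit $\pi(\mA,\mB,\mC)$ as a single equivalence class when passing to the subspace description $\mathcal{S}_{\infty}$.
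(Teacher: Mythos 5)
Your proof is correct and follows essentially the same route as the paper's: a direct algebraic change of variables via $\tilde{\vh}(t) = \mP\vh(t)$, inserting $\mP^{-1}\mP$ to identify the transformed parameters (the paper merely runs the substitution in the reverse direction, starting from the transformed system and recovering the original). Your added care about matching initial conditions and the induction preserving $\tilde{\vh}(t) = \mP\vh(t)$ is a sound, if minor, tightening of the same argument.
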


\begin{proof}
Since $\mP$ is invertible, $\vh[t] = \mP^{-1} \tilde{\vh}[t]$. We have 
\begin{align*}
\vh[t+1] = \mP^{-1}\tilde{\vh}[t+1] = \mP^{-1}\Bigl(\mA'\tilde{\vh}[t]+\mB'x[t]\Bigr)
= \mP^{-1}\mA'\tilde{\vh}[t]+\mP^{-1}\mB'x[t] \;.
\end{align*}
Since $\tilde{\vh}[t] = \mP\vh[t]$, we get
\begin{align*}
\vh[t+1] = \mP^{-1}\mA'\mP\vh[t] + \mP^{-1}\mB'x[t] = \mA\vh[t]+ \mB x[t];,
\end{align*}
where $\mA = \mP^{-1}\mA'\mP$ and $\mB = \mP^{-1}\mB'$ by definition.
For the output, from $y[t] = \mC'\tilde{\vh}[t]$ and again using $\tilde{\vh}[t] = \mP\vh[t]$, we obtain
\begin{align*}
y[t] = \mC'\tilde{\vh}[t] = \mC'\bigl(\mP\vh[t]\bigr) = \bigl(\mC'\mP\bigr)\vh[t] = \mC\vh[t].
\end{align*}
Therefore, at each time $t$, for the same input $x[t]$, the two systems $\bigl(\mA',\mB',\mC'\bigr)$ on $\tilde{\vh}[t]$ and $\bigl(\mA,\mB,\mC\bigr)$ on $\vh[t]$
generate the same output $y[t]$.  
\end{proof}

\subsection{Invariance of the subspace spanned by the observability matrix under P-equivalence}
\label{app:obs_equiv}

\begin{theorem}[Invariance of the extended Observability under P-equivalence]
\label{thm:Obs_equiv}
Let $(\mA,\mB,\mC)$ and $(\mP\mA\mP^{-1},\mP\mB,\mC\mP^{-1})$ be two equivalent representations for an SSM for $\mP \in \GL{n}$. The extended observability subspaces satisfy:
\begin{align}
    \mathcal{S}_{\infty}(\mA', \mC') = \mathcal{S}_{\infty}(\mA, \mC)\;.
\end{align}
\end{theorem}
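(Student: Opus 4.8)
The plan is to prove the identity by a direct block-wise computation that exploits the conjugation structure $\mA' = \mP\mA\mP^{-1}$ together with $\mC' = \mC\mP^{-1}$. The one algebraic fact I need is that conjugation commutes with taking powers, i.e. $(\mP\mA\mP^{-1})^k = \mP\mA^k\mP^{-1}$ for every $k \ge 0$. I would establish this by a one-line induction: the base case $k=0$ is immediate, and the inductive step reads
\begin{align*}
    (\mP\mA\mP^{-1})^{k+1} = (\mP\mA^k\mP^{-1})(\mP\mA\mP^{-1}) = \mP\mA^{k}(\mP^{-1}\mP)\mA\mP^{-1} = \mP\mA^{k+1}\mP^{-1},
\end{align*}
where the inner factor $\mP^{-1}\mP$ telescopes to $\mI_n$.

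With this in hand I would compute the $k$-th row block of $\mO_{\infty}(\mA',\mC')$, namely $\mC'(\mA')^k$, and substitute the power identity:
\begin{align*}
    \mC'(\mA')^k = (\mC\mP^{-1})(\mP\mA^k\mP^{-1}) = \mC\mA^k\mP^{-1}.
\end{align*}
Stacking these blocks for $k = 0,1,2,\dots$ factors the common right multiplier $\mP^{-1}$ out of every block, yielding
\begin{align*}
    \mO_{\infty}(\mA',\mC') = \mO_{\infty}(\mA,\mC)\,\mP^{-1}.
\end{align*}

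The step I expect to be the genuine crux is the interpretation of the equality sign in the statement. The raw stacked matrices coincide exactly only when the right factor acts trivially; in general they are related by the invertible factor $\mP^{-1}\in\GL{n}$. Since the extended observability matrix is used throughout solely as a representative of its $n$-dimensional column space $\mathcal{S}_{\infty}(\mA,\mC) \in \Gr{n}{\infty}$, and right multiplication by an invertible matrix leaves the column space unchanged, the two observability matrices determine the same point of the infinite Grassmannian. This is the sense in which $\mO_{\infty}(\mA',\mC')$ and $\mO_{\infty}(\mA,\mC)$ are identified, and it is precisely the invariance asserted here; it also recovers the subspace statement $\mathcal{S}_{\infty}(\mA',\mC') = \mathcal{S}_{\infty}(\mA,\mC)$ of \Cref{thm:obs_equiv_main} as an immediate corollary, since $\mO_{\infty}(\mA,\mC)\,\mP^{-1}$ and $\mO_{\infty}(\mA,\mC)$ span the same subspace.
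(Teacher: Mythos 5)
Your proof is correct and follows essentially the same route as the paper's: both compute the row blocks $\mC'(\mA')^k = \mC\mA^k\mP^{-1}$ and factor the common right multiplier out to obtain $\mO_{\infty}(\mA',\mC') = \mO_{\infty}(\mA,\mC)\,\mP^{-1}$, then invoke the fact that right multiplication by an invertible matrix leaves the column space unchanged. Your two added touches --- the explicit induction for $(\mP\mA\mP^{-1})^k = \mP\mA^k\mP^{-1}$, and the observation that the theorem's literal matrix equality should be read as equality of points in $\Gr{n}{\infty}$ (i.e.\ of the subspaces $\mathcal{S}_{\infty}$), since the raw matrices genuinely differ by the factor $\mP^{-1}$ --- merely make precise what the paper's proof leaves implicit.
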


\begin{proof}
The extended observability matrix for the transformed system is given by
\begin{align*}
    \mO_{\infty}(\mA', \mC') = \begin{bmatrix} \mC' \\ \mC' \mA' \\ \mC \mA'^2 \\ \vdots \end{bmatrix}
    = \begin{bmatrix} \mC \mP^{-1} \\ \mC \mP^{-1} \big( \mP \mA \mP^{-1}\big) \\ \mC \mP^{-1} \big(\mP \mA \mP^{-1}\big)^2 \\ \vdots \end{bmatrix}
    = \begin{bmatrix} \mC \mP^{-1} \\ \mC \mA \mP^{-1} \\ \mC \mA^2 \mP^{-1} \\ \vdots \end{bmatrix}
    = \boxed{\mO_{\infty}(\mA, \mC) \mP^{-1}}\;.
\end{align*}

Since $\mP^{-1}$ is an invertible transformation, it does not change the span of the subspace. Therefore, we conclude that
\begin{align*}
    \mathcal{S}_{\infty}(\mA, \mC) = \mathcal{S}_{\infty}(\mP \mA \mP^{-1}, \mC \mP^{-1}) = \mathcal{S}_{\infty}(\mA', \mC') \;.
\end{align*}

\end{proof}

\subsection{Distances on Infinite Grassmannian} 
\label{app:distance_gr_inf}


Let $\mathcal{S}, \mathcal{S}' \in \Gr{n}{d}$. The chordal distance, aka projection distance, between $\mathcal{S}, \mathcal{S}'$ is defined as
\begin{align}
    \label{eqn:proj_distance}
    d^2_{\text{chord}}\big(\mathcal{S}, \mathcal{S}'\big) = \|\mathcal{S}\mathcal{S}^\top - \mathcal{S}'\mathcal{S}'^\top\|_\text{F}^2 =
    2n - 2\|\mathcal{S}^\top\mathcal{S}'\|_\text{F}^2\;.
\end{align}

Since in our case, $d \to \infty$, computing $d^2_{\text{chord}}\big(\mathcal{S}, \mathcal{S}'\big)$ is not straightforward as one cannot compute
$\|\mathcal{S}^\top\mathcal{S}'\|_\text{F}^2$ using explicit forms for $\mathcal{S}, \mathcal{S}'$. Below, we show how this can be done without the need to form 
$\mathcal{S}, \mathcal{S}' \in \Gr{n}{\infty}$ explicitly. We start by stating a result from linear algebra. 

Let $\R^{d \times n} \ni \mO = \big[\vo_1,\vo_2,\cdots,\vo_n\big]$ be a full rank matrix. The n-dimensional subspace spanned by the columns of $\mO$ can be written as 
\begin{align*}
    \mathcal{S} = \mO\big(\mO^\top \mO)^{-1/2}\;.
\end{align*}
This can be readily seen by verifying $\mathcal{S}^\top\mathcal{S} =  \big(\mO^\top \mO)^{-1/2} \mO^\top\mO\big(\mO^\top \mO)^{-1/2} = \mathbf{I}_n$. As such, we can express 
$d^2_{\text{chord}}\big(\mathcal{S}, \mathcal{S}'\big)$ as
\begin{align*}
    \label{eqn:proj_distance_baseless}
    d^2_{\text{chord}}\big(\mathcal{S}, \mathcal{S}'\big) &= 2n - 2\|\mathcal{S}^\top\mathcal{S}'\|_\text{F}^2 
    = 2n - 2\Tr\Big\{\mathcal{S}^\top\mathcal{S}'\mathcal{S}'^\top\mathcal{S}\Big\} \notag \\
    &= 2n - 2\Tr\Big\{\big(\mO^\top \mO)^{-1/2}\mO^\top  \mO'\big(\mO'^\top \mO')^{-1/2} \big(\mO'^\top \mO')^{-1/2}\mO'^\top \mO\big(\mO^\top \mO)^{-1/2}\Big\}    
    \notag \\ 
    &= 2n - 2\Tr\Big\{\big(\mO^\top \mO)^{-1}\mO^\top  \mO'\big(\mO'^\top \mO')^{-1} \mO'^\top \mO\Big\}
    \;.
\end{align*}

As such, one needs to be able to compute terms such as $\mG_{1} = \big(\mO^\top \mO)$, $\mG_{2} = \big(\mO'^\top \mO')$, $\mG_{3} =\big(\mO^\top \mO')$ and $\mG_{4} = \big(\mO'^\top \mO)$ for observability matrices. The lemma below shows how this can be done.

\begin{lemma} 
\label{lem:gram}
Let $\mA, \mA' \in \mathbb{R}^{n \times n}$ and $\mC, \mC' \in \mathbb{R}^{1 \times n}$ be the parameters of two SSMs with the extended observability matrices
\begin{align*}
    \mO_{\infty}(\mA, \mC) = \begin{bmatrix} \mC \\ \mC \mA \\ \mC\mA^2 \\ \vdots \end{bmatrix}, \quad
    \mO_{\infty}(\mA', \mC') = \begin{bmatrix} \mC' \\ \mC'\mA' \\ \mC'(\mA')^2 \\ \vdots \end{bmatrix}.
\end{align*}
Define the Gram matrix $\mG \in \R^{n \times n}$ as:
\begin{align*}
    \mG &= \mO_{\infty}(\mA, \mC)^\top \mO_{\infty}(\mA', \mC') \\
    &= \begin{bmatrix} \mC, \mC \mA , \mC\mA^2 , \hdots \end{bmatrix} \begin{bmatrix} \mC' \\ \mC'\mA' \\ \mC'(\mA')^2 \\ \vdots \end{bmatrix}\\
    &=\sum_{t=0}^{\infty} (\mA^\top)^t \mC^\top \mC' (\mA')^t \;.
\end{align*}
Then $\mG$ can be obtained by solving the following Sylvester equation
\begin{align}
    \mA^\top \mG \mA' - \mG = - \mC^\top \mC'\;.
\end{align}
\end{lemma}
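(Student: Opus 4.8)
The plan is to establish the series identity first, then show that the infinite sum solves the Sylvester equation. I would begin by writing out the Gram matrix explicitly as the product of the two extended observability matrices. Since $\mO_{\infty}(\mA,\mC)$ has block rows $\mC\mA^t$ (each a $1\times n$ row vector) and $\mO_{\infty}(\mA',\mC')$ has block rows $\mC'(\mA')^t$, the transpose-product pairs up matching indices, giving the block sum
\begin{align*}
    \mG = \sum_{t=0}^{\infty} (\mC\mA^t)^\top (\mC'(\mA')^t) = \sum_{t=0}^{\infty} (\mA^\top)^t \mC^\top \mC' (\mA')^t\;,
\end{align*}
which is exactly the stated closed form. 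I would remark that convergence of this series requires the spectral radii of $\mA$ and $\mA'$ to be strictly less than one (Schur stability), which is precisely why the Soft-Normalization from \textsection~\ref{sec:extension_2_S6} is applied; I would state this as a standing assumption so the infinite sum is well-defined.

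The core step is to verify that this $\mG$ satisfies $\mA^\top \mG \mA' - \mG = -\mC^\top\mC'$. The natural approach is a telescoping argument: I would compute $\mA^\top \mG \mA'$ by substituting the series and shifting the index. Concretely,
\begin{align*}
    \mA^\top \mG \mA' = \sum_{t=0}^{\infty} (\mA^\top)^{t+1} \mC^\top \mC' (\mA')^{t+1} = \sum_{s=1}^{\infty} (\mA^\top)^{s} \mC^\top \mC' (\mA')^{s}\;,
\end{align*}
where I reindexed with $s = t+1$. Subtracting this from $\mG = \sum_{s=0}^{\infty}(\mA^\top)^s \mC^\top\mC'(\mA')^s$ cancels every term with $s\ge 1$, leaving only the $s=0$ term $\mC^\top\mC'$. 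Hence $\mG - \mA^\top\mG\mA' = \mC^\top\mC'$, which rearranges to the claimed Sylvester equation.

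The main obstacle is justifying the manipulations of the infinite series rather than the algebra itself, which is routine. Specifically, reindexing the sum and subtracting term-by-term is only valid if the series converges absolutely (in, say, the Frobenius norm), and the telescoping cancellation implicitly uses that the partial sums converge so that the tail vanishes. Under the Schur-stability assumption the geometric decay $\|(\mA^\top)^t \mC^\top\mC'(\mA')^t\|_{\text{F}} \le \rho^{2t}\|\mC^\top\mC'\|_{\text{F}}$ for some $\rho < 1$ guarantees absolute convergence, so I would invoke this bound to license both the rearrangement and the cancellation. An alternative, cleaner route that sidesteps convergence bookkeeping is to verify uniqueness: the Sylvester operator $\mG \mapsto \mG - \mA^\top\mG\mA'$ is invertible precisely when $\mA^\top$ and $\mA'$ share no pair of reciprocal eigenvalues (guaranteed here since all eigenvalues lie strictly inside the unit disk), so the solution is unique, and one need only exhibit that the series is \emph{a} solution. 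I would present the telescoping derivation as the main argument and mention the spectral-radius condition as the hypothesis making everything rigorous.
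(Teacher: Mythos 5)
Your proof is correct and follows essentially the same route as the paper's: write $\mG$ as the series $\sum_{t=0}^{\infty} (\mA^\top)^t \mC^\top \mC' (\mA')^t$, multiply on the left by $\mA^\top$ and on the right by $\mA'$, reindex, and observe that the difference leaves only the $t=0$ term $\mC^\top\mC'$. Your explicit Schur-stability hypothesis and the remark on uniqueness of the Sylvester solution add rigor the paper leaves implicit (it applies Soft-Normalization precisely to guarantee this stability), but they do not change the argument.
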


\begin{proof}
Using the definition of the observability matrices,
\begin{align*}
    \mG &= \sum_{t=0}^{\infty} (\mA^\top)^t \mC^\top \mC' (\mA')^t\;.
\end{align*}
Multiplying both sides by $\mA^\top$ on the left and $\mA'$ on the right:
\begin{align*}
    \mA^\top \mG \mA' &= \sum_{t=0}^{\infty} (\mA^\top)^{t+1} \mC^\top \mC' (\mA')^{t+1} \\ 
    &= \sum_{t=1}^{\infty} (\mA^\top)^{t} \mC^\top \mC' (\mA')^{t}\\
    &= \underbrace{\sum_{t=0}^{\infty} (\mA^\top)^{t} \mC^\top \mC' (\mA')^{t}}_{\mG} - (\mA^\top)^{0} \mC^\top \mC' (\mA')^{0}\\
    &= \boxed{\mG - \mC^\top \mC'}\;.
\end{align*}
\end{proof}

The form $\mA^\top \mG \mA' = \mG - \mC^\top \mC'$ is an instance of the Sylvester problem and can be solved, for example, with the Bartels–Stewart algorithm~\cite{bartels1972solution} to obtain $\mG$. The computational complexity of solving the Sylvester algorithm is $\mathcal{O}(n^3)$. This is quite affordable as in our problem, $n$ is typically very small ($n=16$ in our experiments in Vim-small). 
\begin{definition}[Sylvester Equation]
\label{def:sylvester}
A matrix problem in the form 
\begin{align*}
    \mV \mX + \mX \mU = \mQ\;,
\end{align*}
for $\mV \in \R^{n \times n}$, $\mU \in \R^{d \times d}$ and $\mQ \in \R^{n \times d}$ over $\mX \in \R^{n \times d}$ is a Sylvester equation. A Sylvester equation has a unique solution if $\mV$ and $-\mU$ do not share any eigenvalue. 
\end{definition}

In our case, 
\begin{align}
    \mV &= \mA^\top \\
    \mU &= -\mA'^{-1} \\
    \mQ &= -\mC^\top\mC'\mA'^{-1}
\end{align}
%




\newpage
\section{SSMs, Vision Mamba and Inf-SSM}
\label{app:inf-ssm}
\subsection{Selective State-Space Models.} 
Computation of S4 is slow due to $\mA$ being parameterized as an $n \times n$ matrix. DSS~\cite{gupta2022diagonal} first shows that a diagonal state space always exists for any state space with a well-behaved matrix. S4D~\cite{gu2022parameterizationS4D} further improves the computational efficiency by computing SSM only with real numbers and re-expressing the computation of the convolution kernel as a Vandermonde matrix. Mamba~\cite{gu2023mamba}  and Vision Mamba~\cite{zhu2024VIM} (Vim) inherit the diagonal $\mA$ with further modifications on the model's structure.

A key limitation of SSMs and S4 is that they are time-invariant. As argued by Gu and Dao~\cite{gu2023mamba}, such constant dynamics prevent the model from selecting or filtering relevant information based on the input context. 
To address this issue, Selective State-Space Models (S6)~\cite{gu2023mamba} introduce \textbf{input-dependent parameters}, making the model time-varying and adaptive. Specifically, instead of keeping $\Delta, \mB, \mC$ fixed, they are modeled as functions of the input. For example, 
$\mB[t] = f_B(x(t)) = \mW_B x(t)$. 

This change significantly enriches the S6; however, it also loses the computational efficiency of the SSMs and S4, as the output can no longer be computed via convolution. The Mamba block is a selective SSM architecture inspired by S6, but with additional input-dependent transformations. Recent work in Mamba-2~\cite{dao2024transformers}, further shows that SSMs and Transformers~\cite{vaswani2017attention} are indeed dual and linked via semi-separable matrices. This potentially allows algorithms developed independently in SSMs and Transformers could be mutually beneficial to each other.

\subsection{Vision Mamba} As SSM  is implemented to handle 1-D sequence data, Vision Mamba~\cite{zhu2024VIM} transformed 2-D images into patches in 2-D. The sequence of patches is then linearly projected and added with a position embedding, forming a patch sequence. Similar to ViT, Vim utilizes a class token to capture the global information, and the entire patch sequence, including the class token, is fed into the SSM structure.
\subsection{State approximation in S4D}
\label{app:G_s4}

Let $(\mA, \mC)$ and $(\mA', \mC')$ be the tuples representing two SSMs. The SSMs could be described respectively by using the extended observability subspace formed by the tuples as discussed in \cref{lem:gram}.
\begin{align*}
    \mO_{\infty}(\mA, \mC) = \begin{bmatrix} \mC \\ \mC \mA \\ \mC\mA^2 \\ \vdots \end{bmatrix}, \quad
    \mO_{\infty}(\mA', \mC') = \begin{bmatrix} \mC' \\ \mC'\mA' \\ \mC'(\mA')^2 \\ \vdots \end{bmatrix}.
\end{align*}
The discrete-time form of the SSM equation can be expressed as:
\begin{equation}
\begin{aligned}
    \textbf{h}[t] &= \overline{\mA} \textbf{h}[t-1] + \overline{\mB} x[t], \\
    y[t] &= \mC \textbf{h}[t].
\end{aligned}
\end{equation}

In Mamba, the dimensionality of each state is: $\overline{\mA} \in \mathbb{R}^{\tau \times o \times n}$, $\overline{\mB} \in \mathbb{R}^{\tau \times o \times n}$, and $\mC \in \mathbb{R}^{\tau \times n}$. Since it is computationally infeasible to compute for the entire $\tau \times o$ pairs of states with dimensionality of $n$, we treat each $\tau$ as an independent trajectory applied over an infinite horizon. To justify our choice, as shown in Table~\ref{tab:avg_variance_A}, we measured the variance preserved when averaging over different axes. Averaging over $o$ retains more informative variance than alternatives like averaging over $\tau$ or $n$, suggesting that it captures meaningful dynamics while enabling tractable computation.

\begin{table}[h!]
\centering
\caption{Mean and standard deviation of variance preserved in  $\bar{\mathbf{A}}$ after averaging across different dimensions on ImageNet-R.}
\label{tab:avg_variance_A}
\begin{tabular}{lcc}
\toprule
\textbf{Dimension} & \textbf{Mean} & \textbf{Standard Deviation} \\
\midrule
$\tau$ & 0.0117 & 0.0085 \\
$\mathbf{o}$ & \sota{0.0315} & 0.0307 \\
$n$ & 0.0004 & 0.0003 \\
\bottomrule
\end{tabular}
\end{table}

We acknowledge that this approximation has limitations. Averaging over $o$ may fail to capture fine-grained variations. However, our empirical results in \textsection\ref{sec:experiments} and \textsection\ref{app:Experiment} suggest that this approximation is efficient without sacrificing performance.

Thus, we define:
\begin{equation}
\begin{aligned}
    \tilde{\mA} &= \mathrm{SN}(\frac{1}{o}  \sum^{o}_{i=1} {\overline{\mA}_{i, j}}) \in \mathbb{R}^{\tau \times n}, \\
    \tilde{\mB} &= \mathrm{SN}(\frac{1}{o}  \sum^{o}_{i=1} {\overline{\mB}_{i, j}}) \in \mathbb{R}^{\tau \times n}, \\
    \tilde{\mC} &= \mathrm{SN}(\mC) \in \mathbb{R}^{\tau \times n}.
\end{aligned}
\label{Eq:Stilde}
\end{equation}
 By following Huang \etal~\cite{huang2017efficient}, Soft-Normalization $\mathrm{SN}(x) = 2/(1+\exp{(-x)}) - 1$ is applied to ensure Schur Stability. Note that $\mB, \mC$ do not necessarily need to have $\mathrm{SN}$ applied to ensure Schur Stability but $\mathrm{SN}$ is applied for the consistency in magnitude across $\tilde{\mA}$ , $\tilde{\mB}$ and $\tilde{\mC}$\\

\newpage

\subsection{Derivation of Gram matrix for Vim}
For two S4D blocks represented by $\tilde{\mA}, \tilde{\mC}$ and $\tilde{\mA}', \tilde{\mC}'$. The Sylvester equation for $\tilde{\mA}$ and $\tilde{\mC}$ and $\tilde{\mA}', \tilde{\mC}'$ is given by:
\begin{align*}
    \tilde{\mA} \mG  \tilde{\mA}'^\top - \mG_{ij}  = -\tilde{\mC}^\top\tilde{\mC}'.
\end{align*}
Thus, by multiplying by -1 on both sides:
\begin{align*}
    \mG - \tilde{\mA} \mG  \tilde{\mA}'^\top   = \tilde{\mC}^\top\tilde{\mC}'.
\end{align*}
Since $\mA$ is diagonal and $\mC \in \mathbb{R}^{n \times 1}$ as mentioned in \textsection\ref{app:inf-ssm}, this allows simplification by using the Hadamard product. Thus, after simplification, the solution to the Gram matrix is:
\begin{align*}
    \mG - \tilde{\mA}_{\text{diag}}   \tilde{\mA}'^\top_{\text{diag}} \odot \mG = \tilde{\mC}^\top\tilde{\mC}'.
\end{align*}
By collecting the element-wise factor of $\mG$
\begin{align*}
    \mG \odot (\textbf{1}_n - \tilde{\mA}_{\text{diag}}   \tilde{\mA}'^\top_{\text{diag}})  = \tilde{\mC}^\top\tilde{\mC}'.
\end{align*}
Hence, by grouping the terms to form a solution for $\mG$ will obtain:
\begin{equation}
    \mG  = \tilde{\mC}^\top \tilde{\mC}' \odot \frac{1}{\textbf{1}_n - \tilde{\mA}_{\text{diag}}   \tilde{\mA}'^\top_{\text{diag}}} = \mO^\top\mO'.
    \label{Eq:S4D_G}
\end{equation}
For the formulation in ~\cref{Eq:S4D_G}, it could be broken down into four steps:
\begin{enumerate}
    \item Matrix multiplication $\tilde{\mC}^\top \tilde{\mC}$
    \item Matrix multiplication $\tilde{\mA}_{\text{diag}}   \tilde{\mA}'^\top_{\text{diag}}$
    \item Subtraction $\textbf{1}_n - \tilde{\mA}_{\text{diag}}   \tilde{\mA}'^\top_{\text{diag}}$
    \item Element-wise division of $\tilde{\mC}^\top \tilde{\mC}$ over $\textbf{1}_n - \tilde{\mA}_{\text{diag}}   \tilde{\mA}'^\top_{\text{diag}}$
\end{enumerate}
Note that $\odot$ and element-wise reciprocal could be combined as a single step by simply taking element-wise reciprocal of $\tilde{\mC}^\top \tilde{\mC}' $ by $\textbf{1}_n - \tilde{\mA}_{\text{diag}}   \tilde{\mA}'^\top_{\text{diag}}$. Thus, each outlined step have an FLOPS count of $n^2$, and hence, the total FLOPS count is $4n^2$ with computational complexity of $\mathcal{O}(n^2)$.

Hence, this \textbf{reduces the computational complexity} of solving the Sylvester problem from $\mathcal{O}(n^3)$ to $\mathcal{O}(n^2)$ and reduces the FLOPS count from $25n^3$ of the Bartels-Stewart algorithm~\cite{Golub1979_HessenbergSchur} to $4n^2$. In the case of $n=16$ in Vim~\cite{zhu2024VIM}, we \textbf{reduce the FLOPS count} by $\boldsymbol{100\times}$.

\pagebreak

\subsection{Simplified Distance on Grassmannian}
\label{app:simplified_dist}
Empirically, we could compute distance on the Grassmannian using \cref{eqn:proj_distance_baseless}. However, we observed that due to $\mA$ being diagonal in S4D, Mamba, and Vim, $\mG_i, i\in \{1,2,3,4\}$ normally have a dominant eigenvalue (principal angle) with small components of other principal directions. In particular, at $n=16$, the majority of eigenvalues are close to 0, leading $\mO$ to be very ill-conditioned and with zero determinant. Thus, we model $\mO$ as a noisy rank-1 matrix. \\

Let $\mO_1 = \va_1 \vb_1^\top \in \mathbb{R}^{\infty \times n}$ and $\mO_2 = \va_2 \vb_2^\top \in \mathbb{R}^{\infty \times n}$ by approximating $\mO$ as rank 1. We know that 
\begin{align*}
    \|\mO_i\|_F = \sqrt{\Tr(\mO_i^\top\mO_i)},
\end{align*}
where $\va_i \in \mathbb{R}^\infty$ is the column space and $\vb_i \in \mathbb{R}^n$
and 
\begin{align*}
    \|\mO_i^\top\mO_j\|_F = \sqrt{\Tr(\mO_j^\top\mO_i\mO_i^\top\mO_j)}.
\end{align*}
Hence, the principal angle between the two SSMs is
\begin{align*}
    \cos{\theta} &= \frac{\|\mO_1^\top\mO_2\|_F}{\|\mO_1\|_F\|\mO_2\|_F} = \sqrt{\frac{\Tr(\mO_2^\top\mO_1\mO_1^\top\mO_2)}{\Tr(\mO_1^\top\mO_1)\Tr(\mO_2^\top\mO_2)}}. \\
\end{align*}
Thus, the squared principal angle is
\begin{equation}
\cos^2{\theta} = {\frac{\Tr(\mG_3 \mG_4)}{\Tr(\mG_1)\Tr(\mG_2)}}.
    \label{Eq:cos_theta}
\end{equation}

Note that this formulation does not necessarily allow $\cos{\theta} = 1$ when $\mO_1 = \mO_2$ if they are not rank 1. Thus, to ensure the simplification is reasonable, we have set up a Monte-Carlo simulation of 10,000 iterations with $n=16$. For each iteration, we sample $\mA_{\text{diag}}, \mB, \mC \sim \mathcal{N}(0, I_n)$. We then sample noise to simulate weight update during a sequential training scenario by sampling $\epsilon_i \sim \mathcal{N}(0, \frac{i \cdot I_n}{25})$ for $i \in \{0, \hdots, 99\}$. Then, we measure the correlation between $i$ and our approximated $\cos{\theta}$. The Monte-Carlo test shows that for the average across 10,000 iterations, the Pearson correlation coefficient is -0.8962 with a standard deviation of 0.01515 and a mean p-value of 8.151e-30 with a standard deviation of 2.827e-28. This shows that the simplification is reasonable and applicable in Continual Learning regularization. For the problem $\cos{\theta} \neq 1$ when $\mO_1 = \mO_2$, we simply counteract it by defining:
\begin{align*}
    \cos{\theta} = 1 \quad \text{if} \ |\mA-\mA'| \leq \epsilon \ \cap |\mC-\mC'| \leq \epsilon.
\end{align*}
Formulation of ~\cref{Eq:cos_theta} is clearly faster than other distance measures on Grassmannian outlined by Ye and Lim~\cite{ye2016schubert} as it does not involve inverse and determinant at all. Next, as $\mO$ is rank deficient and ill-conditioned, computation of inverse and determinant is numerically unstable, while ~\cref{Eq:cos_theta} only involves Trace operation, which is always numerically stable.

\newpage
\section{Inf-SSM Algorithm}
\label{app:inf_ssm_alg}

 In this section, we will discuss the Inf-SSM algorithm in the Continual Learning setting. 
\begin{algorithm}[H]
\caption{\textbf{Inf-SSM State Regularization in EFCIL}}
\label{Alg:Full_FGD}
\textbf{Input:} Frozen old model $f_{T-1}(\cdot;\mW_{T-1})$, current-task dataset $\mathcal{D}_T$, regularization weight $\lambda$, learning rate $\alpha$ \\
\textbf{Output:} Updated model $f_T(\cdot;\mW_T^*)$
\begin{algorithmic}
\For{epoch $= 1, \dots, E$}
    \For{mini-batch $(\mathbf{X}, \mathbf{Y}) \sim \mathcal{D}_T$}

         \State /* \hlgrey{Forward pass in $f_{T-1}(\cdot; \mW_{T-1})$ and $f_{T}(\cdot; \mW_T)$} */

        \State Compute current-task logits $\mathbf{Z}_T \leftarrow f_T(\mathbf{X}; \mW_T)$ 
        
        \State /* \hlgreen{Compute classification loss for current task} */
        
        \State $\ell_{\mathrm{cls}} \leftarrow \Loss{cls}(\mathbf{Z}_T, \mathbf{Y})$

        \State /* \hlyellow{State extraction from past and current model} */
        \State Extract $M_{old}(\tilde{\mA}_{old}, \tilde{\mC}_{old})$ from $f_{T-1}(\cdot; \mW_{T-1})$
        \State Extract $M_{new}(\tilde{\mA}_{new}, \tilde{\mC}_{new})$ from $f_{T}(\cdot; \mW_T)$ 

        \State /* \hlblue{Compute Inf-SSM loss for reg.}*/ \Comment{\cref{Eq:final_loss}}
        \State Let $\Loss{tot}(\mW_T) = \ell_{\mathrm{cls}} + \lambda \Loss{ISM}(M_{old}, M_{new})$ 
        \State Update $\mW_T \leftarrow \mW_T - \alpha \nabla \Loss{tot}(\mW_T)$
    \EndFor
\EndFor
\State $\mW_T^* \leftarrow \mW_T$
\end{algorithmic}
\end{algorithm}

For Inf-SSM, we require the previous task model $f_{T-1}(\cdot; \mW_{T-1})$ and current task data $(\mathbf{x}, \mathbf{y}) \in (\mathcal{X}_{T}, \mathcal{Y}_{T})$ for regularizing the new model $f_{T}(\cdot; \mW_T)$. The current task data acts as a transport medium to extract the states of the previous task model and constrain the new model's weight update.

For every batch of training, we obtain the classification loss as normal. The classification loss is independent of the regularization step and thus can be set with any loss based on a specific training framework. During the forward pass, the Vim block intermediate states $\overline{\mA}, \overline{\mB}, \mC$ are generated and extracted from the old and new model. This forms the input to the Inf-SSM loss function, which will penalize the weight update in the Infinite Observability subspace in the new model. 

\par \textbf{Limitations}
As the intermediate states $\overline{\mA}, \overline{\mB}, \mC$ only exist during the computational scan process in SSM. This caused Inf-SSM to be "scan-breaking" as the intermediate state needed to be recomputed and saved outside of the scan for regularization purposes. This led to an increase in training time and VRAM requirements. However, if intermediate states could be extracted from the CUDA kernel efficiently, the computational speed of Inf-SSM should be similar to that of existing simple CL methods.

\newpage
\section{Additional Related Works}

\subsection{Hybrid Continual Learning Methods}
\label{app:related_work_add}
In the main paper, we discussed foundational EFCIL methods of Elastic Weight Consolidation (EWC)~\cite{kirkpatrick2017EWC}, Synaptic Intelligence (SI)~\cite{zenke2017SI}, and Memory-Aware Synapses 
(MAS)~\cite{aljundi2018MAS} from regularization approach and LwF~\cite{li2017learning_LWF, rebuffi2017icarl_LwFMC}. In this section, we focus on recent EFCIL methods that leverage complex hybrid architectures for better performance.

Self-sustaining representation expansion (SSRE)~\cite{zhu2022self} utilizes dynamic structural reorganization to maintain old features. This is achieved by a dual-branch structure, a main branch for fusion and a side branch for updates to retain past knowledge. The main branch will undergo distillation to transfer shared knowledge across tasks with the help of the prototype selection algorithm to selectively incorporate new knowledge into the main branch. Meanwhile, LDC~\cite{gomez2024exemplar} compensates for semantic drift via a learnable projector network that aligns features across tasks, enabling compatibility with both supervised and semi-supervised settings. At the end of each task, the trained projector is utilized to correct and update the stored prototype. LDC corrects the drift in the prototype to improve performance in EFCIL settings. EFC~\cite{Magistri2024cold} mitigates task-recency bias through prototype regularization and introduces a feature consolidation mechanism based on empirical feature drift. EFC combines the prototype pool replay, distillation, and regularization approach to mitigate catastrophic forgetting. By reducing feature drift, EFC improves the model's ability to learn new knowledge in EFCIL settings. 

While SSRE, LDC, and EFC improve performance in exemplar-free scenarios, they introduce significantly more additional components and complexity compared to foundational CL algorithms like EWC~\cite{kirkpatrick2017EWC}, LwF~\cite{li2017learning_LWF}, and Inf-SSM.

\subsection{Continual Learning in Mamba}
\label{app:mamba_cl}
For CL in vision tasks using SSMs, Mamba-CL~\cite{cheng2024mamba} enhances the stability of SSM outputs across past and current tasks by implementing orthogonality through null-space projection regularization. However, Mamba-CL needs to retain feature embeddings from all SSM modules to maintain consistency conditions for parameter updates. Meanwhile, MambaCL~\cite{ZhaoMAMBACL} incorporates meta-learning techniques to process an online data stream, enabling Mamba to function as a continual learner. MambaCL introduces selective regularization based on Mamba, linear transformers, and transformer connections. Although MambaCL recognizes the input-output relationship in SSMs, it does not account for the long-term evolution of SSM behavior, which is encoded in the extended observability subspace. Mamba-FSCIL~\cite{LiMAMBAFscil} leverages a dual selective SSM projector to learn shifts in feature-space distribution and employs class-sensitive selective scan to improve model stability by reducing inter-class interference. Mamba-FSCIL requires the storage of intermediate feature embeddings from the old tasks distribution to utilize these features to improve the model's stability. From an SSM geometrical perspective, Mamba-FSCIL focuses on input-dependent parameters $\mB, \mC, \Delta$ in class-sensitive selective scan, where the key input state matrix $\mA$ is not leveraged. 

In short, while these recent works achieve impressive results within their respective settings, they are not directly comparable to Inf-SSM, as our focus lies in developing a fundamental continual learning algorithm that is flexible and complementary to mainstream CL methods rather than tailored to a specific scenario.
\newpage
\section{Experiments}
\label{app:Experiment}
\subsection{Datasets}
\label{app:datasets}
We conducted the experiments on four different datasets, namely: (1) \textbf{ImageNet-R}~\cite{hendrycks2021many-imagenetR} with 30,000 images distributed unevenly in 200 classes of renditions of ImageNet~\cite{Jia2009_imagenet}. (2) \textbf{CIFAR-100}~\cite{krizhevsky2009learning_cifar}, a balanced dataset of 100 classes consisting of 50,000 training images. (3) \textbf{Caltech-256} with 30,607 images from 256 classes~\cite{griffin_holub_perona_2022_caltech256}. We considered the first 250 classes for equal task partitioning. Each dataset is partitioned equally in terms of the number of classes across 5-task and 10-task scenarios.

In addition, we also utilized (4) \textbf{CUB-200-2011}~\cite{wah_branson_welinder_perona_belongie_2011_cub} of the extended version 2011 with 11,788 images distributed unevenly across 200 classes for ablation studies purposes.

\subsection{Continual Learning Evaluation Metrics}
\label{app:cl_eval_metric}
The notation of $\mathrm{AA}$ and $\mathrm{AIA}$ for test dataset of task $\mathcal{T}_j$ after training on $\mathcal{T}_k$ tasks where $j\leq k$ are as follow~\cite{2024WangSurvey}:
\begin{equation}
        \mathrm{AA}_k = \frac{1}{k}\sum_{j=1}^{k} a_{k,j},
\end{equation}
\begin{equation}
    \mathrm{AIA}_k = \frac{1}{k}\sum_{i=1}^{k} \mathrm{AA}_i.
\end{equation} 
where $a_{k,j}$ is the accuracy of the test dataset for task $\mathcal{T}_j$ after the model is trained on task $\mathcal{T}_k$. Meanwhile $\mathrm{FM}$ at task $k$ is defined as:
\begin{equation}
    \mathrm{FM}_k = \frac{1}{k-1}\sum^{k-1}_{j=1} \underset{i \in {1, ..., k-1}}{\max}(a_{i,j}-a_{k,j}).
\end{equation}

\subsection{Observability state parameter regularization}
\label{app:obs_param}

In this section, we present our experiment on observability state parameter regularization. For this experiment, all baseline methods, EWC~\cite{kirkpatrick2017EWC}, SI~\cite{zenke2017SI}, and MAS~\cite{aljundi2018MAS} from the regularization-based category and  LwF~\cite{li2017learning_LWF} from distillation methods are applied on weights directly contributing to the formation of state matrices $\mA, \mC$. 

\begin{table*}[h!]
\centering
    \caption{AA$(\%\uparrow)$, AIA$(\%\uparrow)$, and FM$(\%\downarrow)$ of EFCIL methods on Vim-small are reported for ImageNet-R Dataset over 5 Tasks and 10 tasks benchmarks. \textbf{Note:} Regularization focus is on parameter sets $(\mA , \mC)$ among all methods. \underline{Second-best} results are underlined.}
    \label{tab:cl_results_AC}
    \renewcommand{\arraystretch}{0.75}
    \begin{tabular}{l ccc ccc}
      \toprule
      \multirow{2}{*}{Method} &
      \multicolumn{3}{c}{ImageNet-R 5 task} & \multicolumn{3}{c}{ImageNet-R 10 task} \\
      \cmidrule(lr){2-4} \cmidrule(lr){5-7}
      & AA\std{std} & AIA\std{std} & FM\std{std} & AA\std{std} & AIA\std{std} & FM\std{std}  \\
      \midrule
      Seq     & 40.57\std{1.36} & 62.12\std{0.42} & 53.94\std{1.63} & 31.96\std{1.82} & 55.57\std{0.42} & 58.95\std{1.15} \\
      EWC~\cite{kirkpatrick2017EWC}     & 42.25\std{4.02} & 63.55\std{1.17} & 51.20\std{4.79} & \underline{33.92}\std{1.82} & 56.39\std{0.23} & 56.68\std{2.34} \\
      SI~\cite{zenke2017SI}      & 41.78\std{1.64} & 62.76\std{1.10} & 52.17\std{1.63} & 33.85\std{0.64} & 55.29\std{0.27} & 56.59\std{0.77} \\
      MAS~\cite{aljundi2018MAS}     & 40.32\std{0.93} & 62.31\std{0.52} & 53.87\std{1.44} & 33.39\std{0.79} & 55.67\std{0.47} & 57.60\std{1.10} \\
      LwF-AC~\cite{li2017learning_LWF}  & \underline{43.18}\std{1.57} & \underline{65.97}\std{0.45} & \underline{47.66}\std{1.81} & 33.00\std{1.79} & \underline{58.43}\std{0.92} & \underline{54.33}\std{1.95} \\
      \cmidrule(lr){1-7} 
      \rowcolor{gray!15}
      Inf-SSM & \sota{49.34}\std{3.36} & \sota{67.51}\std{1.47} & \sota{25.14}\std{3.86} & \sota{43.82}\std{1.55} & \sota{62.82}\std{1.29} & \sota{36.34}\std{1.54} \\
      \bottomrule
    \end{tabular}
\end{table*}

As shown in~\cref{tab:cl_results_AC}, Inf-SSM achieves superior performance across both benchmarks. Compared to the best baseline, Inf-SSM reduces FM by 47.25\% and 33.11\%, while improving $\mathrm{AA}$ by 14.27\% and 29.19\% for 5- and 10-task settings, respectively. These results demonstrate that, under fair comparisons, Inf-SSM outperforms foundational EFCIL methods owing to its ability to capture the underlying geometry of Linear-Input-Varying SSMs.

\newpage
\section{Additional studies}

\subsection{Centered Kernel Disparity states analysis}
\label{app:ckd_analysis}
Since the evolution of SSM internal states over the task sequence is not well understood, we first analyze their structural changes across EFCIL tasks using similarity measures. A prominent choice is Centered Kernel Alignment (CKA)~\cite{kornblith2019similarity_cka}.
\begin{equation}
    \mathrm{CKA}(\mW_1, \mW_2) = \frac{\mathrm{HSIC}(\mW_1, \mW_2)}{\sqrt{\mathrm{HSIC}(\mW_1, W_1)\mathrm{HSIC}(\mW_2, \mW_2)}}.
\end{equation}
where $\mathrm{HSIC}$ is Hilbert-Schmidt Independence Criterion~\cite{gretton2005measuring}. To achieve positive correlation with forgetting, we redefined the CKA of CL settings as Centered Kernel Disparity (CKD), where
\begin{equation}
    \mathrm{CKD}(\mW_1, \mW_2) = 1 - \mathrm{CKA}(\mW_1, \mW_2).
    \label{Eq:CKD}
\end{equation}
Instead of measuring weights as commonly used, CKD will be utilized to analyze state evolution in CL settings across sequential tasks and across different SSM layers in VIM.

\begin{figure}[H]
    \centering
    \includegraphics[width=0.75\linewidth]{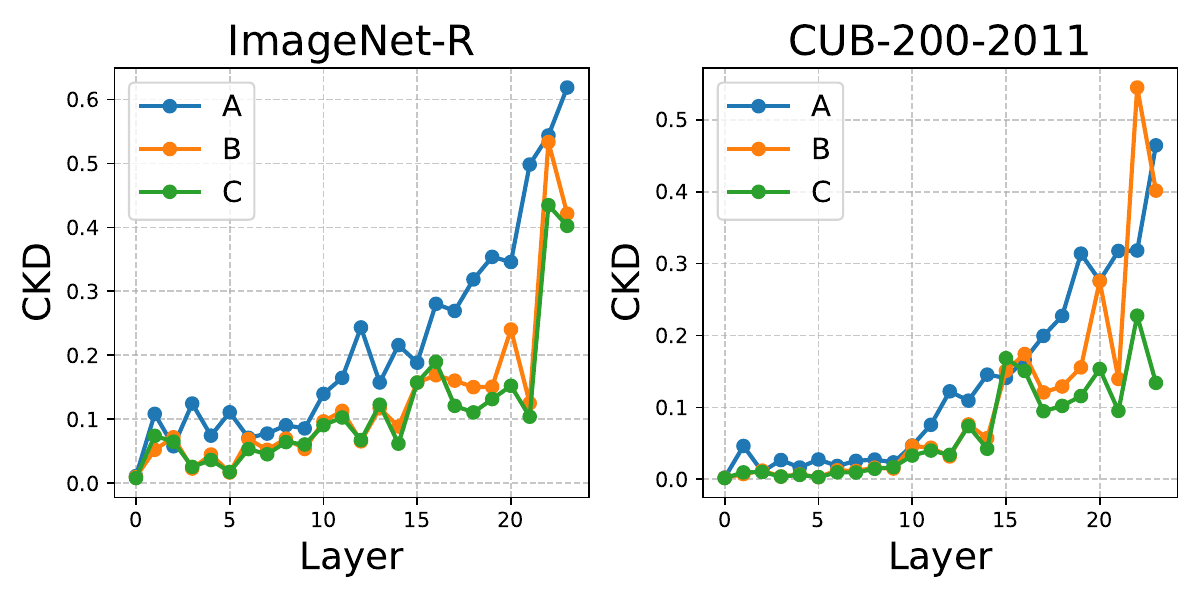}
    \caption{CKD analysis on Vim-small with ImageNet-R and CUB-200-2011 over 10 tasks, EFCIL settings for each of the 24 layers of SSM blocks.}
    \label{fig:ckd_layer}
\end{figure}
\cref{fig:ckd_layer} shows that SSM state changes most at the last few layers. However, regularization is applied across all layers as we deem that small changes in activation in unregulated early layers will be amplified across the subsequent layers and lead to large CF. 

\newpage
\subsection{Inf-SSM ablation studies}
\label{app:ablation}
In this study, we aim to investigate whether applying Inf-SSM to all Vim blocks is necessary. As discussed in \textsection\ref{app:ckd_analysis}, the SSM state matrices in the final Vim block undergo significant changes compared to the earlier blocks. Thus, in this ablation study, we apply Inf-SSM to different numbers of blocks in Vim-small. As presented in \cref{tab:cl_results_ablation}, $\mathrm{AIA}$ and $\mathrm{AA}$ improve with the number of blocks regularized. Interestingly, $\mathrm{FM}$ increases with the number of blocks regularized, as we expect the model stability to improve instead of degrading when more blocks are applied with Inf-SSM.
\begin{table}[ht]
\centering
\setlength{\tabcolsep}{4.5pt}
\caption{Ablation study on how many layers of Inf-SSM need to be applied in 10 tasks ImageNet-R and CUB-200-2011.}
\label{tab:cl_results_ablation}
\begin{tabular}{l ccc ccc}
\toprule
\multirow{2}{*}{Vim Block applied} &
\multicolumn{3}{c}{ImageNet-R} & \multicolumn{3}{c}{CUB-200-2011} \\
\cmidrule(lr){2-4}  \cmidrule(lr){5-7}
& AIA & AA & FM & AIA & AA & FM  \\
\midrule
Final Block & 60.69\std{2.80} & 42.60\std{5.53} & \sota{23.18}\std{4.70} & 28.03\std{1.36} & 11.04\std{1.36} & \sota{1.19}\std{1.39} \\
Final 12 Blocks & 62.66\std{1.78} & 43.24\std{2.74} & 36.16\std{2.98} & 40.16\std{1.38} & 17.56\std{2.13} & 24.20\std{2.45} \\
All 24 Blocks  & \sota{62.82}\std{1.29} & \sota{43.82}\std{1.55} & 36.34\std{1.54} & \sota{42.11}\std{0.83} & \sota{18.97}\std{0.90} & 45.88\std{0.50} \\
\bottomrule
\end{tabular}
\end{table}
This observation can be explained by the fact that regularizing fewer layers means that earlier layers tend to over-regularize to minimize Inf-SSM loss in the regularized layers, leading to lower overall plasticity in the model. Thus, as the initial accuracy when a new task is learned is low, the final $\mathrm{FM}$ decreases as the number of blocks regularized decreases. 

\subsection{Simplified distance performance of Inf-SSM}
In this part, Inf-SSM distance as derived in \cref{Eq:cos_theta} is compared to other distances on the Grassmannian manifold in terms of speed. We randomly sampled $\mA\in\mathbb{R}^{n\times n}$ and $\mC \in\mathbb{R}^{1\times n}$ matrices from a Gaussian distribution with $n=100$. The experiment is conducted on a single A40 40GB VRAM GPU with 10000 Monte Carlo simulation iterations. Four distances are compared: Inf-SSM, Fubini-Study~\cite{ ye2016schubert}, Martin~\cite{ravichandran2012categorizing, huang2017efficient}, Binet-Cauchy~\cite{ ye2016schubert}.
\label{app:dist_speed}
\begin{table}[ht]
    \centering
    \caption{Monte Carlo simulation of 10000 iterations on a single A40 GPU with $\mA\in\mathbb{R}^{100\times 100}$ and $\mC \in \mathbb{R}^{1\times 100}$ for comparisons of computational time between Inf-SSM and other distance metrics in \textsection\ref{app:chordal_equiv}}
    \label{tab:rspeed}
    \begin{tabular}{lll}
      \toprule
      Metric & t (s) & $\pm$Std. Dev. (s) \\
      \midrule
      Martin & 0.0439 &  $8.3e{-3}$ \\
      Fubini-Study & 0.0422 & $5.9e{-3}$ \\
      Binet-Cauchy & 0.0430 & $5.9e{-3}$  \\
      Inf-SSM  & \sota{0.0004} & $\sota{4.5e{-3}}$ \\
      \bottomrule
    \end{tabular}
\end{table}
As shown in ~\cref{tab:rspeed}, Inf-SSM decreases computational time by 99.05\% when compared to Fubini-Study distance with 23.73\% less standard deviation. The benefits of computational speed will be even more significant when taking into account backpropagation during training of Vim, as computation of the gradient over determinant or inverse operations is costly.

\newpage
\subsection{Inf-SSM+}
\label{app:inf_ssm+}
Inf-SSM, as shown in \cref{Eq:ISM} ignores one important aspect of SSM, which is the input mapping defined by $\mB$. This creates an unregularized path in the SSM algorithm and might lead to CF. Thus, we include $\mB$\footnote{Note for readability, $\mB$ refers to $\tilde{\mB}$ derived in \textsection\ref{app:G_s4}.} in the loss by adding a Frobenius norm regularization term. We define this variant of Inf-SSM as Inf-SSM+, more specifically:
\begin{align}
    \label{Eq:ISM+}
    \Loss{ISM+} = \Loss{ISM} + \gamma\mathbb{E}_{\mathcal{D}_T} \big\|\mB_{T-1} - \mB_{T}\big\|_{\text{F}}^2\;.
\end{align}
where $\gamma$ controls the regularization strength on $\mB$. 

\begin{table*}[h!]
\centering

\caption{AA$(\%\uparrow)$, AIA$(\%\uparrow)$, and FM$(\%\downarrow)$ of Inf-SSM and Inf-SSM+ on ImageNet-R, CIFAR-100, and Caltech-256 over 5-Tasks and 10-Tasks scenario on Vim-small.}
\label{tab:cl_results_inf_ssm+}
\setlength{\tabcolsep}{2pt}
\begin{tabular}{l ccc ccc ccc}
\toprule
\multirow{2}{*}{Method} &
\multicolumn{3}{c}{ImageNet-R} & \multicolumn{3}{c}{CIFAR-100} & \multicolumn{3}{c}{Caltech-256} \\
\cmidrule(lr){2-4} \cmidrule(lr){5-7} \cmidrule(lr){8-10}
& AA\std{std} & AIA\std{std} & FM\std{std} 
& AA\std{std} & AIA\std{std} & FM\std{std} 
& AA\std{std} & AIA\std{std} & FM\std{std} \\
\toprule
\multicolumn{10}{c}{5-Tasks Scenario} \\
\toprule

Inf-SSM & \sota{49.34}\std{3.36} & \sota{67.51}\std{1.47} & \sota{25.14}\std{3.86}
        & 45.18\std{2.28} & 67.34\std{1.86} & 36.59\std{3.33}
        & 50.75\std{3.16} & 67.04\std{1.43} & 49.93\std{3.61} \\
Inf-SSM+ & 47.43\std{6.57} & 66.36\std{3.12} & 27.81\std{8.86}
         & \sota{46.87}\std{2.85} & \sota{68.04}\std{2.02} & \sota{31.08}\std{3.64}
         & \sota{51.10}\std{2.68} & \sota{68.17}\std{0.98} & \sota{49.44}\std{3.27} \\
\toprule
\multicolumn{10}{c}{10-Tasks Scenario} \\
\toprule

\multirow{2}{*}{Method} &
\multicolumn{3}{c}{ImageNet-R} & \multicolumn{3}{c}{CIFAR-100} & \multicolumn{3}{c}{Caltech-256} \\
\cmidrule(lr){2-4} \cmidrule(lr){5-7} \cmidrule(lr){8-10}
& AA\std{std} & AIA\std{std} & FM\std{std} 
& AA\std{std} & AIA\std{std} & FM\std{std} 
& AA\std{std} & AIA\std{std} & FM\std{std} \\
\midrule
Inf-SSM & \sota{43.82}\std{1.55} & 62.82\std{1.29} & \sota{36.34}\std{1.54} 
        & \sota{26.53}\std{3.10} & \sota{54.24}\std{1.87} & 24.00\std{1.80}
        & \sota{39.88}\std{2.43} & 62.28\std{2.33} & \sota{55.85}\std{4.05} \\
Inf-SSM+ & 43.62\std{1.48} & \sota{63.29}\std{0.74} & 37.69\std{1.49} 
         & 24.70\std{3.33} & 53.38\std{1.82} & \sota{23.85}\std{2.26}
         & 39.24\std{1.85} & \sota{62.33}\std{1.96} & 56.50\std{3.07} \\
\bottomrule
\end{tabular}
\end{table*}
Averaged over both task splits and all datasets, Inf-SSM+ only achieves a negligible AIA gain of $0.04\%$ and a small FM reduction of $0.19\%$ over Inf-SSM. Meanwhile, Inf-SSM+ incurs a $1.40\%$ drop in AA when compared to Inf-SSM.  Overall, explicitly regularizing $\mB$ offers limited benefit while increasing the computational cost.

\newpage
\subsection{Vim-tiny}
\label{app;vim_tiny}

To validate that Inf-SSM is adaptable to different model sizes, especially on small models, we validated Inf-SSM along with EFCIL baselines of EWC, SI, MAS, and LwF on Vim-tiny. Vim-tiny only has 7M parameters in comparison to 26M of Vim-small~\cite{zhu2024VIM}.

\begin{table*}[h!]
\centering

\caption{AA$(\%\uparrow)$, AIA$(\%\uparrow)$, and FM$(\%\downarrow)$ of EFCIL methods on ImageNet-R 5-Task and 10-Task scenario on \textbf{Vim-tiny}. \textbf{Note:} Regularization focus is on parameter sets $(\mA , \mB, \mC)$ among all methods \textbf{except Inf-SSM}. \underline{Second-best} results are underlined.}
\label{tab:cl_results_vim_tiny}
\renewcommand{\arraystretch}{0.9}
\begin{tabular}{l ccc ccc}
\toprule
\multirow{2}{*}{Method} &
\multicolumn{3}{c}{ImageNet-R 5-task} & \multicolumn{3}{c}{ImageNet-R 10-task}\\
\cmidrule(lr){2-4}  \cmidrule(lr){5-7}
& AA\std{std} & AIA\std{std} & FM\std{std} & AA\std{std} & AIA\std{std} & FM\std{std} \\
\midrule
Seq      & 25.68\std{1.47} & 50.28\std{1.35} & 63.05\std{2.45}
        & 17.58\std{1.34} & 40.07\std{1.21} & 63.00\std{1.35} \\
EWC~\cite{kirkpatrick2017EWC}        & 35.73\std{1.94} & 57.62\std{1.27} & 48.34\std{1.56} 
       & 23.14\std{3.17} & 47.38\std{1.49} & 55.45\std{3.24} \\
SI~\cite{zenke2017SI}        & 25.55\std{0.60} & 50.05\std{1.01} & 63.38\std{0.35}  
       & 17.15\std{1.22} & 39.41\std{1.26} & 64.65\std{1.27} \\
MAS~\cite{aljundi2018MAS}       & 27.94\std{0.85} & 52.47\std{1.24} & 60.15\std{0.38}  
       & 22.41\std{1.48} & 43.43\std{1.19} & 60.23\std{1.76} \\
LwF-ABC~\cite{li2017learning_LWF}       & \underline{37.17}\std{1.99} & \sota{58.87}\std{1.85} & \underline{35.10}\std{2.30} 
       & \underline{26.17}\std{2.33} & \sota{49.25}\std{1.23} & \underline{32.25}\std{5.68} \\
\midrule
\rowcolor{gray!15}
Inf-SSM   & \sota{39.85}\std{1.15}  & \underline{58.74}\std{1.06}  &
\sota{23.33}\std{1.24}       & \sota{27.92}\std{1.34} & \sota{49.25}\std{1.07} & \sota{25.18}\std{1.55} \\
\bottomrule
\end{tabular}
\end{table*}
On average, Inf-SSM outperforms the previous best by $6.95\%$ for $\mathrm{AA}$ and reduces $\mathrm{FM}$ by 27.74\%. These results are consistent with our experiments on Vim-small, where Inf-SSM is particularly effective at retaining past knowledge, and its advantage grows as the number of learned tasks increases. Although LwF-ABC attains a slightly higher $\mathrm{AIA}$ than Inf-SSM, as reported in \cref{tab:cl_results_vim_tiny}, the gap is marginal and well within the standard deviation.

\newpage

\subsection{Additional EFCIL Baseline}
For an even more comprehensive empirical validation, we have adapted and re-implemented the more recent uncertainty-based method UCL~\cite{ahn2019uncertainty} in Vim-small. The comparison below shows that Inf-SSM consistently outperforms UCL across both 5-task and 10-task settings for ImageNet-R, CIFAR-100, and Caltech-256.

\begin{table*}[h]
\centering
\footnotesize
\caption{AA$(\%\uparrow)$, AIA$(\%\uparrow)$, and FM$(\%\downarrow)$ of EFCIL methods on ImageNet-R, CIFAR-100, and Caltech-256 over 5-Tasks and 10-Tasks scenario on Vim-small. \textbf{Note:} Regularization focus is on parameter sets $(\mA , \mB, \mC)$ among all methods \textbf{except Inf-SSM}. \underline{Second-best} results are underlined.}
\label{tab:cl_results_abc_ucl}
\renewcommand{\arraystretch}{1.0}
\setlength{\tabcolsep}{2.5pt}
\begin{tabular}{l ccc ccc ccc}
\toprule
\multirow{2}{*}{Method} &
\multicolumn{3}{c}{ImageNet-R} & \multicolumn{3}{c}{CIFAR-100} & \multicolumn{3}{c}{Caltech-256} \\
\cmidrule(lr){2-4} \cmidrule(lr){5-7} \cmidrule(lr){8-10}
& AA\std{std} & AIA\std{std} & FM\std{std} 
& AA\std{std} & AIA\std{std} & FM\std{std} 
& AA\std{std} & AIA\std{std} & FM\std{std} \\
\toprule
\multicolumn{10}{c}{5-Tasks Scenario} \\
\toprule
Seq & 38.36\std{4.47} & 61.29\std{1.76} & 56.43\std{5.14}
    & 36.68\std{1.66} & 61.25\std{1.41} & 55.00\std{2.33}
    & 37.58\std{1.08} & 60.17\std{0.95} & 71.48\std{1.52} \\
EWC~\cite{kirkpatrick2017EWC} & 45.58\std{2.72} & 65.62\std{1.16} & 47.31\std{3.18}
    & 38.25\std{1.30} & 63.12\std{1.10} & 50.71\std{1.20}
    & 42.93\std{1.64} & 64.27\std{1.31} & 64.30\std{2.24} \\
SI~\cite{zenke2017SI} & 45.72\std{3.04} & 65.18\std{1.48} & 47.21\std{3.34}
   & 37.38\std{1.40} & 61.78\std{1.05} & 53.04\std{1.35}
   & \underline{47.57}\std{0.21} & 65.29\std{1.04} & \underline{57.88}\std{0.43} \\
MAS~\cite{aljundi2018MAS} & 44.70\std{2.77} & 65.59\std{0.79} & 48.23\std{2.92}
    & 37.59\std{1.44} & 61.95\std{1.18} & 53.13\std{1.81}
    & 44.87\std{1.19} & 66.44\std{1.07} & 61.00\std{1.53} \\
\rowcolor{green!15}
UCL~\cite{ahn2019uncertainty} & \underline{48.03}\std{1.15} & \underline{67.19}\std{0.22} & 43.90\std{1.29}
    & 39.48\std{4.95} & 62.62\std{2.89} & 46.52\std{7.15}
    & 44.74\std{2.66} & 65.17\std{0.42} & 62.48\std{3.80} \\
LwF-ABC~\cite{li2017learning_LWF} & 45.09\std{6.58} & 65.69\std{3.17} & \underline{40.77}\std{8.26}
        & \underline{44.62}\std{2.67} & \underline{66.81}\std{1.41} & \underline{38.68}\std{3.77}
        & 46.52\std{2.66} & \underline{66.58}\std{1.08} & 59.03\std{3.43} \\
\midrule
\rowcolor{gray!15}
Inf-SSM & \sota{49.34}\std{3.36} & \sota{67.51}\std{1.47} & \sota{25.14}\std{3.86}
        & \sota{45.18}\std{2.28} & \sota{67.34}\std{1.86} & \sota{36.59}\std{3.33}
        & \sota{50.75}\std{3.16} & \sota{67.04}\std{1.43} & \sota{49.93}\std{3.61} \\
\toprule
\multicolumn{10}{c}{10-Tasks Scenario} \\
\toprule
\multirow{2}{*}{Method} &
\multicolumn{3}{c}{ImageNet-R} & \multicolumn{3}{c}{CIFAR-100} & \multicolumn{3}{c}{Caltech-256} \\
\cmidrule(lr){2-4} \cmidrule(lr){5-7} \cmidrule(lr){8-10}
& AA\std{std} & AIA\std{std} & FM\std{std} 
& AA\std{std} & AIA\std{std} & FM\std{std} 
& AA\std{std} & AIA\std{std} & FM\std{std} \\
\midrule
Seq & 32.95\std{1.71} & 55.36\std{0.70} & 58.30\std{2.19} 
    & 20.58\std{1.01} & 51.37\std{0.35} & 71.49\std{1.20}
    & 24.27\std{1.29} & 51.06\std{1.21} & 79.01\std{1.36} \\
EwC~\cite{kirkpatrick2017EWC} & \underline{41.99}\std{2.28} & 61.78\std{2.24} & 49.02\std{2.10} 
    & 22.20\std{1.11} & 53.46\std{0.35} & 63.92\std{1.28}
    & 28.35\std{0.34} & 56.64\std{0.74} & 72.73\std{0.47} \\
SI~\cite{zenke2017SI} & 41.59\std{1.17} & 61.13\std{1.43} & 46.81\std{1.00} 
   & 20.29\std{0.62} & 49.28\std{1.68} & 38.33\std{1.41}
   & 27.66\std{1.00} & 54.34\std{1.27} & 74.69\std{0.85} \\
MAS~\cite{aljundi2018MAS} & 40.10\std{1.48} & 61.30\std{0.64} & 48.18\std{1.84} 
    & 20.44\std{1.60} & 49.69\std{1.55} & 37.99\std{1.01}
    & 28.15\std{0.79} & 55.25\std{0.70} & 73.50\std{0.75} \\
\rowcolor{green!15}
UCL~\cite{ahn2019uncertainty} & 40.10\std{1.87} & 60.45\std{1.01} & 48.84\std{1.94}
    & 21.71\std{1.19} & 50.35\std{0.50} & 29.16\std{1.00}
    & 33.16\std{3.43} & 57.95\std{1.77} & 69.07\std{4.05} \\
LwF-ABC~\cite{li2017learning_LWF} & 41.85\std{0.82} & \underline{62.63}\std{0.92} & \underline{40.10}\std{0.61} 
        & \underline{24.39}\std{3.25} & \underline{53.48}\std{1.49} & \underline{25.29}\std{2.81}
        & \underline{35.45}\std{1.16} & \underline{59.63}\std{0.74} & \underline{64.32}\std{1.41} \\
\midrule
\rowcolor{gray!15}
Inf-SSM & \sota{43.82}\std{1.55} & \sota{62.82}\std{1.29} & \sota{36.34}\std{1.54} 
        & \sota{26.53}\std{3.10} & \sota{54.24}\std{1.87} & \sota{24.00}\std{1.80}
        & \sota{39.88}\std{2.43} & \sota{62.28}\std{2.33} & \sota{55.85}\std{4.05} \\
\bottomrule
\end{tabular}
\end{table*}

As shown in \cref{tab:cl_results_abc_ucl}, Inf-SSM outperforms UCL in all metric instances. On average, Inf-SSM outperforms UCL~\cite{ahn2019uncertainty} by 13.72\% in $\mathrm{AA}$, 5.00\% in $\mathrm{AIA}$, and 24.43\% in $\mathrm{FM}$.
\newpage

\section{Distance Equivalence on the Grassmannian}
\label{app:chordal_equiv}

In this section, we consider two infinite observability subspaces of an SSM, $\mathcal{S}_1$ and $\mathcal{S}_2$, with principal angles $\theta_1,\dots,\theta_n$ between them. The chordal distance~\cite{dhillon2008constructing_Chordal} is
\begin{align*}
d_{\mathrm{chord}}(\mathcal{S}_1,\mathcal{S}_2)
= \sqrt{\sum_{i=1}^n \sin^2 \theta_i }.
\end{align*}

Although the distances considered below are distinct metrics on the Grassmannian, they are locally equivalent near $\mathcal{S}_1=\mathcal{S}_2$ because they share the same second-order behavior in the principal angles. To show that the Binet--Cauchy, Fubini--Study, and Martin distances are all equivalent to the chordal distance as $\mathcal{S}_2 \to \mathcal{S}_1$, it suffices to evaluate
\begin{align*}
\lim_{\mathcal{S}_2 \to \mathcal{S}_1}
\frac{d_{\mathrm{Gr}}^2}{d_{\mathrm{chord}}^2},
\qquad
d_{\mathrm{Gr}} \in
\{d_{\mathrm{Binet}}, d_{\mathrm{Fubini}}, d_{\mathrm{Martin}}\}.
\end{align*}

Let $\theta=(\theta_1,\dots,\theta_n)$. Since the Taylor expansion of $\sin^2x$ (Maclaurin series centered at $x = 0$) is
\[
\sin^2 x = x^2 + \mathcal{O}(x^4)
\quad \text{as } x\to 0,
\]
we have
\begin{equation}
d_{\mathrm{chord}}^2
= \sum_{i=1}^n \sin^2 \theta_i
= \sum_{i=1}^n \theta_i^2 + \mathcal{O}(\|\theta\|^4).
\label{eq:taylor_chordal}
\end{equation}

\begin{lemma}[Binet--Cauchy distance and chordal distance equivalence]
\label{lem:binet_equiv}
The Binet--Cauchy distance~\cite{ye2016schubert} is
\begin{align*}
d_{\mathrm{Binet}}(\mathcal{S}_1,\mathcal{S}_2)
= \sqrt{1 - \prod_{i=1}^n \cos^2(\theta_i)},
\end{align*}
and $d_{\mathrm{Binet}}$ is equivalent to $d_{\mathrm{chord}}$ as $\mathcal{S}_2 \to \mathcal{S}_1$.
\end{lemma}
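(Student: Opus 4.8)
The plan is to evaluate the limit $\lim_{\mathcal{S}_2 \to \mathcal{S}_1} d_{\text{Binet}}^2 / d_{\text{chord}}^2$ directly under the given parametrization $\theta_i(t) = t$, and show it equals $1$; this certifies that the two metrics are asymptotically equivalent (infinitesimally isometric) as the subspaces coalesce, which is the sense of ``equivalence'' intended here. First I would substitute $\theta_i = t$ into both squared distances to reduce everything to a single variable $t$. The chordal distance collapses to $d_{\text{chord}}^2 = \sum_{i=1}^n \sin^2 t = n \sin^2 t$, while the Binet-Cauchy distance becomes $d_{\text{Binet}}^2 = 1 - \prod_{i=1}^n \cos^2 t = 1 - \cos^{2n} t$. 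The identical-angle assumption is what makes the $n$-fold product collapse to a single power, keeping the computation elementary.

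Next I would expand both expressions around $t = 0$ using the standard small-angle estimates $\sin^2 t = t^2 + O(t^4)$ and $\cos^2 t = 1 - t^2 + O(t^4)$. The chordal term is then $n t^2 + O(t^4)$. For the Binet-Cauchy term, I would apply the binomial expansion to the $n$-th power, giving $\cos^{2n} t = \bigl(1 - t^2 + O(t^4)\bigr)^n = 1 - n t^2 + O(t^4)$, so that $d_{\text{Binet}}^2 = n t^2 + O(t^4)$.

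Forming the ratio $d_{\text{Binet}}^2 / d_{\text{chord}}^2 = \bigl(n t^2 + O(t^4)\bigr) / \bigl(n t^2 + O(t^4)\bigr)$ and letting $t \to 0$ yields the limit $1$, completing the proof. An equally valid alternative is to apply L'Hôpital's rule twice to the quotient $(1 - \cos^{2n} t)/(n \sin^2 t)$, but the Taylor route is cleaner and exposes the matching leading coefficients directly.

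I do not anticipate a genuine obstacle. The only step requiring mild care is the bookkeeping of the $O(t^4)$ remainder when raising $\cos^2 t$ to the $n$-th power: one must confirm that the leading-order coefficient is exactly $n$, so that it cancels the $n$ arising from the chordal sum and the ratio tends to $1$ rather than some other constant. Once this is in place, the same argument template---parametrize by $\theta_i(t)=t$, expand to second order in $t$, and compare leading coefficients---transfers verbatim to the Fubini-Study and Martin cases treated subsequently.
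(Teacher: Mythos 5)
Your proposal is correct and takes essentially the same approach as the paper: the identical parametrization $\theta_i(t)=t$, the same reduction of $d_{\text{Binet}}^2/d_{\text{chord}}^2$ to the single-variable limit $\lim_{t\to 0}\bigl(1-\cos^{2n}t\bigr)/\bigl(n\sin^2 t\bigr)$, with the only difference being that you evaluate this limit by second-order Taylor expansion whereas the paper applies L'H\^{o}pital's rule once. Both evaluations are valid, and indeed the paper itself switches to your Taylor-expansion tactic in its proof of the companion Fubini--Study lemma.
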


\begin{proof}
As $\mathcal{S}_2 \to \mathcal{S}_1$, we have $\theta_i \to 0$ for all $i$, so it is enough to evaluate
\begin{align*}
\lim_{\theta \to 0}
\frac{1 - \prod_{i=1}^n \cos^2(\theta_i)}
{\sum_{i=1}^n \sin^2 \theta_i}.
\end{align*}
Using
\[
\cos^2 x = 1 - x^2 + \mathcal{O}(x^4),
\]
we obtain
\begin{align*}
\prod_{i=1}^n \cos^2(\theta_i)
&=
\prod_{i=1}^n
\left(1-\theta_i^2+\mathcal{O}(\theta_i^4)\right) \\
&=
1-\sum_{i=1}^n \theta_i^2 + \mathcal{O}(\|\theta\|^4).
\end{align*}
Therefore,
\[
1-\prod_{i=1}^n \cos^2(\theta_i)
=
\sum_{i=1}^n \theta_i^2 + \mathcal{O}(\|\theta\|^4).
\]
Combining this with \cref{eq:taylor_chordal},
\begin{align*}
\lim_{\theta \to 0}
\frac{d_{\mathrm{Binet}}^2}{d_{\mathrm{chord}}^2}
=
\lim_{\theta \to 0}
\frac{\sum_{i=1}^n \theta_i^2 + \mathcal{O}(\|\theta\|^4)}
{\sum_{i=1}^n \theta_i^2 + \mathcal{O}(\|\theta\|^4)}
= 1.
\end{align*}
Hence, $d_{\mathrm{Binet}}$ and $d_{\mathrm{chord}}$ are locally equivalent.
\end{proof}

\newpage

\begin{lemma}[Fubini--Study distance and chordal distance equivalence]
\label{lem:fubini_equiv}
The Fubini--Study distance~\cite{ye2016schubert} is
\begin{align*}
d_{\mathrm{Fubini}}(\mathcal{S}_1,\mathcal{S}_2)
=
\cos^{-1}\!\left(\prod_{i=1}^n \cos(\theta_i)\right),
\end{align*}
and $d_{\mathrm{Fubini}}$ is equivalent to $d_{\mathrm{chord}}$ as $\mathcal{S}_2 \to \mathcal{S}_1$.
\end{lemma}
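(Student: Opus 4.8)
The plan is to mirror the template established in the proof of \cref{lem:binet_equiv} for the Binet-Cauchy case: I compute the limiting ratio $\lim_{\mathcal{S}_2 \to \mathcal{S}_1} d_{\text{Fubini}}^2/d_{\text{chord}}^2$ and show it equals $1$, which certifies that the two distances agree to leading order as the subspaces coincide. Using the same parametrization $\theta_i(t) = t$ for all $i$ (so that every principal angle collapses to the origin simultaneously and identically), the chordal distance becomes $d_{\text{chord}}^2 = n\sin^2(t)$ and the Fubini-Study distance becomes $d_{\text{Fubini}} = \cos^{-1}(\cos^n(t))$. The problem therefore reduces to the single-variable limit
\begin{align*}
    \lim_{t \to 0} \frac{\bigl(\cos^{-1}(\cos^n(t))\bigr)^2}{n\sin^2(t)}\;.
\end{align*}

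Both numerator and denominator vanish as $t \to 0$, so the natural move is to apply L'Hopital's rule. Differentiating the numerator by the chain rule introduces the factor $\cos^{-1}(\cos^n(t))\cdot \frac{n\cos^{n-1}(t)\sin(t)}{\sqrt{1-\cos^{2n}(t)}}$, and after cancelling the common $\sin(t)\cos(t)$ against the derivative $2n\sin(t)\cos(t)$ of the denominator, the ratio simplifies to
\begin{align*}
    \lim_{t\to 0}\frac{\cos^{-1}(\cos^n(t))\,\cos^{n-2}(t)}{\sqrt{1-\cos^{2n}(t)}}\;.
\end{align*}
Since $\cos^{n-2}(t) \to 1$, the remaining work is to evaluate $\lim_{t\to 0}\cos^{-1}(\cos^n(t))/\sqrt{1-\cos^{2n}(t)}$, which is again an indeterminate $0/0$ form. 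This nested indeterminacy, absent in the Binet-Cauchy case because no $\arccos$ appears there, is the main obstacle.

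I would resolve it by one further application of L'Hopital: the derivatives of $\cos^{-1}(\cos^n(t))$ and $\sqrt{1-\cos^{2n}(t)}$ produce the factors $n\cos^{n-1}(t)\sin(t)/\sqrt{1-\cos^{2n}(t)}$ and $n\cos^{2n-1}(t)\sin(t)/\sqrt{1-\cos^{2n}(t)}$ respectively, whose quotient collapses to $\cos^{-n}(t)\to 1$. Combining the two stages yields the overall limit $1$, establishing the equivalence. An alternative and arguably cleaner route that avoids the double L'Hopital is to substitute the small-angle expansions $\cos^n(t) = 1 - \tfrac{n}{2}t^2 + o(t^2)$ and $\cos^{-1}(1-x) = \sqrt{2x}\,(1+o(1))$, which give $\cos^{-1}(\cos^n(t)) = \sqrt{n}\,t\,(1+o(1))$ and $d_{\text{chord}} = \sqrt{n}\,t\,(1+o(1))$, so the ratio tends to $1$ by inspection; I would nonetheless present the L'Hopital version to stay stylistically consistent with \cref{lem:binet_equiv}.
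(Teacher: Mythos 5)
Your proposal is correct, but your primary route is genuinely different from the paper's. Faced with the same reduced limit $\lim_{t \to 0}\bigl(\cos^{-1}(\cos^{n}(t))\bigr)^2/\bigl(n\sin^2(t)\bigr)$, the paper explicitly declines L'Hopital as too tedious because of the $\arccos$ term and instead uses small-angle expansions --- $\cos^n(t) \approx 1 - \tfrac{n}{2}t^2$ via the binomial series and $\cos^{-1}\!\bigl(1-\tfrac{n t^2}{2}\bigr) \approx \sqrt{n}\,t$ --- which is exactly the alternative you sketch at the end. Your main argument demonstrates that the L'Hopital route the paper avoided is in fact tractable: both differentiations check out (the first application yields $\cos^{-1}(\cos^n(t))\cos^{n-2}(t)/\sqrt{1-\cos^{2n}(t)}$, and in the second the common factor $n\sin(t)/\sqrt{1-\cos^{2n}(t)}$ cancels, leaving $\cos^{n-1}(t)/\cos^{2n-1}(t) = \cos^{-n}(t) \to 1$), and the nested structure is sound provided the justification runs from the inner application outward, since L'Hopital transfers a limit only once the derivative quotient's limit is known to exist --- a point worth stating explicitly if you write this up. As for what each approach buys: your double L'Hopital avoids relying on the behavior of $\cos^{-1}(1-x)$ near $x = 0$, where $\arccos$ is not differentiable and the expansion proceeds in powers of $\sqrt{x}$ rather than as a genuine Taylor series --- a point the paper handles informally and where its displayed expansion in fact contains a typo ($\sqrt{2}\,x$ where $\sqrt{2x}$ is meant, and $\cos^{2n}(t)$ where substituting $\theta_i = t$ into $\prod_{i}\cos(\theta_i)$ gives $\cos^{n}(t)$); your version of the expansion, $\cos^{-1}(1-x) = \sqrt{2x}\,(1+o(1))$, states it correctly. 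Conversely, the paper's Taylor route is shorter once that expansion is granted and sidesteps verifying L'Hopital's hypotheses twice, which is presumably why the authors chose it.
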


\begin{proof}
As $\mathcal{S}_2 \to \mathcal{S}_1$, we evaluate
\begin{align*}
\lim_{\theta \to 0}
\frac{\left[\cos^{-1}\!\left(\prod_{i=1}^n \cos(\theta_i)\right)\right]^2}
{\sum_{i=1}^n \sin^2 \theta_i}.
\end{align*}
Using the Taylor series expansion (Maclaurin series centered at $x = 0$),
\[
\cos x = 1-\frac{x^2}{2}+\mathcal{O}(x^4),
\]
we have
\begin{align*}
\prod_{i=1}^n \cos(\theta_i)
&=
\prod_{i=1}^n
\left(1-\frac{\theta_i^2}{2}+\mathcal{O}(\theta_i^4)\right) \\
&=
1-\frac{1}{2}\sum_{i=1}^n \theta_i^2 + \mathcal{O}(\|\theta\|^4).
\end{align*}
Let
\[
\delta
=
1-\prod_{i=1}^n \cos(\theta_i)
=
\frac{1}{2}\sum_{i=1}^n \theta_i^2 + \mathcal{O}(\|\theta\|^4).
\]
Since
\[
\cos^{-1}(1-\delta)
=
\sqrt{2\delta} + \mathcal{O}(\delta^{3/2}),
\]
, which is derived from the Puiseux series, it follows that
\begin{equation}
\left[\cos^{-1}\!\left(\prod_{i=1}^n \cos(\theta_i)\right)\right]^2
=
2\delta + \mathcal{O}(\delta^2)
=
\sum_{i=1}^n \theta_i^2 + \mathcal{O}(\|\theta\|^4).
\label{eq:taylor_arccos}
\end{equation}
Using \cref{eq:taylor_chordal,eq:taylor_arccos}, we conclude that
\begin{align*}
\lim_{\theta \to 0}
\frac{d_{\mathrm{Fubini}}^2}{d_{\mathrm{chord}}^2}
=
\lim_{\theta \to 0}
\frac{\sum_{i=1}^n \theta_i^2 + \mathcal{O}(\|\theta\|^4)}
{\sum_{i=1}^n \theta_i^2 + \mathcal{O}(\|\theta\|^4)}
= 1.
\end{align*}
Hence, $d_{\mathrm{Fubini}}$ and $d_{\mathrm{chord}}$ are locally equivalent.
\end{proof}

\newpage

\begin{lemma}[Martin distance and chordal distance equivalence]
\label{lem:martin_equiv}
The Martin distance~\cite{martin2000metric,de2002subspace} is
\begin{align*}
d_{\mathrm{Martin}}(\mathcal{S}_1,\mathcal{S}_2)
=
\sqrt{-\log \prod_{i=1}^n \cos^2 \theta_i},
\end{align*}
and $d_{\mathrm{Martin}}$ is equivalent to $d_{\mathrm{chord}}$ as $\mathcal{S}_2 \to \mathcal{S}_1$.
\end{lemma}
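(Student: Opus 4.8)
The plan is to mirror the structure of the two preceding equivalence lemmas and evaluate the limiting ratio of the squared distances. Following the same parametrization $\theta_i(t) = t$ used in Lemmas~\ref{lem:binet_equiv} and~\ref{lem:fubini_equiv}, I would first reduce the product over principal angles to a single power. Since $\prod_{i=1}^n \cos^2\theta_i = \cos^{2n}(t)$, the squared Martin distance becomes $-\log\cos^{2n}(t) = -2n\log\cos(t)$, while the squared chordal distance is $\sum_{i=1}^n \sin^2\theta_i = n\sin^2(t)$. Hence the quantity to evaluate is
\begin{align*}
    \lim_{t \rightarrow 0}\frac{-2n\log\cos(t)}{n \sin^2(t)} = \lim_{t \rightarrow 0}\frac{-2\log\cos(t)}{\sin^2(t)}\;,
\end{align*}
which is an indeterminate $0/0$ form as $t \to 0$.

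To resolve it I would use the Taylor-series approach, as in the Fubini-Study proof, since the nested logarithm makes repeated L'Hopital differentiation slightly awkward. Expanding $\cos(t) = 1 - \tfrac{t^2}{2} + \mathcal{O}(t^4)$ and composing with $\log(1+x) = x + \mathcal{O}(x^2)$ gives $\log\cos(t) = -\tfrac{t^2}{2} + \mathcal{O}(t^4)$, so the numerator is $-2\log\cos(t) = t^2 + \mathcal{O}(t^4)$. The denominator satisfies $\sin^2(t) = t^2 + \mathcal{O}(t^4)$ by the same expansion already invoked earlier in the excerpt. Substituting both approximations,
\begin{align*}
    \lim_{t \rightarrow 0}\frac{t^2 + \mathcal{O}(t^4)}{t^2 + \mathcal{O}(t^4)} = 1\;,
\end{align*}
which establishes the claimed equivalence.

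As an alternative worth noting, a single application of L'Hopital's rule also works cleanly here: differentiating gives $\tfrac{2\tan(t)}{\sin(2t)} = \tfrac{1}{\cos^2(t)} \to 1$, so either route closes the argument. I do not anticipate a genuine obstacle in this lemma; the only mild subtlety is justifying the composition of the two series (the logarithm applied to the cosine expansion), which is routine since $\cos(t) \to 1$ keeps the argument of $\log$ inside its radius of convergence near $t=0$. The real content is identical in spirit to the previous two lemmas, confirming that near the diagonal $\mathcal{S}_2 \to \mathcal{S}_1$ all these Grassmannian distances agree to leading order with the chordal distance used throughout the paper.

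\begin{proof}
As $\mathcal{S}_2 \rightarrow \mathcal{S}_1$, all $\theta_i \rightarrow 0$, so we evaluate
\begin{align*}
    \lim_{\theta_i \rightarrow 0}\frac{-\log\prod^n_{i=1}\cos^2\theta_i}{\sum^n_{i=1}\sin^2\theta_i}\;.
\end{align*}
Using $\theta_i(t) = t$, this simplifies to
\begin{align*}
    \lim_{t \rightarrow 0}\frac{-2n\log\cos(t)}{n \sin^2(t)} = \lim_{t \rightarrow 0}\frac{-2\log\cos(t)}{\sin^2(t)}\;.
\end{align*}
Expanding $\cos(t) = 1 - \tfrac{t^2}{2} + \mathcal{O}(t^4)$ and $\log(1+x) = x + \mathcal{O}(x^2)$ gives $-2\log\cos(t) = t^2 + \mathcal{O}(t^4)$, while $\sin^2(t) = t^2 + \mathcal{O}(t^4)$. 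Therefore
\begin{align*}
    \lim_{t \rightarrow 0}\frac{t^2 + \mathcal{O}(t^4)}{t^2 + \mathcal{O}(t^4)} = 1\;,
\end{align*}
which proves that the Martin Distance and the Chordal Distance are equivalent.
\end{proof}
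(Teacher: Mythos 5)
Your proof is correct and follows essentially the same route as the paper: identical parametrization $\theta_i(t)=t$, the same reduction to $\lim_{t\to 0} \frac{-2n\log\cos(t)}{n\sin^2(t)}$, differing only in that you resolve the indeterminate form by Taylor expansion (the technique the paper itself uses for the Fubini--Study lemma) rather than by a single application of L'Hopital's rule, which is the paper's choice here. Your aside noting the L'Hopital alternative, $\frac{2\tan(t)}{\sin(2t)} = \frac{1}{\cos^2(t)} \to 1$, is in fact exactly the paper's computation, so the two arguments coincide.
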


\begin{proof}
As $\mathcal{S}_2 \to \mathcal{S}_1$, we evaluate
\begin{align*}
\lim_{\theta \to 0}
\frac{-\log \prod_{i=1}^n \cos^2 \theta_i}
{\sum_{i=1}^n \sin^2 \theta_i}.
\end{align*}
Using
\[
\log \prod_{i=1}^n \cos^2 \theta_i
=
\sum_{i=1}^n \log(\cos^2 \theta_i),
\]
together with Taylor series expansion of $\cos^2 x$ (Maclaurin series at $x=0$),
\[
\log(\cos^2 x) = -x^2 + \mathcal{O}(x^4)
\quad \text{as } x\to 0,
\]
we obtain
\begin{equation}
-\log \prod_{i=1}^n \cos^2 \theta_i
=
\sum_{i=1}^n \theta_i^2 + \mathcal{O}(\|\theta\|^4).
\label{eq:taylor_martin}
\end{equation}
Combining \cref{eq:taylor_martin,eq:taylor_chordal}, we get
\begin{align*}
\lim_{\theta \to 0}
\frac{d_{\mathrm{Martin}}^2}{d_{\mathrm{chord}}^2}
=
\lim_{\theta \to 0}
\frac{\sum_{i=1}^n \theta_i^2 + \mathcal{O}(\|\theta\|^4)}
{\sum_{i=1}^n \theta_i^2 + \mathcal{O}(\|\theta\|^4)}
= 1.
\end{align*}
Hence, $d_{\mathrm{Martin}}$ and $d_{\mathrm{chord}}$ are locally equivalent.
\end{proof}
\newpage
\section{Additional implementation details}

\subsection{Baselines} 
\label{app:baselines}
Our objective is to evaluate Inf-SSM under a continual learning (CL) protocol that does not rely on CNN- or Transformer-specific architectural assumptions, so that both Inf-SSM and all baselines can be instantiated fairly on the Vim backbone. Accordingly, we require baselines that:
\begin{enumerate}
    \item Represent the main CL paradigms used in CIL or EFCIL.
    \item Can be adapted to the Vim backbone without ad hoc, architecture-specific redesign.
\end{enumerate}
We therefore adopt the following baselines, all implemented on Vim-Small under a shared training protocol (datasets, task splits, and metrics are described in \textsection\ref{sec:experiments}, \textsection\ref{app:cl_eval_metric}, and \textsection\ref{app:add_implement}).

\noindent For replay-based methods:
\begin{itemize}
    \item \textbf{ER}~\cite{riemer2018learning}is a canonical replay-based baseline. It measures how much Inf-SSM can further reduce forgetting when explicit rehearsal is allowed.
    \item \textbf{LUCIR}~\cite{hou2019learning} and \textbf{X-DER}~\cite{boschini2022class} are strong hybrid methods that combine replay with regularization or contrastive mechanisms. They are widely used in CIL evaluations and indicate whether Inf-SSM still brings gains on top of competitive replay-regularization pipelines.
    \item \textbf{L2P-R}~\cite{wang2022learning} is a prompt-based method adapted to Vim-Small (see \textsection\ref{app:l2p_ssm}) to test compatibility of Inf-SSM with token-level adaptation strategies in SSM architectures.
    \item \textbf{CLFD}~\cite{liu2024continual} is a frequency-domain method representing the latest CL designs. We include it to show Inf-SSM’s benefit even when features are transformed into alternative domains.
\end{itemize}

\noindent For EFCIL methods:
\begin{itemize}
    \item \textbf{EWC}~\cite{kirkpatrick2017EWC} serves as a canonical example of sensitivity-based regularization. 
    \item \textbf{SI}~\cite{zenke2017SI} serves as a baseline for the synaptic-level importance approach. 
    \item \textbf{MAS}~\cite{aljundi2018MAS} offers comparison against methods based on Hebbian learning theory.
    \item \textbf{LwF}~\cite{li2017learning_LWF} is adapted to distill SSM state using the Frobenius norm applied explicitly on $(\mA, \mC)$ for LwF-AC and $(\mA, \mB, \mC)$ for LwF-ABC. LwF serves as a cornerstone distillation-based method that explicitly regularizes SSM states via the Frobenius norm and provides strong evidence for the importance of a $\mathbf{P}$-equivalence-aware distance measure in SSMs.
\end{itemize}

\noindent Together, these baselines span a wide range of CL families and are heavily used in prior CIL or EFCIL literature. We intentionally exclude complex hybrid or prototype-based EFCIL methods in our isolation test because they either violate the EFCIL assumptions (e.g., by storing prototypes or intermediate features) or require heavy, architecture-specific modifications that are not directly compatible with a clean Vim-SSM instantiation. More in-depth discussions on several such methods are included in \textsection\ref{app:related_work_add}.

Regarding Mamba-based CL methods like MambaCL~\cite{ZhaoMAMBACL}, Mamba-CL~\cite{cheng2024mamba}, and Mamba-FSCIL~\cite{LiMAMBAFscil} (see \textsection\ref{app:mamba_cl} for details), these works focus on scenario-specific goals (e.g., online CL, few-shot class-incremental learning, task-conditional adaptation) and do not exploit the rich geometrical structures of SSMs. We thus view them as future integration targets that are orthogonal to our research question, and not direct baselines for validating our core claim. 

\noindent\textbf{Summary.} Our baseline set is chosen to cover prominent CL methods under a unified SSM backbone and evaluation protocol, while avoiding methods whose assumptions (stored features, heavily modified architectures, or different CL scenarios) are misaligned with the problem setting and goals of our work. Under these representative baselines, Inf-SSM consistently reduces forgetting and improves accuracy, supporting our claim that geometry-aware observability regularization is an effective and broadly compatible CL regularization algorithm.

\newpage
\subsection{Hyperparameters and Compute}
\label{app:add_implement}

For all experiments, we run on seeds 0, 10, and 100 with the same set of hyperparameters as shown below.  All RGB images are resized to $224\times224$ before training and evaluations. For all datasets, we split each into 5 and 10 sequential tasks, where each task has an equal number of classes sampled from the corresponding datasets. For backbone, we utilized Vim-small~\cite{zhu2024VIM} and kept all the hyperparameters in Vim as default unless mentioned otherwise.

The experiments are conducted on various machines available, which are A5500 GPU, A40 GPU, A100 GPU, and H100 GPU. All experiments are conducted on a single GPU only without distributed training.
\begin{table}[ht]
    \centering
    \caption{Hyperparameters for VIM-Small model on 5-task continual learning benchmark. All regularization coefficients ($\lambda$) are shown in base units. Learning rates (LR) follow cosine decay schedules.}
    \label{tab:vim_hyperparams_5task}
    \begin{tabular}{lccc}
        \toprule
        \textbf{Hyperparameter} & \textbf{ImageNet-R} & \textbf{CIFAR-100} & \textbf{Caltech-256} \\
        \midrule
        Batch Size & 128 & 128 & 128 \\
        Training Epochs & 40 & 40 & 40 \\
        Base LR & $5.00\times10^{-4}$ & $1.00\times10^{-5}$ & $1.00\times10^{-4}$ \\
        Warmup LR & $1.00\times10^{-4}$ & $1.00\times10^{-6}$ & $1.00\times10^{-4}$ \\
        Minimum LR & $1.00\times10^{-5}$ & $1.00\times10^{-7}$ & $1.00\times10^{-5}$ \\
        Task LR Scaling & $2.50\times10^{-1}$ & $5.00\times10^{-1}$ & $5.00\times10^{-1}$ \\
        Weight Decay & $1.00\times10^{-1}$ & $1.00\times10^{-1}$ & $1.00\times10^{-1}$ \\
        \midrule
        EWC-E-$\lambda$ & $2.00\times10^{2}$ & $2.50\times10^{3}$ & $1.00\times10^{4}$ \\
        EWC-$\gamma$ & $7.50\times10^{-1}$ & $7.50\times10^{-1}$ & $7.50\times10^{-1}$ \\
        SI-$C$ & $1.00\times10^{3}$ & $1.00\times10^{5}$ & $5.00\times10^{4}$ \\
        SI-$\xi$ & $9.00\times10^{-1}$ & $9.00\times10^{-1}$ & $9.00\times10^{-1}$ \\
        MAS-$\lambda$ & $1.00\times10^{1}$ & $1.00\times10^{1}$ & $1.00\times10^{2}$ \\
        MSE-$\gamma$ & $1.00\times10^{2}$ & $5.00\times10^{1}$ & $1.00\times10^{2}$ \\
        MSE-$\lambda$ & $5.00\times10^{2}$ & $2.50\times10^{2}$ & $5.00\times10^{2}$ \\
        \midrule
        Inf-SSM-$\lambda$ & $1.00\times10^{6}$ & $2.00\times10^{5}$ & $2.50\times10^{6}$ \\
        Inf-SSM+ -$\lambda$ & $1.00\times10^{6}$ & $2.00\times10^{3}$ & $2.50\times10^{6}$ \\
        Inf-SSM+ -$\gamma$ & $1.00\times10^{0}$ & $1.00\times10^{2}$ & $1.00\times10^{2}$ \\
        \bottomrule
    \end{tabular}
\end{table}

\begin{table}[ht]
    \centering
    \caption{Hyperparameters for VIM-Small model on 10-task continual learning benchmark. Configuration follows same conventions as Table~\ref{tab:vim_hyperparams_5task}.}
    \label{tab:vim_hyperparams_10task}
    \begin{tabular}{lccc}
        \toprule
        \textbf{Hyperparameter} & \textbf{ImageNet-R} & \textbf{CIFAR-100} & \textbf{Caltech-256} \\
        \midrule
        Batch Size & 128 & 128 & 128 \\
        Training Epochs & 40 & 40 & 40 \\
        Base LR & $5.00\times10^{-4}$ & $1.00\times10^{-5}$ & $1.00\times10^{-4}$ \\
        Warmup LR & $1.00\times10^{-4}$ & $1.00\times10^{-6}$ & $1.00\times10^{-5}$ \\
        Minimum LR & $1.00\times10^{-5}$ & $1.00\times10^{-7}$ & $1.00\times10^{-5}$ \\
        Task LR Scaling & $2.50\times10^{-1}$ & $5.00\times10^{-1}$ & $5.00\times10^{-1}$ \\
        Weight Decay & $1.00\times10^{-1}$ & $1.00\times10^{-1}$ & $1.00\times10^{-1}$ \\
        \midrule
        EWC-$\lambda$ & $5.00\times10^{2}$ & $1.00\times10^{4}$ & $1.00\times10^{3}$ \\
        EWC-$\gamma$ & $7.50\times10^{-1}$ & $7.50\times10^{-1}$ & $7.50\times10^{-1}$ \\
        SI-$c$ & $5.00\times10^{4}$ & $5.00\times10^{4}$ & $5.00\times10^{4}$ \\
        SI-$\xi$ & $9.00\times10^{-1}$ & $8.00\times10^{-1}$ & $9.00\times10^{-1}$ \\
        MAS-$\lambda$ & $1.00\times10^{2}$ & $1.00\times10^{1}$ & $1.00\times10^{2}$ \\
        MSE-$\gamma$ & $1.00\times10^{2}$ & $1.00\times10^{1}$ & $5.00\times10^{2}$ \\
        MSE-$\lambda$ & $5.00\times10^{2}$ & $5.00\times10^{1}$ & $5.00\times10^{2}$ \\
        \midrule
        Inf-SSM-$\lambda$ & $2.50\times10^{5}$ & $3.00\times10^{4}$ & $2.50\times10^{6}$ \\
        Inf-SSM+ -$\lambda$ & $1.50\times10^{5}$ & $2.00\times10^{2}$ & $1.00\times10^{4}$ \\
        Inf-SSM+ -$\gamma$ & $1.00\times10^{2}$ & $1.00\times10^{1}$ & $1.00\times10^{1}$ \\
        \bottomrule
    \end{tabular}
\end{table}

\newpage

\begin{table}[ht]
    \centering
    \caption{Hyperparameters for ER, LUCIR, X-DER, L2P-R, and CLFD methods integration tests with Inf-SSM for ImageNet-R 5 tasks setting. Configuration includes both general training settings and method-specific parameters.}
    \label{tab:buf_hyperparams_methods_imnet_5tasks}
    \begin{tabular}{lccccc}
        \toprule
        \textbf{Hyperparameter} & \textbf{ER} & \textbf{LUCIR} & \textbf{X-DER} & \textbf{L2P-R}  & \textbf{CLFD} \\
        \midrule
        Batch Size & 128 & 128 & 32 & 32 & 64 \\
        Training Epochs & 40 & 40 & 20 & 20 & 20  \\
        Base LR & $5.00\times10^{-4}$ & $5.00\times10^{-4}$ & $5.00\times10^{-4}$ & $5.00\times10^{-4}$ & $5.00\times10^{-4}$ \\
        Warmup LR & $1.00\times10^{-4}$ & $1.00\times10^{-4}$ & $1.00\times10^{-4}$  & $1.00\times10^{-4}$ & $1.00\times10^{-4}$\\
        Minimum LR & $1.00\times10^{-4}$ & $1.00\times10^{-4}$ & $1.00\times10^{-4}$ & $1.00\times10^{-5}$  & $1.00\times10^{-5}$  \\
        Task LR Scaling & $2.50\times10^{-1}$ & $2.50\times10^{-1}$ & $2.50\times10^{-1}$ & $2.50\times10^{-1}$  & $2.50\times10^{-1}$ \\
        Weight Decay & $1.00\times10^{-1}$ & $1.00\times10^{-1}$ & $1.00\times10^{-1}$ & $1.00\times10^{-1}$ & $1.00\times10^{-1}$\\
        Buffer Size & $5.00\times10^{2}$ & $5.00\times10^{2}$ & $1.00\times10^{3}$ & $1.00\times10^{3}$  & $1.00\times10^{3}$ \\
        \midrule
        LUCIR-$\lambda_{\text{base}}$ & -- & $5.00\times10^{-1}$ & -- & -- & --\\
        LUCIR-$\lambda_{\text{MR}}$ & -- & $1.00\times10^{-1}$ & -- & -- & -- \\
        LUCIR-$K_{\text{MR}}$ & -- & $2.00\times10^{0}$ & -- & -- & -- \\
        LUCIR-MR Margin & -- & $5.00\times10^{-2}$ & -- & -- & -- \\
        \midrule
        X-DER-$\gamma$ & -- & -- & $8.50\times10^{-1}$ & -- & -- \\
        X-DER-Temp & -- & -- & $7.00\times10^{-2}$  & -- & -- \\
        X-DER-Base Temp & -- & -- & $7.00\times10^{-2}$ & -- & -- \\
        X-DER-$\alpha$ & -- & -- & $3.00\times10^{-1}$ & -- & -- \\
        X-DER-$\beta$ & -- & -- & $1.80\times10^{0}$ & -- & -- \\
        X-DER-SimCLR Batch Size & -- & -- & $3.20\times10^{1}$ & -- & -- \\
        X-DER-SimCLR Num Augs & -- & -- & $2.00\times10^{0}$ & -- & -- \\
        X-DER-$\lambda$ & -- & -- & $5.00\times10^{-2}$ & -- & -- \\
        X-DER-dp Weight & -- & -- & $1.00\times10^{-1}$ & -- & -- \\
        X-DER-Contr Margin & -- & -- & $3.00\times10^{-1}$ & -- & -- \\
        X-DER-Constr $\eta$ & -- & -- & $1.00\times10^{-1}$ & -- & -- \\
        X-DER-Future Constr & -- & -- & $1.00\times10^{0}$ & -- & -- \\
        X-DER-Part Constr & -- & -- & $0.00\times10^{0}$ & -- & -- \\
        \midrule
        L2P-R-Pull-Constr & -- & -- & -- & $1.00\times10^{-1}$ & --  \\
        \midrule
        Inf-SSM-$\lambda$ & $1.50\times10^{5}$ & $5.00\times10^{2}$ & $5.00\times10^{2}$ & $5.00\times10^{3}$ & $5.00\times10^{3}$ \\
        \bottomrule
    \end{tabular}
\end{table}

\begin{table}[ht]
    \centering
    \caption{Hyperparameters for ER, LUCIR, X-DER, L2P-R, and CLFD methods integration tests with Inf-SSM for ImageNet-R 10 tasks setting. Configuration includes both general training settings and method-specific parameters.}
    \label{tab:buf_hyperparams_methods_imnet_10tasks}
    \begin{tabular}{lccccc}
        \toprule
        \textbf{Hyperparameter} & \textbf{ER} & \textbf{LUCIR} & \textbf{X-DER} & \textbf{L2P-R}  & \textbf{CLFD} \\ \\
        \midrule
        Batch Size & 128 & 128 & 32 & 32 & 64 \\
        Training Epochs & 40 & 40 & 20 & 20 & 20  \\
        Base LR & $5.00\times10^{-4}$ & $5.00\times10^{-4}$ & $5.00\times10^{-4}$  &  $1.00\times10^{-4}$ &  $1.00\times10^{-4}$\\
        Warmup LR & $1.00\times10^{-4}$ & $1.00\times10^{-4}$ & $1.00\times10^{-4}$ & $1.00\times10^{-5}$  &  $1.00\times10^{-4}$\\
        Minimum LR & $1.00\times10^{-4}$ & $1.00\times10^{-4}$ & $1.00\times10^{-4}$ & $1.00\times10^{-5}$ & $1.00\times10^{-5}$\\
        Task LR Scaling & $2.50\times10^{-1}$ & $2.50\times10^{-1}$ & $2.50\times10^{-1}$ & $5.00\times10^{-1}$ & $2.50\times10^{-1}$ \\
        Weight Decay & $1.00\times10^{-1}$ & $1.00\times10^{-1}$ & $1.00\times10^{-1}$  & $1.00\times10^{-1}$ & $1.00\times10^{-1}$ \\
        Buffer Size & $5.00\times10^{2}$ & $5.00\times10^{2}$ & $1.00\times10^{3}$ & $1.00\times10^{3}$ & $1.00\times10^{3}$\\
        \midrule
        LUCIR-$\lambda_{\text{base}}$ & -- & $5.00\times10^{-1}$ & -- & -- & -- \\
        LUCIR-$\lambda_{\text{MR}}$ & -- & $1.00\times10^{-1}$ & -- & -- & -- \\
        LUCIR-$K_{\text{MR}}$ & -- & $2.00\times10^{0}$ & -- & -- & -- \\
        LUCIR-MR Margin & -- & $5.00\times10^{-2}$ & -- & -- & -- \\
        \midrule
        X-DER-$\gamma$ & -- & -- & $8.50\times10^{-1}$ & -- & -- \\
        X-DER-Temp & -- & -- & $7.00\times10^{-2}$ & -- & -- \\
        X-DER-Base Temp & -- & -- & $7.00\times10^{-2}$ & -- & -- \\
        X-DER-$\alpha$ & -- & -- & $3.00\times10^{-1}$ & -- & -- \\
        X-DER-$\beta$ & -- & -- & $1.80\times10^{0}$ & -- & -- \\
        X-DER-SimCLR Batch Size & -- & -- & $3.20\times10^{1}$ & -- & -- \\
        X-DER-SimCLR Num Augs & -- & -- & $2.00\times10^{0}$ & -- & -- \\
        X-DER-$\lambda$ & -- & -- & $5.00\times10^{-2}$ & -- & -- \\
        X-DER-dp Weight & -- & -- & $1.00\times10^{-1}$ & -- & -- \\
        X-DER-Contr Margin & -- & -- & $3.00\times10^{-1}$ & -- & -- \\
        X-DER-Constr $\eta$ & -- & -- & $1.00\times10^{-1}$ & -- & --  \\
        X-DER-Future Constr & -- & -- & $1.00\times10^{0}$ & -- & -- \\
        X-DER-Part Constr & -- & -- & $0.00\times10^{0}$ & -- & -- \\
        \midrule
        L2P-R-Pull-Constr & -- & -- & -- & $1.00\times10^{-1}$ & --\\
        \midrule
        Inf-SSM-$\lambda$ & $1.50\times10^{5}$ & $1.00\times10^{3}$ & $5.00\times10^{2}$ & $5.00\times10^{4}$ & $1.00\times10^{4}$\\
        \bottomrule
    \end{tabular}
\end{table}

\begin{table}[ht]
    \centering
    \caption{Hyperparameters for ER, LUCIR, X-DER, L2P-R, and CLFD methods integration tests with Inf-SSM for CIFAR-100 5 tasks setting. Configuration includes both general training settings and method-specific parameters.}
    \label{tab:buf_hyperparams_methods_cifar_5tasks}
    \begin{tabular}{lccccc}
        \toprule
        \textbf{Hyperparameter} & \textbf{ER} & \textbf{LUCIR} & \textbf{X-DER} & \textbf{L2P-R}  & \textbf{CLFD} \\
        \midrule
        Batch Size & 128 & 64 & 32 & 32 & 64 \\
        Training Epochs & 40 & 40 & 20 & 20 & 20 \\
        Base LR & $5.00\times10^{-4}$ & $5.00\times10^{-4}$ & $5.00\times10^{-4}$ & $5.00\times10^{-4}$ & $1.00\times10^{-4}$  \\
        Warmup LR & $1.00\times10^{-4}$ & $1.00\times10^{-4}$ & $1.00\times10^{-4}$ & $1.00\times10^{-4}$ &$5.00\times10^{-5}$ \\
        Minimum LR & $1.00\times10^{-5}$ & $1.00\times10^{-5}$ & $1.00\times10^{-5}$  & $1.00\times10^{-5}$ & $1.00\times10^{-5}$ \\
        Task LR Scaling & $2.50\times10^{-1}$ & $2.50\times10^{-1}$ & $2.50\times10^{-1}$ & $2.50\times10^{-1}$ & $5.00\times10^{-1}$ \\
        Weight Decay & $1.00\times10^{-1}$ & $1.00\times10^{-1}$ & $1.00\times10^{-1}$ & $1.00\times10^{-1}$ & $1.00\times10^{-1}$ \\
        Buffer Size & $1.00\times10^{3}$ & $1.00\times10^{3}$ & $1.00\times10^{3}$ & $2.00\times10^{3}$ & $1.00\times10^{3}$\\
        \midrule
        LUCIR-$\lambda_{\text{base}}$ & -- & $5.00\times10^{-1}$ & --  & --  & -- \\
        LUCIR-$\lambda_{\text{MR}}$ & -- & $1.00\times10^{-1}$ & -- & --  & -- \\
        LUCIR-$K_{\text{MR}}$ & -- & $2.00\times10^{0}$ & -- & --  & -- \\
        LUCIR-MR Margin & -- & $5.00\times10^{-2}$ & -- & --  & -- \\
        \midrule
        X-DER-$\gamma$ & -- & -- & $8.50\times10^{-1}$ & --  & -- \\
        X-DER-Temp & -- & -- & $7.00\times10^{-2}$& --  & -- \\
        X-DER-Base Temp & -- & -- & $7.00\times10^{-2}$ & --  & --\\
        X-DER-$\alpha$ & -- & -- & $3.00\times10^{-1}$& --  & -- \\
        X-DER-$\beta$ & -- & -- & $1.80\times10^{0}$ & --  & --\\
        X-DER-SimCLR Batch Size & -- & -- & $3.20\times10^{1}$ & --  & --\\
        X-DER-SimCLR Num Augs & -- & -- & $2.00\times10^{0}$ & --  & --\\
        X-DER-$\lambda$ & -- & -- & $5.00\times10^{-2}$& --  & -- \\
        X-DER-dp Weight & -- & -- & $1.00\times10^{-1}$& --  & -- \\
        X-DER-Contr Margin & -- & -- & $3.00\times10^{-1}$ & --  & --\\
        X-DER-Constr $\eta$ & -- & -- & $1.00\times10^{-1}$ & --  & --\\
        X-DER-Future Constr & -- & -- & $1.00\times10^{0}$ & --  & --\\
        X-DER-Part Constr & -- & -- & $0.00\times10^{0}$ & --  & --\\
        \midrule
        L2P-R-Pull-Constr & -- & -- & -- & $1.00\times10^{-1}$ & --\\
        \midrule
        Inf-SSM-$\lambda$ & $1.00\times10^{3}$ & $1.00\times10^{3}$ & $5.00\times10^{2}$ & $1.00\times10^{4}$ & $1.00\times10^{4}$ \\
        \bottomrule
    \end{tabular}
\end{table}

\begin{table}[ht]
    \centering
    \caption{Hyperparameters for ER, LUCIR, X-DER, L2P-R, and CLFD methods integration tests with Inf-SSM for CIFAR-100 10 tasks setting. Configuration includes both general training settings and method-specific parameters.}
    \label{tab:buf_hyperparams_methods_cifar_10tasks}
    \begin{tabular}{lccccc}
        \toprule
        \textbf{Hyperparameter} & \textbf{ER} & \textbf{LUCIR} & \textbf{X-DER} & \textbf{L2P-R}  & \textbf{CLFD} \\ \\
        \midrule
        Batch Size & 128 & 64 & 32 & 32 & 64 \\
        Training Epochs & 40 & 40 & 20 & 40 & 20 \\
        Base LR & $5.00\times10^{-4}$ & $5.00\times10^{-4}$ & $5.00\times10^{-4}$ & $5.00\times10^{-4}$ & $1.00\times10^{-4}$ \\
        Warmup LR & $1.00\times10^{-4}$ & $1.00\times10^{-4}$ & $1.00\times10^{-4}$ & $1.00\times10^{-4}$ & $5.00\times10^{-5}$\\
        Minimum LR & $1.00\times10^{-5}$ & $1.00\times10^{-5}$ & $1.00\times10^{-5}$ & $1.00\times10^{-5}$ & $1.00\times10^{-5}$ \\
        Task LR Scaling & $2.50\times10^{-1}$ & $2.50\times10^{-1}$ & $2.50\times10^{-1}$ & $2.50\times10^{-1}$ & $5.00\times10^{-1}$\\
        Weight Decay & $1.00\times10^{-1}$ & $1.00\times10^{-1}$ & $1.00\times10^{-1}$ & $1.00\times10^{-1}$ & $1.00\times10^{-1}$ \\
        Buffer Size & $1.00\times10^{3}$ & $1.00\times10^{3}$ & $1.00\times10^{3}$ & $2.00\times10^{3}$ & $1.00\times10^{3}$\\
        \midrule
        LUCIR-$\lambda_{\text{base}}$ & -- & $5.00\times10^{-1}$ & -- & -- & -- \\
        LUCIR-$\lambda_{\text{MR}}$ & -- & $1.00\times10^{-1}$ & --& -- & -- \\
        LUCIR-$K_{\text{MR}}$ & -- & $2.00\times10^{0}$ & -- & -- & --\\
        LUCIR-MR Margin & -- & $5.00\times10^{-2}$ & -- & -- & --\\
        \midrule
        X-DER-$\gamma$ & -- & -- & $8.50\times10^{-1}$ & -- & -- \\
        X-DER-Temp & -- & -- & $7.00\times10^{-2}$ & -- & -- \\
        X-DER-Base Temp & -- & -- & $7.00\times10^{-2}$ & -- & -- \\
        X-DER-$\alpha$ & -- & -- & $3.00\times10^{-1}$ & -- & -- \\
        X-DER-$\beta$ & -- & -- & $1.80\times10^{0}$ & -- & -- \\
        X-DER-SimCLR Batch Size & -- & -- & $3.20\times10^{1}$ & -- & --\\
        X-DER-SimCLR Num Augs & -- & -- & $2.00\times10^{0}$ & -- & --\\
        X-DER-$\lambda$ & -- & -- & $5.00\times10^{-2}$ & -- & --\\
        X-DER-dp Weight & -- & -- & $1.00\times10^{-1}$ & -- & --\\
        X-DER-Contr Margin & -- & -- & $3.00\times10^{-1}$ & -- & --\\
        X-DER-Constr $\eta$ & -- & -- & $1.00\times10^{-1}$& -- & -- \\
        X-DER-Future Constr & -- & -- & $1.00\times10^{0}$& -- & -- \\
        X-DER-Part Constr & -- & -- & $0.00\times10^{0}$& -- & -- \\
        \midrule
        L2P-R-Pull-Constr & -- & -- & -- & $1.00\times10^{-1}$ & --\\
        \midrule
        Inf-SSM-$\lambda$ & $2.50\times10^{2}$ & $1.00\times10^{0}$ & $1.00\times10^{3}$ & $2.00\times10^{3}$ & $1.00\times10^{4}$ \\
        \bottomrule
    \end{tabular}
\end{table}

\begin{table}[ht]
    \centering
    \caption{Hyperparameters for ER, LUCIR, X-DER, L2P-R, and CLFD methods integration tests with Inf-SSM for Caltech-256 5 tasks setting. Configuration includes both general training settings and method-specific parameters.}
    \label{tab:buf_hyperparams_methods_caltech_5tasks}
    \begin{tabular}{lccccc}
        \toprule
        \textbf{Hyperparameter} & \textbf{ER} & \textbf{LUCIR} & \textbf{X-DER} & \textbf{L2P-R}  & \textbf{CLFD} \\
        \midrule
        Batch Size & 128 & 128 & 32 & 32 & 64\\
        Training Epochs & 40 & 40 & 20 & 40 & 20\\
        Base LR & $1.00\times10^{-4}$ & $1.00\times10^{-4}$ & $1.00\times10^{-4}$ & $1.00\times10^{-4}$ & $1.00\times10^{-4}$ \\
        Warmup LR & $1.00\times10^{-5}$ & $1.00\times10^{-5}$ & $1.00\times10^{-5}$ & $1.00\times10^{-5}$ & $1.00\times10^{-5}$ \\
        Minimum LR & $1.00\times10^{-5}$ & $1.00\times10^{-5}$ & $1.00\times10^{-5}$ & $1.00\times10^{-5}$ & $1.00\times10^{-5}$ \\
        Task LR Scaling & $5.00\times10^{-1}$ & $5.00\times10^{-1}$ & $5.00\times10^{-1}$ & $5.00\times10^{-1}$ & $5.00\times10^{-1}$ \\
        Weight Decay & $1.00\times10^{-1}$ & $1.00\times10^{-1}$ & $1.00\times10^{-1}$ & $1.00\times10^{-1}$ & $1.00\times10^{-1}$ \\
        Buffer Size & $1.00\times10^{3}$ & $1.00\times10^{3}$ & $1.00\times10^{3}$ & $1.00\times10^{3}$ & $1.00\times10^{3}$ \\
        \midrule
        LUCIR-$\lambda_{\text{base}}$ & -- & $5.00\times10^{-1}$ & -- & -- & -- \\
        LUCIR-$\lambda_{\text{MR}}$ & -- & $1.00\times10^{-1}$ & -- & -- & -- \\
        LUCIR-$K_{\text{MR}}$ & -- & $2.00\times10^{0}$ & -- & -- & -- \\
        LUCIR-MR Margin & -- & $5.00\times10^{-2}$ & -- & -- & -- \\
        \midrule
        X-DER-$\gamma$ & -- & -- & $8.50\times10^{-1}$ & -- & -- \\
        X-DER-Temp & -- & -- & $7.00\times10^{-2}$ & -- & --\\
        X-DER-Base Temp & -- & -- & $7.00\times10^{-2}$ & -- & -- \\
        X-DER-$\alpha$ & -- & -- & $3.00\times10^{-1}$ & -- & --\\
        X-DER-$\beta$ & -- & -- & $1.80\times10^{0}$  & -- & --\\
        X-DER-SimCLR Batch Size & -- & -- & $3.20\times10^{1}$  & -- & --\\
        X-DER-SimCLR Num Augs & -- & -- & $2.00\times10^{0}$ & -- & --\\
        X-DER-$\lambda$ & -- & -- & $5.00\times10^{-2}$ & -- & --\\
        X-DER-dp Weight & -- & -- & $1.00\times10^{-1}$ & -- & --\\
        X-DER-Contr Margin & -- & -- & $3.00\times10^{-1}$& -- & -- \\
        X-DER-Constr $\eta$ & -- & -- & $1.00\times10^{-1}$ & -- & --\\
        X-DER-Future Constr & -- & -- & $1.00\times10^{0}$ & -- & --\\
        X-DER-Part Constr & -- & -- & $0.00\times10^{0}$ & -- & --\\
        \midrule
        L2P-R-Pull-Constr & -- & -- & -- & $1.00\times10^{-1}$ & --\\
        \midrule
        Inf-SSM-$\lambda$ & $1.00\times10^{3}$ & $3.00\times10^{3}$ & $5.00\times10^{2}$ & $5.00\times10^{2}$ & $1.00\times10^{3}$ \\
        \bottomrule
    \end{tabular}
\end{table}

\begin{table}[ht]
    \centering
    \caption{Hyperparameters for ER, LUCIR, X-DER, L2P-R, and CLFD methods integration tests with Inf-SSM for Caltech-256 10 tasks setting. Configuration includes both general training settings and method-specific parameters.}
    \label{tab:buf_hyperparams_methods_caltech_10tasks}
    \begin{tabular}{lccccc}
        \toprule
        \textbf{Hyperparameter} & \textbf{ER} & \textbf{LUCIR} & \textbf{X-DER} & \textbf{L2P-R}  & \textbf{CLFD} \\ \\
        \midrule
        Batch Size & 128 & 128 & 32 & 32 & 64\\
        Training Epochs & 40 & 40 & 20 & 40 & 20 \\
        Base LR & $1.00\times10^{-4}$ & $1.00\times10^{-4}$ & $1.00\times10^{-4}$ & $1.00\times10^{-4}$ & $1.00\times10^{-4}$ \\
        Warmup LR & $1.00\times10^{-5}$ & $1.00\times10^{-5}$& $1.00\times10^{-5}$ & $1.00\times10^{-5}$ & $1.00\times10^{-5}$ \\
        Minimum LR & $1.00\times10^{-5}$ & $1.00\times10^{-5}$ & $1.00\times10^{-5}$ & $1.00\times10^{-5}$ & $1.00\times10^{-5}$ \\
        Task LR Scaling & $5.00\times10^{-1}$ & $5.00\times10^{-1}$ & $5.00\times10^{-1}$ & $5.00\times10^{-1}$ & $5.00\times10^{-1}$ \\
        Weight Decay & $1.00\times10^{-1}$ & $1.00\times10^{-1}$ & $1.00\times10^{-1}$ & $1.00\times10^{-1}$ & $1.00\times10^{-1}$  \\
        Buffer Size & $1.00\times10^{3}$  & $1.00\times10^{3}$ & $1.00\times10^{3}$ & $1.00\times10^{3}$  & $1.00\times10^{3}$  \\
        \midrule
        LUCIR-$\lambda_{\text{base}}$ & -- & $5.00\times10^{-1}$ & -- & -- & -- \\
        LUCIR-$\lambda_{\text{MR}}$ & -- & $1.00\times10^{-1}$ & -- & -- & -- \\
        LUCIR-$K_{\text{MR}}$ & -- & $2.00\times10^{0}$ & -- & -- & -- \\
        LUCIR-MR Margin & -- & $5.00\times10^{-2}$ & -- & -- & -- \\
        \midrule
        X-DER-$\gamma$ & -- & -- & $8.50\times10^{-1}$ & -- & -- \\
        X-DER-Temp & -- & -- & $7.00\times10^{-2}$ & -- & -- \\
        X-DER-Base Temp & -- & -- & $7.00\times10^{-2}$& -- & --  \\
        X-DER-$\alpha$ & -- & -- & $3.00\times10^{-1}$& -- & -- \\
        X-DER-$\beta$ & -- & -- & $1.80\times10^{0}$ & -- & -- \\
        X-DER-SimCLR Batch Size & -- & -- & $3.20\times10^{1}$ & -- & -- \\
        X-DER-SimCLR Num Augs & -- & -- & $2.00\times10^{0}$ & -- & -- \\
        X-DER-$\lambda$ & -- & -- & $5.00\times10^{-2}$ & -- & -- \\
        X-DER-dp Weight & -- & -- & $1.00\times10^{-1}$ & -- & -- \\
        X-DER-Contr Margin & -- & -- & $3.00\times10^{-1}$ & -- & -- \\
        X-DER-Constr $\eta$ & -- & -- & $1.00\times10^{-1}$ & -- & -- \\
        X-DER-Future Constr & -- & -- & $1.00\times10^{0}$ & -- & -- \\
        X-DER-Part Constr & -- & -- & $0.00\times10^{0}$  & -- & -- \\
        \midrule
        L2P-R-Pull-Constr & -- & -- & -- & $1.00\times10^{-1}$ & --\\
        \midrule
        Inf-SSM-$\lambda$ & $1.00\times10^{3}$ & $1.00\times10^{3}$ & $5.00\times10^{2}$ & $2.00\times10^{3}$ & $5.00\times10^{3}$  \\
        \bottomrule
    \end{tabular}
\end{table}

\subsection{L2P in SSM}
\label{app:l2p_ssm}
Adapting prompt-based methods in SSM is not straightforward. This is due to the recurrent nature of SSMs, leading to prompt tokens positioning being sensitive, unlike in attention layers. In our L2P-R implementation, we insert the prompt tokens by concatenating them at the start of the patch sequence. For classification, we have frozen the [CLS] token and instead utilize the prompt tokens for downstream classification by the final linear layer.

\end{document}